\let\svqty\qty
\let\qty\svqty
\crefname{equation}{}{}
\theoremstyle{plain}
\newtheorem{lemma}{Lemma}
\newtheorem{corollary}{Corollary}
\theoremstyle{definition}
\theoremstyle{remark}
\crefname{assumption}{assumption}{assumptions}
\crefname{equation}{}{}
\renewcommand{\paragraph}[1]{\textbf{#1}\hspace{1em}}
\newcommand{\ourmethod}{\textsc{ISE}\xspace}
\newcommand{\ourmethodlong}{Information-Theoretic Safe Exploration\xspace}
\newcommand{\safeopt}{\textsc{SafeOpt}\xspace}
\newcommand{\stageopt}{\textsc{StageOpt}\xspace}
\newcommand{\linebo}{\textsc{LineBO}\xspace}
\DeclareMathOperator*{\argmax}{arg\,max}
\newcommand{\cpsi}[0]{\mathbb{I}_{f(\bm x) \geq 0}}
\let\originalleft\left
\let\originalright\right
\renewcommand{\left}{\mathopen{}\mathclose\bgroup\originalleft}
\renewcommand{\right}{\aftergroup\egroup\originalright}
\title{Information-Theoretic Safe Exploration with\\ Gaussian Processes}
\author{%
  Alessandro G. Bottero$^{1,2}$, Carlos E. Luis$^{1,2}$, Julia Vinogradska$^1$, Felix Berkenkamp$^1$, Jan Peters$^2$\\
$^1$Bosch Center for Artificial Intelligence, Germany\\
$^2$Technische Universität Darmstadt, Germany \\
\texttt{AlessandroGiacomo.Bottero@de.bosch.com}
}
\begin{document}

\maketitle

\begin{abstract}
We consider a sequential decision making task where we are not allowed to evaluate parameters that violate an \textit{a~priori} unknown (safety) constraint. A common approach is to place a Gaussian process prior on the unknown constraint and allow evaluations only in regions that are safe with high probability. Most current methods rely on a discretization of the domain and cannot be directly extended to the continuous case. Moreover, the way in which they exploit regularity assumptions about the constraint introduces an additional critical hyperparameter. In this paper, we propose an information-theoretic safe exploration criterion that directly exploits the GP posterior to identify the most informative safe parameters to evaluate. Our approach is naturally applicable to continuous domains and does not require additional hyperparameters. We theoretically analyze the method and show that we do not violate the safety constraint with high probability and that we explore by learning about the constraint up to arbitrary precision. Empirical evaluations demonstrate improved data-efficiency and scalability.
\end{abstract}

\section{Introduction}
\label{section:introduction}

In sequential decision making problems, we iteratively select parameters in order to optimize a given performance criterion. However, real-world applications such as robotics \citep{berkenkamp_bayesian_2020}, mechanical systems \citep{combustion_engine_bo} or medicine \citep{sui_safe_2015} are often subject to additional safety constraints that we cannot violate during the exploration process \citep{dulac-arnold_challenges_nodate}. Since it is \textit{a priori} unknown which parameters lead to constraint violations, we need to actively and carefully learn about the constraints without violating them. That is, we need to learn about the safety of parameters by only evaluating parameters that are currently known to be safe.

Existing methods by \citet{Schreiter_safe_exp_2015,sui_safe_2015} tackle this problem by placing a Gaussian process (GP) prior over the constraint and only evaluate parameters that do not violate the constraint with high probability. To learn about the safety of parameters, they evaluate the parameter with the largest posterior variance. This process is made more efficient by \safeopt, which restricts its safe set expansion exploration component to parameters that are close to the boundary of the current set of safe parameters \citep{sui_safe_2015} at the cost of an additional tuning hyperparameter (Lipschitz constant). However, uncertainty about the constraint is only a proxy objective that only indirectly learns about the safety of parameters. Consequently, data-efficiency could be improved with an exploration criterion that directly maximizes the information gained about the safety of parameters.

\paragraph{Our contribution}
In this paper, we propose \ourmethodlong (\ourmethod), a safe exploration algorithm that \emph{directly} exploits the information gain about the safety of parameters in order to expand the region of the parameter space that we can classify as safe with high confidence. By directly optimizing for safe information gain, \ourmethod is more data-efficient than existing approaches without manually restricting evaluated parameters to be on the boundary of the safe set, particularly in scenarios where the posterior variance alone is not enough to identify good evaluation candidates, as in the case of heteroskedastic observation noise. This exploration criterion also means that we do not require additional hyperparameters beyond the GP posterior and that \ourmethod is directly applicable to continuous domains. We theoretically analyze our method and prove that it learns about the safety of reachable parameters to arbitrary precision.

\paragraph{Related work} Information-based selection criteria with Gaussian processes models are successfully used in the context of unconstrained Bayesian optimization (BO, \citet{shahriari_taking_2016,DBLP:journals/corr/abs-1204-5721}), where the goal is to find the parameters that maximize an \textit{a priori} unknown function.  \citet{HennigS2012,hernandez-lobato_predictive_2014,wang_max-value_2017} select parameters that provide the most information about the optimal parameters, while \citet{frohlich_noisy-input_2020} consider the information under noisy parameters. The success of these information-based approaches also relies on the superior data efficiency that they demonstrated. We draw inspiration from these methods when defining an information-based criterion w.r.t.\@ the safety of parameters to guide safe exploration.

In the presence of constraints that the final solution needs to satisfy, but which we can violate during exploration, \citet{constrained_bo} propose to combine typical BO acquisition functions with the probability of satisfying the constraint. Instead, \citet{lse_krause} propose an uncertainty-based criterion that learns about the feasible region of parameters. 
When we are not allowed to ever evaluate unsafe parameters, safe exploration is a necessary sub-routine of BO algorithms to learn about the safety of parameters. To safely explore, \citet{Schreiter_safe_exp_2015} globally learn about the constraint by evaluating the most uncertain parameters. \safeopt by \citet{sui_safe_2015} extends this to joint exploration and optimization and makes it more efficient by explicitly restricting safe exploration to the boundary of the safe set. \citet{sui2018stagewise} proposes \stageopt, which additionally separates the exploration and optimization phases. Both of these algorithms assume access to a Lipschitz constant to define parameters close to the boundary of the safe set, which is a difficult tuning parameter in practice. These methods have been extended to multiple constraints by \citet{berkenkamp_bayesian_2020}, while \citet{kirschner_adaptive_2019} scale them to higher dimensions with \linebo, which explores in low-dimensional sub-spaces. To improve computational costs, \citet{duivenvoorden_constrained_2017} suggest a continuous approximation to \safeopt without providing exploration guarantees. 
All of these methods rely on function uncertainty to drive exploration, while we directly maximize the information gained about the safety of parameters. 

Safe exploration also arises in the context of Markov decision processes (MDP), \citep{safe_mdp_ergodicity,safe_mdp_zoo}. In particular, \citet{turchetta_mdp_2016,turchetta_krause_2019} traverse the MDP to learn about the safety of parameters using methods that, at their core, explore using the same ideas as \safeopt and \stageopt to select parameters to evaluate. Consequently, our proposed method for safe exploration is also directly applicable to their setting.

\section{Problem Statement}\label{sec:problem_statement}

In this section, we introduce the problem and notation that we use throughout the paper.
We are given an unknown and expensive to evaluate safety constraint $f: \mathcal{X} \rightarrow \mathbb{R}$ s.t.\@ parameters that satisfy $f(\bm x) \geq 0$ are classified as safe, while others are unsafe. To start exploring safely, we also have access to an initial safe parameter $\bm x_0$ that satisfies the safety constraint, $f(\bm x_0) \geq 0$.
We sequentially select safe parameters $\bm x_n \in \mathcal{X}$ where to evaluate $f$ in order to learn about the safety of parameters beyond $\bm x_0$. At each iteration $n$, we obtain a noisy observation of $y_n \coloneqq f(\bm x_n) + \nu_n$ that is corrupted by additive homoscedastic Gaussian noise $\nu_n \sim \mathcal{N}\left(0, \sigma^2_\nu\right)$. We illustrate the task in \cref{fig:problem_statement_example}, where starting from $\bm x_0$ we aim to safely explore the domain so that we ultimately classify as safe all the safe parameters that are reachable from $\bm x_0$.

\begin{figure}[t]
\captionsetup[subfigure]{position=b}
\centering
\subcaptionbox{Problem components.\label{fig:problem_statement_example}}{\includegraphics[height=0.15\textheight]{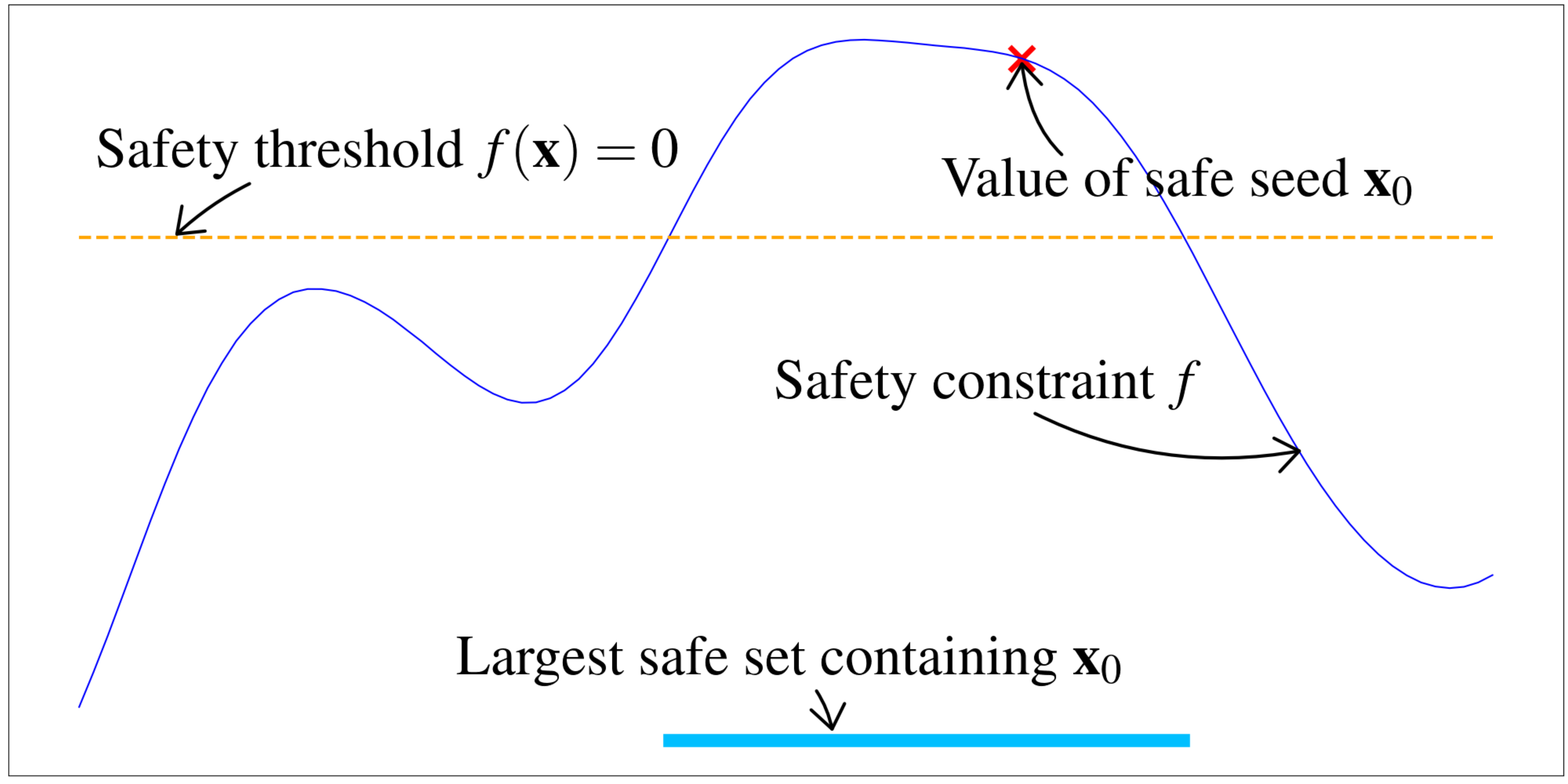}}
\hfill
\subcaptionbox{\ourmethod mutual information.\label{fig:mutual_information}}{\includegraphics[height=0.15\textheight]{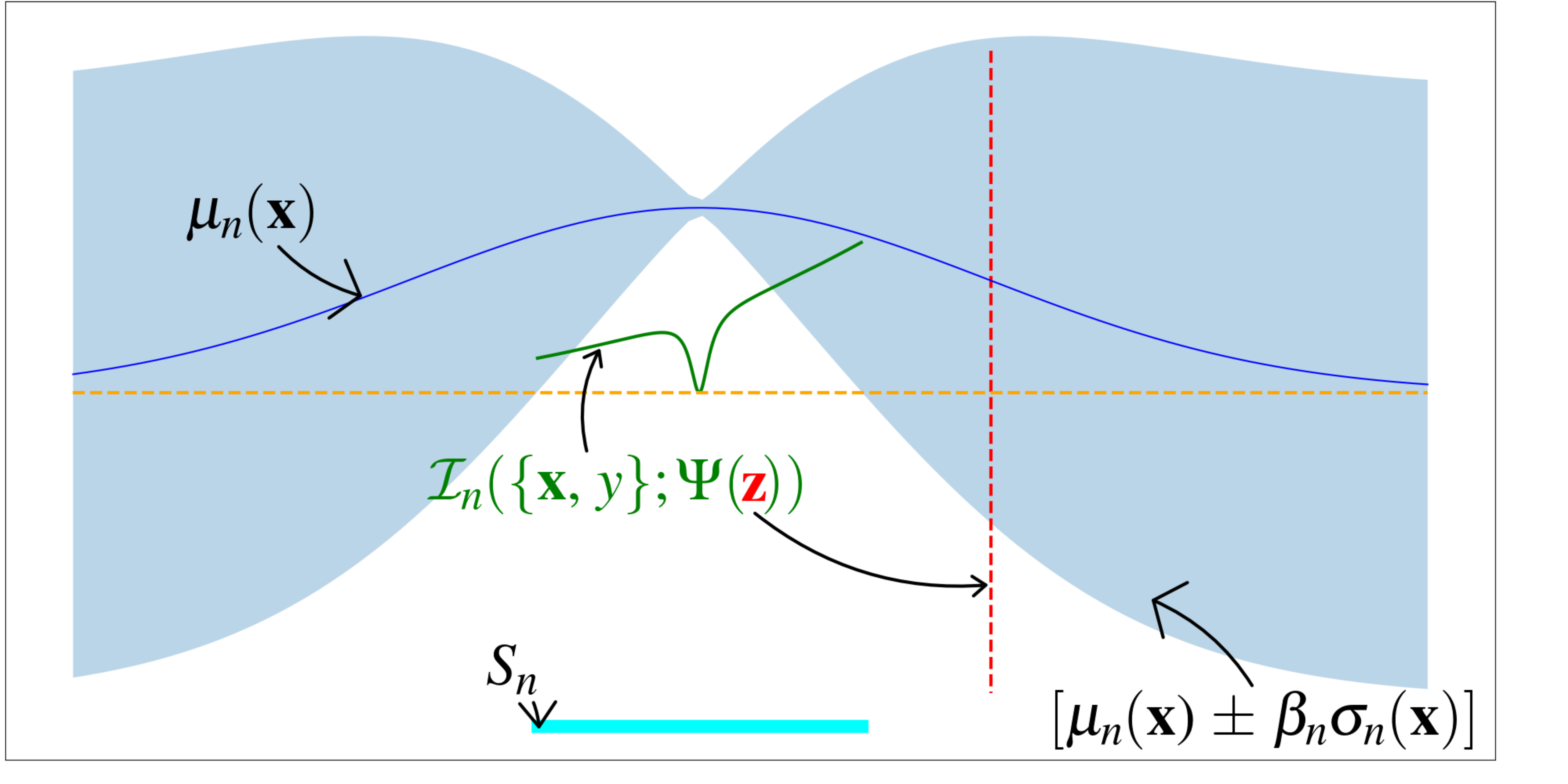}}
\caption{In (\subref{fig:problem_statement_example}) we illustrate the safe exploration task. Based on the unknown safety constraint $f$, we are only allowed to evaluate safe parameters $\bm x$ with values $f(\bm x) \geq 0$ above the safety threshold (dashed line). Starting from a safe seed $\bm x_0$ a safe exploration strategy needs to discover the largest reachable safe region of the parameter space containing $\bm x_0$. In (\subref{fig:mutual_information}) we show the mutual information $I_n(\{\bm x, y\}; \Psi(\bm z))$ in green for different $\bm x$ inside the safe set and for a fixed $\bm z$ outside (red dashed line). \ourmethod maximizes this mutual information jointly over $\bm x$ and $\bm z$. \label{fig:acquisition_representation}}
\hfill
\end{figure}

As $f$ is unknown and the evaluations $y_n$ are noisy, it is not feasible to select parameters that are safe with certainty and we provide high-probability safety guarantees instead. To this end, we assume that the safety constraint $f$ has bounded norm in the Reproducing Kernel Hilbert Space (RKHS) \citep{Scholkopf2002} $\mathcal{H}_k$ associated to some kernel $k: \mathcal{X} \times \mathcal{X} \rightarrow \mathbb{R}$ with $k(\bm x, \bm x') \leq 1$. This assumption allows us to to model $f$ as a Gaussian process (GP) \citep{srinivas_gaussian_2010}.

\paragraph{Gaussian Processes}
A GP is a stochastic process specified by a mean function $\mu: \mathcal{X}\rightarrow \mathbb{R}$ and a kernel $k$ \citep{rassmussen_gaussian_2006}. It defines a probability distribution over real-valued functions on $\mathcal{X}$, such that any finite collection of function values at parameters $[\bm x_1, \dots, \bm x_n]$ is distributed as a multivariate normal distribution. 
The GP prior can then be conditioned on (noisy) function evaluations $\mathcal{D}_n = \{(\bm x_i, y_i)\}_{i=1}^n$. If the noise is Gaussian, then the resulting posterior is also a GP and with posterior mean and variance
\begin{equation}
\begin{split}
\mu_n(\bm x) &= \mu(\bm x) + \bm k(\bm x)^\top (\bm K + \bm I \sigma_\nu^2)^{-1}\left(\bm y - \bm \mu \right), \\
\sigma_n^2(\bm x)
&= k(\bm x, \bm x) - \bm k(\bm x)^\top (\bm K + \bm I \sigma_\nu^2)^{-1} \bm k(\bm x),
\end{split}
\end{equation}
where $\bm \mu \coloneqq [\mu(\bm x_1), \dots \mu(\bm x_n)]$ is the mean vector at parameters $\bm x_i \in \mathcal{D}_n$ and $\left[\bm y\right]_i \coloneqq y(\bm x_i)$ the corresponding vector of observations. We have $\left[\bm k(\bm x)\right]_i \coloneqq k(\bm x, \bm x_i)$, the kernel matrix has entries $\left[\bm K\right]_{ij} \coloneqq k(\bm x_i, \bm x_j)$, and $\bm I$ is the identity matrix. In the following, we assume without loss of generality that the prior mean is identically zero: $\mu(\bm x) \equiv 0$.

\paragraph{Safe set}
Using the previous assumptions, we can construct high-probability confidence intervals on the function values $f(\bm x)$. Concretely, for any $\delta > 0$ it is possible to find a sequence of positive numbers $\{\beta_n\}$ such that $f(\bm x) \in \left[\mu_n(\bm x) \pm \beta_n\sigma_n(\bm x)\right]$ with probability at least $1 - \delta$, jointly for all $\bm x \in \mathcal{X}$ and $n \geq 1$. For a proof and more details see \citep{kernelized_bandits}. We use these confidence intervals to define a \textit{safe set}
\begin{equation}\label{eq:safe_set_definition}
S_n \coloneqq \{\bm x \in \mathcal{X} : \mu_n(\bm x) - \beta_n\sigma_n(\bm x) \geq 0\} \cup \{ \bm x_0 \} ,
\end{equation}
which contains all parameters whose $\beta_n$-lower confidence bound is above the safety threshold and the initial safe parameter $\bm x_0$. Consequently, we know that all parameters in $S_n$ are safe, $f(\bm x) \geq 0$ for all $\bm x \in S_n$, with probability at least $1 - \delta$ jointly over all iterations $n$.

\paragraph{Safe exploration}
Given the safe set $S_n$, the next question is which parameters in $S_n$ to evaluate in order to efficiently expand it. Most existing safe exploration methods rely on uncertainty sampling over subsets of $S_n$. \safeopt-like approaches, for example, use the Lipschitz assumption on $f$ to identify parameters in $S_n$ that could expand the safe set and then select the parameter that has the biggest uncertainty among those. In the next sections, we present and analyze our safe exploration strategy, \ourmethod, that instead uses an information gain measure to identify the parameters that allow us to efficiently learn about the safety of parameters outside of $S_n$. 

\section{\ourmethodlong}\label{algorithm}
We present \ourmethodlong (\ourmethod), which guides the safe exploration by using an information-theoretic criterion.
Our goal is to design an exploration strategy that directly exploits the properties of GPs to learn about the safety of parameters outside of $S_n$. We draw inspiration from \citet{HennigS2012,wang_max-value_2017} who exploit information-theoretic insights to design data-efficient BO acquisition functions for their respective optimization objectives.

\paragraph{Information gain measure}
In our case, we want to evaluate $f$ at safe parameters that are maximally informative about the safety of other parameters, in particular of those where we are uncertain about whether they are safe or not. To this end, we need a corresponding measure of information gain. We define such a measure using the binary variable $\Psi(\bm x) = \cpsi$, which is equal to one iff $f(\bm x) \geq 0$. Its entropy is given by
\begin{equation}\label{eq:exact_psi_entropy}
H_n\left[\Psi(\bm z)\right] = -p_n^-(\bm z) \ln(p_n^-(\bm z)) - \left(1 - p_n^-(\bm z)\right) \ln(1 - p_n^-(\bm z))
\end{equation}
where $p_n^-(\bm z)$ is the probability of $\bm z$ being unsafe: $p_n^-(\bm z) = \frac{1}{2} + \frac{1}{2}\erf \left(-\frac{1}{\sqrt{2}}\frac{\mu_n(\bm z)}{\sigma_n(\bm z)}\right)$. The random variable $\Psi(\bm z)$ has high-entropy when we are uncertain whether a parameter is safe or not; that is, its entropy decreases monotonically as $|\mu_n(\bm z)|$ increases and the GP posterior moves away from the safety threshold. It also decreases monotonically as $\sigma_n(\bm z)$ decreases and we become more certain about the constraint. This behavior also implies that the entropy goes to zero as the confidence about the safety of $\bm z$ increases, as desired.

\begin{algorithm}[tb]
   \caption{\ourmethodlong}
   \label{algorithm:our_algorithm}
\begin{algorithmic}[1]
   \STATE {\bfseries Input:} GP prior ($\mu_0$, $k$, $\sigma_\nu$), Safe seed $\bm x_0$
   \FOR{$n=0$, \dots, $N$}
   \STATE $x_{n+1}$ $\leftarrow$ $\argmax_{\bm x \in S_n} \, \max_{\bm z \in \mathcal{X}}\hat{I}_n\left(\{\bm x, y\}; \Psi(\bm z)\right) $
   \STATE $y_{n+1}$ $\leftarrow$ $f(\bm x_{n+1}) + \nu$
   \STATE Update GP posterior with $(\bm x_{n+1}, y_{n+1})$ 
   \ENDFOR
\end{algorithmic}
\end{algorithm}

Given our definition of $\Psi$, we consider the mutual information $I\left(\{\bm x, y\}; \Psi(\bm z)\right)$ between an observation $y$ at a parameter $\bm x$ and the value of $\Psi$ at another parameter $\bm z$. Since $\Psi$ is the indicator function of the safe regions of the parameter space, the quantity $I_n\left(\{\bm x, y\}; \Psi(\bm z)\right)$ measures how much information about the safety of $\bm z$ we gain by evaluating the safety constraint $f$ at $\bm x$ at iteration $n$, averaged over all possible observed values $y$. This interpretation follows directly from the definition of mutual information: $I_n\left(\{\bm x, y\}; \Psi(\bm z)\right) = H_n\left[\Psi(\bm z)\right] - \mathbb{E}_{y}\left[H_{n+1}\left[\Psi(\bm z) \middle| \{\bm x, y\}\right]\right]$, where $H_n[\Psi(\bm z)]$ is the entropy of $\Psi(\bm z)$ according to the GP posterior at iteration $n$, while $H_{n+1}\left[\Psi(\bm z) \middle| \{\bm x, y\}\right]$ is its entropy at iteration $n+1$, conditioned on a measurement $y$ at $\bm x$ at iteration $n$. Intuitively, $I_n\left(\{\bm x, y\}; \Psi(\bm z)\right)$ is negligible whenever the confidence about the safety of $\bm z$ is high or, more generally, whenever an evaluation at $\bm x$ does not have the potential to substantially change our belief about the safety of $\bm z$. The mutual information is large whenever an evaluation at $\bm x$ on average causes the confidence about the safety of $\bm z$ to increase significantly. As an example, in \cref{fig:acquisition_representation} we plot $I_n\left(\{\bm x, y\}; \Psi(\bm z)\right)$ as a function of $\bm x \in S_n$ for a specific choice of $\bm z$ and for an RBF kernel. As one would expect, we see that the closer it gets to $\bm z$, the bigger the mutual information becomes, and that it vanishes in the neighborhood of previously evaluated parameters, where the posterior variance is negligible.

To compute $I_n\left(\{\bm x, y\}; \Psi(\bm z)\right)$, we need to average \cref{eq:exact_psi_entropy} conditioned on an evaluation $y$ over all possible values of $y$. However, the resulting integral is intractable given the expression of $H_n[\Psi(\bm z)]$ in \cref{eq:exact_psi_entropy}. In order to get a tractable result, we derive a close approximation of \cref{eq:exact_psi_entropy},
\begin{equation}\label{eq:approximated_psi_entropy}
H_n\left[\Psi(\bm z)\right] \approx \hat{H}_n\left[\Psi(\bm z)\right] \doteq \ln(2) \exp\left\{-\frac{1}{\pi\ln(2)}\left(\frac{\mu_n(\bm z)}{\sigma_n(\bm z)}\right)^2\right\}.
\end{equation}
The approximation in \cref{eq:approximated_psi_entropy} is obtained by truncating the Taylor expansion of $H_n[\Psi(\bm z)]$ at the second order, and it recovers almost exactly its true behavior (see \cref{appendix_entropy_approx} for details).
Since the posterior mean at $\bm z$ after an evaluation at $\bm x$ depends linearly on $\mu_n(\bm x)$, and since the probability density of $y$ depends exponentially on $-\mu_n^2(\bm x)$, using \cref{eq:approximated_psi_entropy} reduces the conditional entropy $\mathbb{E}_{y}\left[\hat{H}_{n+1}\left[\Psi(\bm z) \middle| \{\bm x, y\}\right]\right]$ to a Gaussian integral with the exact solution
\begin{equation}\label{eq:averaged_post_measurement_entropy}
\begin{split}
\mathbb{E}_{y}&\left[\hat{H}_{n+1}\left[\Psi(\bm z) \middle| \{\bm x, y\}\right]\right] =\\
&\ln(2)\sqrt{\frac{\sigma_\nu^2 + \sigma_n^2(\bm x)(1 - \rho_n^2(\bm x, \bm z))}{\sigma_\nu^2 + \sigma_n^2(\bm x)(1 + c_2\rho_n^2(\bm x, \bm z))}}\exp\left\{-c_1\frac{\mu_n^2(\bm z)}{\sigma_n^2(\bm z)}\frac{\sigma_\nu^2 + \sigma_n^2(\bm x)}{\sigma_\nu^2 + \sigma_n^2(\bm x)(1 + c_2\rho_n^2(\bm x, \bm z))}\right\},
\end{split}
\end{equation}
where $\rho_n(\bm x, \bm z)$ is the linear correlation coefficient between $f(\bm x)$ and $f(\bm z)$, and with $c_1$ and $c_2$ given by $c_1 \coloneqq 1/\ln(2)\pi$ and $c_2 \coloneqq 2c_1 - 1$. This result allows us to analytically calculate the approximated mutual information $\hat{I}_n\left(\{\bm x, y\}; \Psi(\bm z)\right) \doteq \hat{H}_n\left[\Psi(\bm z)\right] - \mathbb{E}_{y}\left[\hat{H}_{n+1}\left[\Psi(\bm z) \middle| \{\bm x, y\}\right]\right]$, which we use to define the \ourmethod acquisition function, and which we analyze theoretically in \cref{sec:theory}.

\paragraph{\ourmethod acquisition function}
Now that we have defined a way to measure and compute the information gain about the safety of parameters, we can use it to design an exploration strategy that selects the next parameters to evaluate. The natural choice for such selection criterion is to select the parameter that maximizes the information gain; that is, we select $\bm x_{n+1}$ according to

\begin{equation}\label{eq:x_n_+_1}
\bm x_{n+1} \in \argmax_{\bm x \in S_n} \, \max_{\bm z \in \mathcal{X}}\hat{I}_n\left(\{\bm x, y\}; \Psi(\bm z)\right) ,
\end{equation}
where we jointly optimize over $\bm x$ in the safe set $S_n$ and an unconstrained second parameter $\bm z$.
Evaluating $f$ at $\bm x_{n+1}$ according to \cref{eq:x_n_+_1} maximizes the information gained about the safety of some parameter $\bm z \in \mathcal{X}$, so that it allows us to efficiently learn about parameters that are not yet known to be safe. While $\bm z$ can lie in the whole domain, the parameters where we are the most uncertain about the safety constraint lie outside the safe set. By leaving $\bm z$ unconstrained, we show in our theoretical analysis in \cref{sec:theory} that, once we have learned about the safety of parameters outside the safe set, \cref{eq:x_n_+_1} resorts to learning about the constraint function also inside $S_n$. An overview of \ourmethod can be found in \cref{algorithm:our_algorithm} and we show an example run of a one-dimensional illustration of the algorithm in \cref{fig:algorithm_example}.

\begin{figure*}[!t!]
\hfill
     \centering
     \begin{subfigure}[b]{0.32\textwidth}
         \centering
         \includegraphics[width=\textwidth]{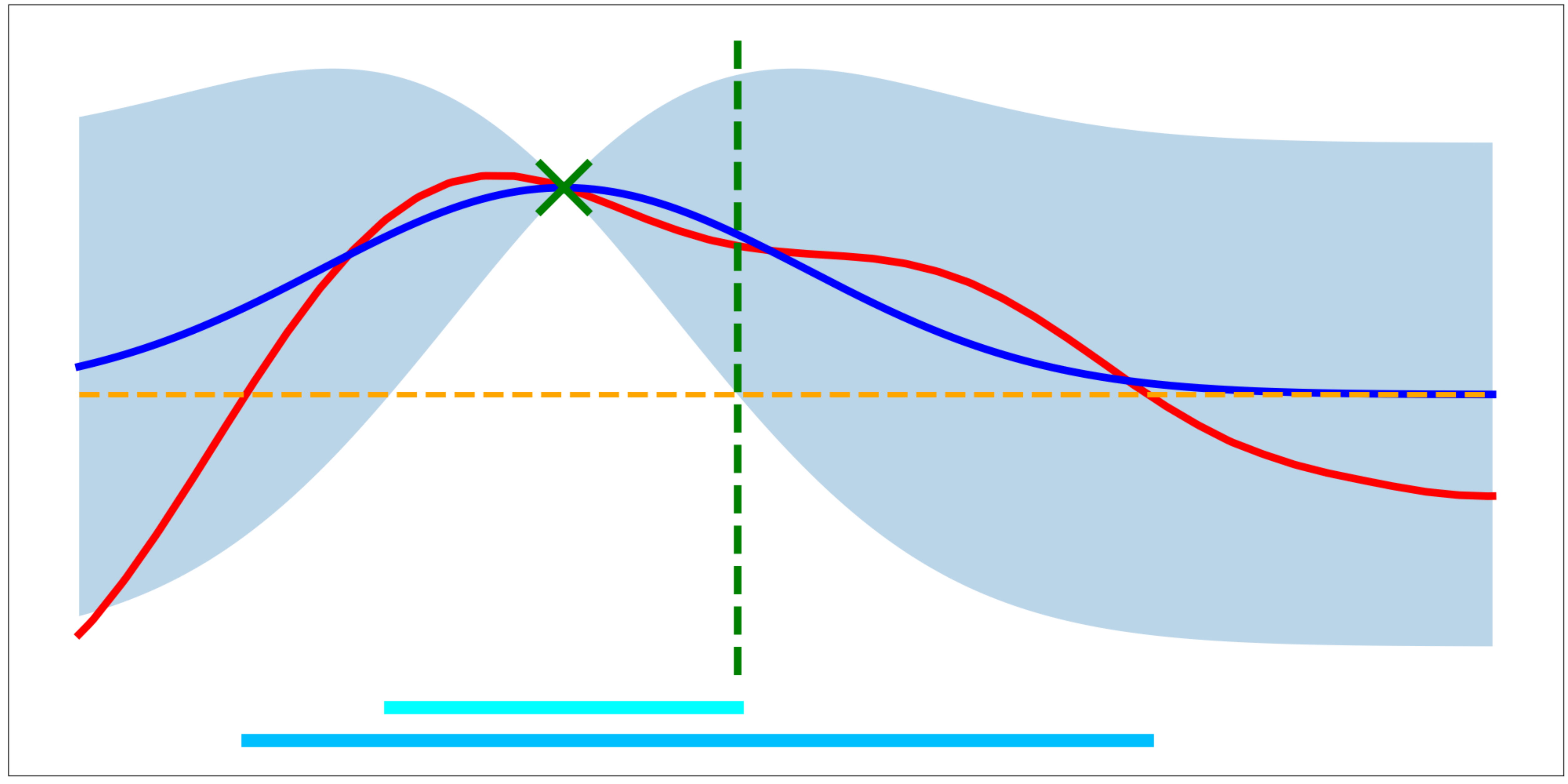}
         \caption{Iteration 1}
         \label{fig:algo_step_1}
     \end{subfigure}
     \hfill
     \begin{subfigure}[b]{0.32\textwidth}
         \centering
         \includegraphics[width=\textwidth]{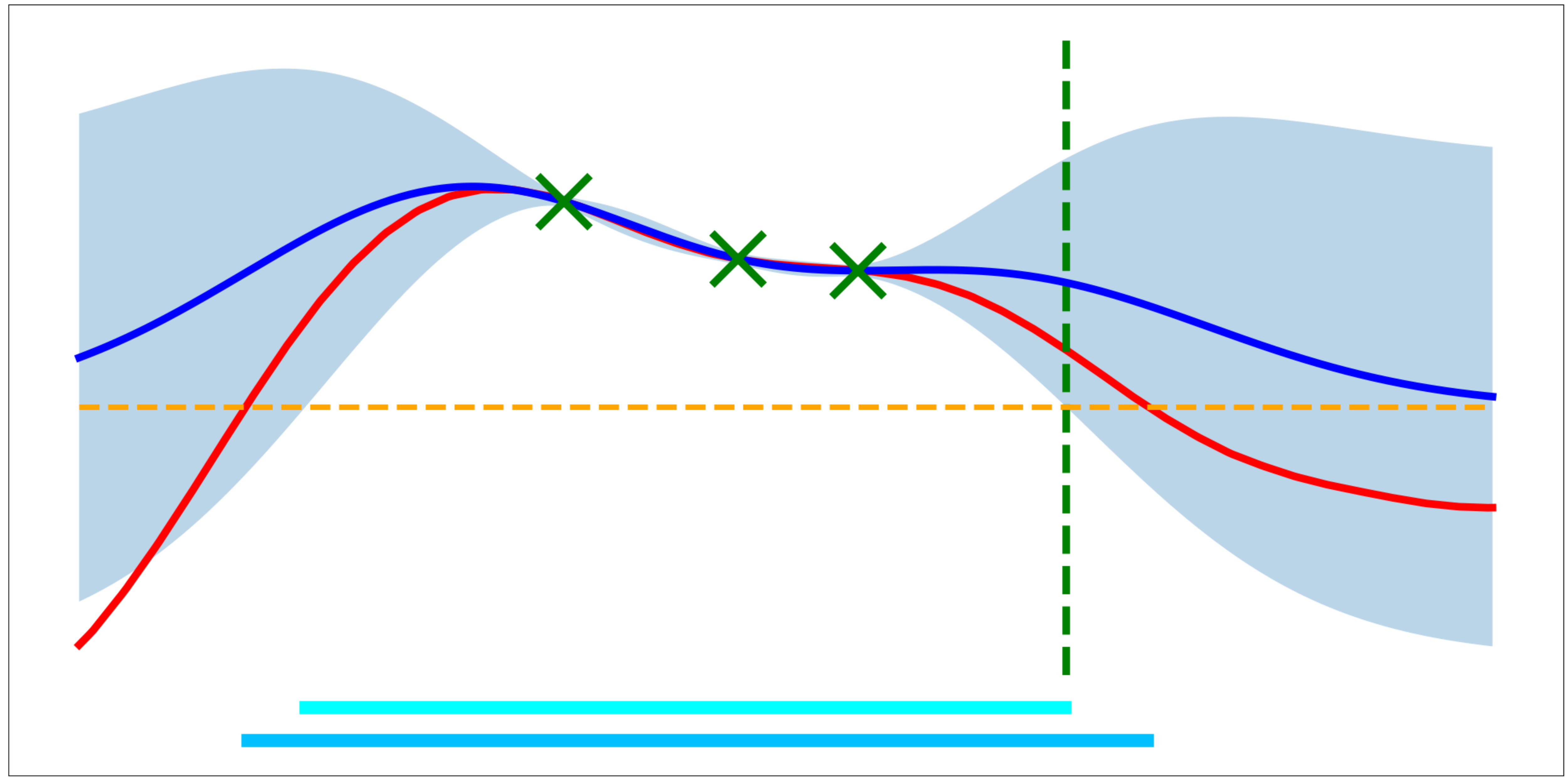}
         \caption{Iteration 3}
         \label{fig:algo_step_3}
     \end{subfigure}
     \hfill
     \begin{subfigure}[b]{0.32\textwidth}
         \centering
         \includegraphics[width=\textwidth]{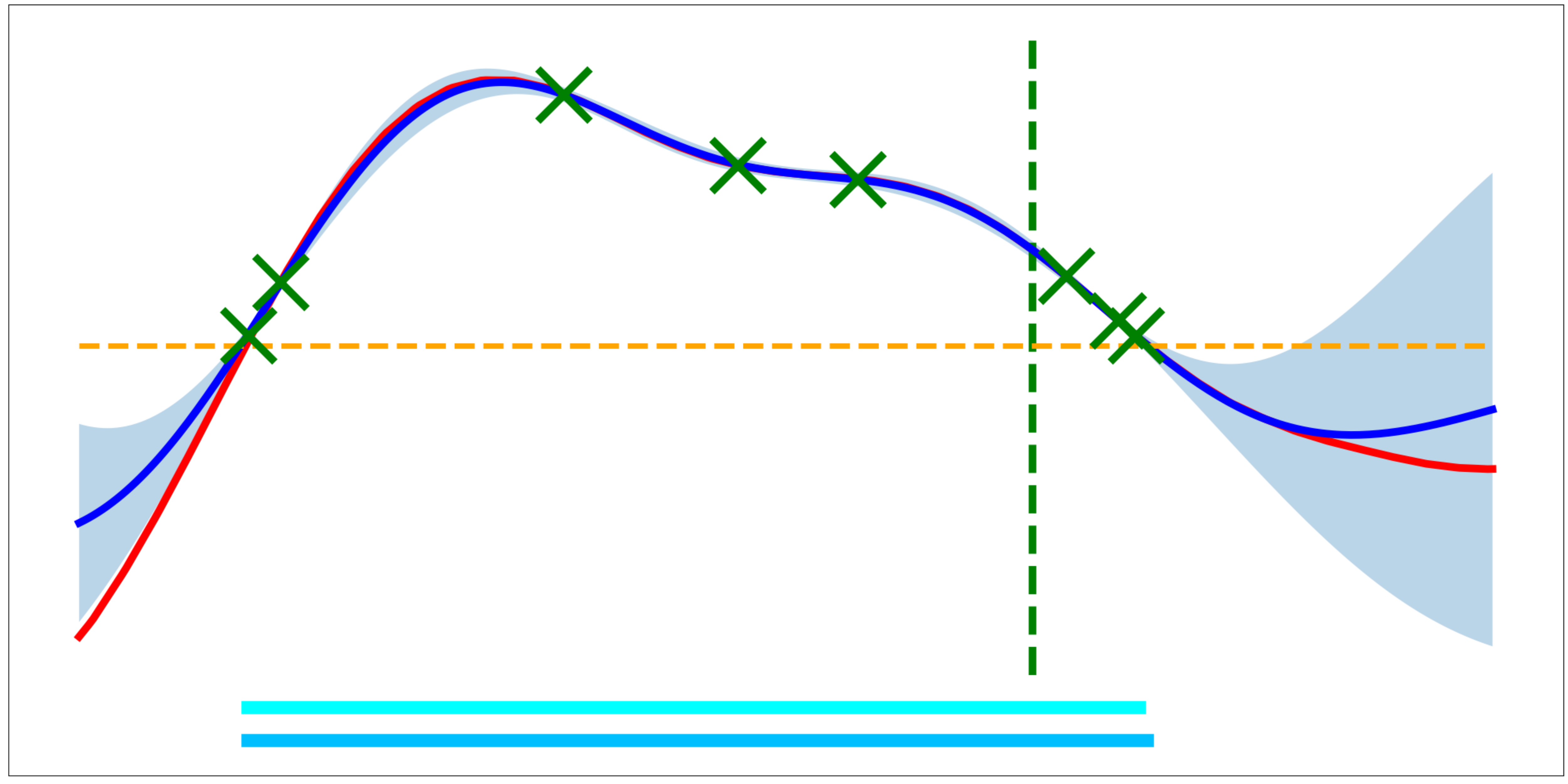}
         \caption{Iteration 8}
         \label{fig:algo_step_9}
     \end{subfigure}
        \caption{Example run of \ourmethod at different iterations. The GP's posterior mean (blue line) and confidence interval $\mu_n \pm \beta_n\sigma_n$ (blue shaded) approximate the true safety constraint $f$ (red line) based on selected data points (green crosses) and, together with the safety threshold at zero (orange line), identify the current safe set $S_n$ (light blue bar, bottom). The vertical green line indicates the location of the next parameter $\bm x_{n+1}$ selected by \cref{eq:x_n_+_1}. Initially, \ourmethod evaluates parameters on the boundary of the safe set, but eventually also evaluates inside the safe set if that provides the most information. \ourmethod quickly discover the largest reachable safe set (dark blue bar, bottom).}
        \label{fig:algorithm_example}
        \hfill
\end{figure*}

\FloatBarrier
\section{Theoretical Results}\label{sec:theory}

In this section, we study the expression for $\hat{I}_n\left(\{\bm x, y\}; \Psi(\bm z)\right)$ obtained using \cref{eq:approximated_psi_entropy,eq:averaged_post_measurement_entropy} and analyze the properties of the \ourmethod exploration criterion \cref{eq:x_n_+_1}. By construction of $S_n$ in \cref{eq:safe_set_definition} and the assumptions on $f$ in \cref{sec:problem_statement}, we know that any parameter selected according to \cref{eq:x_n_+_1} is safe with high probability,  see \cref{appendix_proofs} for details. To show that we also learn about the safe set, we first need to define what it means to successfully explore starting from $\bm x_0$. The main challenge is that it is difficult to analyze how a GP generalizes based on noisy observations, so that it is difficult to define a notion of convergence that is not dependent on the specific run. \safeopt avoids this issue by expanding the safe set not based on the GP, but only using the Lipschitz constant $L$. Contrary to their approach, we depend on the GP to generalize from the safe set. In this case, the natural notion of convergence is provided by the the posterior variance. In particular, we say that at iteration $n$ we have explored the safe set up to $\varepsilon$-accuracy if $\sigma_n^2(\bm x) \leq \varepsilon$ for all parameters $\bm x$ in $S_n$. In the following, we show that \ourmethod asymptotically leads either to $\varepsilon$-accurate exploration of the safe set or to indefinite expansion of the safe set. In future work it will be interesting to further investigate the notion of generalization and to derive a similar convergence result as those obtained by \cite{sui_safe_2015}.

\begin{restatable}{theorem}{convergenceboundafterexpansion}
\label{lemma:convergence_bound_after_expansion}
Assume that $\bm{x}_{n+1}$ is chosen according to \cref{eq:x_n_+_1}, and that there exists $\hat{n}$ such that for all $n \geq \hat{n}$ $S_{n+1} \subseteq S_n$. Moreover, assume that for all $n \geq \hat{n}$, $|\mu_n(\bm x)| \leq M$ for some $M > 0$ for all $\bm x \in S_n$. Then, for all $\varepsilon > 0$ there exists $N_\varepsilon$ such that $\sigma_n^2(\bm x) \leq \varepsilon$ for every $\bm x \in S_n$ if $n \geq \hat{n} + N_\varepsilon$. The smallest of such $N_\varepsilon$ is given by
\begin{equation}\label{eq:N_epsilon}
N_\varepsilon = \min\left\{N \in \mathbb{N} : b^{-1}\left(\frac{C\gamma_N}{N}\right) \leq \varepsilon\right\},
\end{equation}
where $b(\eta) \coloneqq\ln(2)\exp\left\{-c_1\frac{M^2}{\eta}\right\}\left[1 - \sqrt{\frac{\sigma_\nu^2}{2c_1\eta + \sigma_\nu^2}}\right]$, $\gamma_N = \max_{D \subset \mathcal{X}; |D| = N}I\left(\bm f(D); \bm y(D)\right)$ is the maximum information capacity of the chosen kernel \citep{srinivas_gaussian_2010,gp_opt_with_mi}, and $C = \ln(2) / \sigma_\nu^2\ln(1 + \sigma_\nu^{-2})$.
\end{restatable}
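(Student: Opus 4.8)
The plan is to turn the greedy rule \cref{eq:x_n_+_1} into a quantitative per-step lower bound on the information it acquires, and then run the classical maximum-information-capacity argument of \citet{srinivas_gaussian_2010} to limit how long the posterior variance over the (now frozen) safe set can stay large. Throughout, fix $n\ge\hat n$. Since $S_{n+1}\subseteq S_n$ and GP posterior variance is monotone in the data, $M_n\coloneqq\max_{\bm x\in S_n}\sigma_n^2(\bm x)$ is non-increasing for $n\ge\hat n$, every parameter evaluated after $\hat n$ lies in $S_{\hat n}$ so the hypothesis $|\mu_n(\bm x)|\le M$ applies to every evaluated point, and $\sigma_n^2(\bm x)\le k(\bm x,\bm x)\le 1$.

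\textbf{Lower bound on the acquisition.} Restrict the inner maximization in \cref{eq:x_n_+_1} to $\bm z=\bm x$, so that $\rho_n(\bm x,\bm x)=1$. Substituting $\rho^2=1$ and $1+c_2=2c_1$ into \cref{eq:averaged_post_measurement_entropy} and cancelling the common factor $\exp\{-c_1\mu_n^2(\bm x)/\sigma_n^2(\bm x)\}$ against $\hat H_n[\Psi(\bm x)]$ leaves $\hat I_n(\{\bm x,y\};\Psi(\bm x))$ equal to $\ln(2)\exp\{-c_1\mu_n^2(\bm x)/\sigma_n^2(\bm x)\}$ times a bracket of the form $1-\sqrt{\sigma_\nu^2/(\sigma_\nu^2+2c_1\sigma_n^2(\bm x))}\,e^{-t}$ with $t\ge0$ (here $t$ is a positive multiple of $1-2c_1>0$). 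Dropping $e^{-t}\le 1$ to bound the bracket from below, and using $|\mu_n(\bm x)|\le M$ to bound the prefactor from below, yields exactly $\hat I_n(\{\bm x,y\};\Psi(\bm x))\ge b(\sigma_n^2(\bm x))$. Since $b$ is a product of two positive increasing functions of $\eta$, hence strictly increasing, and $\bm x_{n+1}$ is a joint maximizer over $S_n\times\mathcal X$, the acquisition value attained at step $n$ is at least $\max_{\bm x\in S_n}b(\sigma_n^2(\bm x))=b(M_n)$.

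\textbf{Upper bound and conclusion.} Next I show $\hat I_n(\{\bm x,y\};\Psi(\bm z))\le\frac{\ln(2)}{2\sigma_\nu^2}\sigma_n^2(\bm x)$ uniformly in $\bm z$, by checking from \cref{eq:averaged_post_measurement_entropy} that, as a function of $\sigma_n^2(\bm x)$, the acquisition is non-negative, vanishes at $0$, and stays below its tangent $\frac{\ln(2)}{2\sigma_\nu^2}\sigma_n^2(\bm x)$ at the origin on $[0,1]$, uniformly over $\mu_n(\bm z)/\sigma_n(\bm z)$ and $\rho_n(\bm x,\bm z)$ — the origin slope works out to at most $\ln(2)c_1/\sigma_\nu^2=1/(\pi\sigma_\nu^2)<\ln(2)/(2\sigma_\nu^2)$. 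Combining with the elementary inequality $s\le\ln(1+\sigma_\nu^{-2}s)/\ln(1+\sigma_\nu^{-2})$ for $s\in[0,1]$ and the additivity of the Gaussian information gain, $\sum_{j=0}^{k-1}\tfrac12\ln(1+\sigma_\nu^{-2}\sigma_{\hat n+j}^2(\bm x_{\hat n+j+1}))\le\gamma_k$, the information acquired over any $k$ consecutive steps after $\hat n$ is at most $\frac{\ln(2)}{\sigma_\nu^2\ln(1+\sigma_\nu^{-2})}\gamma_k=C\gamma_k$. Summing the per-step lower bound over $j=0,\dots,k-1$ and using monotonicity of $M_n$ and of $b$, $k\,b(M_{\hat n+k-1})\le\sum_{j=0}^{k-1}b(M_{\hat n+j})\le C\gamma_k$, so $M_{\hat n+k-1}\le b^{-1}(C\gamma_k/k)$ whenever $C\gamma_k/k$ lies in the range $(0,\ln 2)$ of $b$ — which holds for large $k$ since $\gamma_k=o(k)$ for the usual kernels, and is exactly what the $\min$ in \cref{eq:N_epsilon} requires. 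Taking $k=N_\varepsilon$ gives $M_{\hat n+N_\varepsilon-1}\le\varepsilon$, and the monotonicity of $M_n$ propagates it: $\sigma_n^2(\bm x)\le M_n\le\varepsilon$ for all $\bm x\in S_n$ and all $n\ge\hat n+N_\varepsilon$.

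\textbf{Main obstacle.} The delicate step is the uniform upper bound $\hat I_n(\{\bm x,y\};\Psi(\bm z))\le\frac{\ln(2)}{2\sigma_\nu^2}\sigma_n^2(\bm x)$. The clean route $\hat I_n\le I_n(\{\bm x,y\};f(\bm z))\le\tfrac12\ln(1+\sigma_\nu^{-2}\sigma_n^2(\bm x))$ via the data-processing inequality is not directly available, because $\hat I_n$ is built from the truncated-Taylor surrogate $\hat H_n$ rather than the exact binary entropy $H_n$; one has to argue on the explicit two-exponential expression \cref{eq:averaged_post_measurement_entropy} itself (essentially a concavity/monotone-slope estimate in $\sigma_n^2(\bm x)$, uniform in the two remaining parameters), or else first control $\hat H_n$ against $H_n$ with enough slack to recover the same constant $C$. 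Everything else is bookkeeping together with the by-now-standard variance-versus-information-capacity estimate, and the high-probability safety of the selected parameters is already guaranteed by the construction of $S_n$.
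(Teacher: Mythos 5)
Your proposal is correct and follows essentially the same route as the paper's own proof: lower-bound the optimal acquisition value by $b\!\left(\max_{\bm x\in S_n}\sigma_n^2(\bm x)\right)$ via the diagonal choice $\bm z=\bm x$ (the paper's \cref{lemma:delta_h_is_bounded_below}), upper-bound it by a constant times $\sigma_n^2(\bm x_{n+1})/\sigma_\nu^2$ (the paper's \cref{lemma:delta_h_is_bounded_above}), sum over iterations and invoke the information-capacity bound of \citet{srinivas_gaussian_2010}, then invert the monotone $b$. The ``delicate'' uniform upper bound you flag is handled in the paper without any data-processing argument, by the elementary chain $\hat I_n\le\ln(2)\left[1-\sqrt{(1-\tilde\rho^2)/(1+c_2\tilde\rho^2)}\right]\le\ln(2)\,\tilde\rho^2\le\ln(2)\,\sigma_n^2(\bm x)/\sigma_\nu^2$ with $\tilde\rho^2=\rho_\nu^2(\bm x)\rho_n^2(\bm x,\bm z)$, and your sharper tangent bound $\ln(2)\,\sigma_n^2(\bm x)/(2\sigma_\nu^2)$ (via monotonicity in $\mu_n^2(\bm z)/\sigma_n^2(\bm z)$ and $\tilde\rho^2$ plus concavity of the resulting envelope in $\sigma_n^2(\bm x)$) is also valid — indeed it is what lets you retain the factor $\tfrac12$ in the Gaussian information-gain identity and still land exactly on the stated constant $C$, a factor of two that the paper's own chain treats somewhat cavalierly.
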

\begin{proof}
See \cref{appendix_proofs}.
\end{proof}

\cref{lemma:convergence_bound_after_expansion} tells us that if at some point the set of safe parameters $S_n$ stops expanding, then the posterior variance over the safe set vanishes eventually. The intuition behind \cref{lemma:convergence_bound_after_expansion} is that if there were a parameter $\bm x$ in the safe set whose posterior mean remained finite and whose posterior variance remained bounded from below, then an evaluation of $f$ at such $\bm x$ would yield a non negligible average information gain about the safety of $\bm x$, so that, since $\bm x$ is in the safe set, at some point \ourmethod will be forced to choose to evaluate $\bm x$, reducing its posterior variance. This result guarantees that, should the safe set stop expanding, \ourmethod will asymptotically explore the safe set up to an arbitrary $\varepsilon$-accuracy. In practice, we observe that \ourmethod first focuses on reducing the uncertainty in areas of the safe set that are most informative about parameters whose classification is still uncertain (e.g.\@ the boundary of the safe set), and only eventually turns to learning about the inside of the safe set. This behavior is what ultimately leads to the posterior variance to decay over the whole $S_n$. Therefore, even if in general it is not always possible to say whether or not the safe set will ever stop expanding, we can read \cref{lemma:convergence_bound_after_expansion} as an exploration guarantee for \ourmethod, as it rules out the possibility that the proposed acquisition function forever leaves the uncertainty high in areas of the safe set that, if better understood, could lead to an expansion of the safe set.

\cref{lemma:convergence_bound_after_expansion} requires a bound on the GP posterior mean function, which is always satisfied with high probability based on our assumptions about $f$. Specifically, we have that $|\mu_n(\bm x)| \leq 2\beta_n$ with probability of at least $1 - \delta$ for all $n$ (see \cref{appendix_proofs} for details). Therefore, it does not represent an additional restrictive assumption for $f$. Finally, we also note that the the constant $N_\varepsilon$ defined by \cref{eq:N_epsilon} always exists since the function $b$ is monotonically increasing, as long as $\gamma_N$ grows sublinearly in $N$. \citet{srinivas_gaussian_2010} prove that this is the case for commonly-used kernel and, more generally, it is a prerequisite for data-efficient learning with GP models.

\section{Discussion and Limitations}\label{sec:discussion}

\ourmethod drives exploration of the parameter space by selecting the parameters to evaluate according to \cref{eq:x_n_+_1}. An alternative but conceptually similar approach to this criterion would be to consider the parameter that yields the biggest information gain \textit{on average} over the domain, i.e., substituting the inner $\max$ in \cref{eq:x_n_+_1} with an average over $\mathcal{X}$. The resulting integral, however, is intractable and would require further approximations. Moreover, the parameter found by solving \cref{eq:x_n_+_1} will also yield a high average information gain over the domain, due to the regularity of all involved objects.

Being able to work in a continuous domain, \ourmethod can deal with higher dimensional domains better than algorithms requiring a discrete parameter space. However, as noted in \cref{sec:theory}, finding $\bm x_{n+1}$ as in \cref{eq:x_n_+_1} means to solve a non-convex optimization problem with twice the dimension of the the parameter space, which can also become a computationally challenging problem as the dimension grows. In a high-dimensional setting, we follow \linebo by \citet{kirschner_adaptive_2019}, which at each iteration selects a random one-dimensional subspace to which it restricts the optimization of the acquisition function.

In \cref{algorithm,sec:problem_statement}, we assumed the observation process to be homoskedastic. However, it needs not to be the case, and the results can be extended to the case of heteroskedastic Gaussian noise. The observation noise at a parameter $\bm x$ explicitly appears in the \ourmethod acquisition function, since it crucially affects the amount of information that we can gain by evaluating the constraint $f$ at $\bm x$. On the contrary, \stageopt-like methods do not consider the observation noise in their acquisition functions. As a consequence, \ourmethod can perform significantly better in an heteroskedastic setting, as we also show in \cref{experiments}.

Lastly, we reiterate that the theoretical safety guarantees offered by \ourmethod are derived under the assumption that $f$ is a bounded norm element of the RKHS space associated with the GP's kernel. In applications, therefore, the choice of the kernel function becomes even more crucial when safety is an issue. For details on how to construct and choose kernels see \citep{garnett_bo_book}. The safety guarantees also depend on the choice of $\beta_n$. Typical expressions for $\beta_n$ include the RKHS norm of the constraint $f$ \citep{kernelized_bandits,fiedler_beta_bounds}, which is in general difficult to estimate in practice. Because of this, usually in practice a constant value of $\beta_n$ is used instead.
 
\section{Experiments}\label{experiments}

In this section we empirically evaluate \ourmethod. Additional details about the experiments and setup can be found in \cref{appendix_experiments}. As commonly done in the literature (see \cref{sec:discussion}), we set $\beta_n = 2$ for all experiments. This choice guarantees safety per iteration, rather than jointly for all $n$ and it allows for a less conservative bound than the one needed for the joint guarantees.

\paragraph{GP samples}
For the first part of the experiments, we evaluate \ourmethod on constraint functions $f$ that we obtain by sampling a GP prior at a finite number of points. This allows us to test \ourmethod under the assumptions of the theory and we compare its performance to that of the exploration part of \stageopt \citep{sui2018stagewise}. \stageopt is a modified version of \safeopt, in which the exploration and optimization parts are performed separately: first the \safeopt exploration strategy is used to expand the safe set as much as possible, then the objective function is optimized within the discovered safe set. We further modify the version of \stageopt that we use in the experiment by defining the safe set in the same way \ourmethod does, i.e., by means of the GP posterior, as done, for example, also by \citet{safe_quadrotors_felix}. We select 100 samples from a two-dimensional GP with RBF kernel, defined in $[-2.5, 2.5]\times[-2.5, 2.5]$ and run \ourmethod and \stageopt for 100 iterations for each sample. As \stageopt requires a discretization of the domain, we use this discretization to compare the sample efficiency of the two methods, by computing, at each iteration, what percentage of the discretized domain is classified as safe. Moreover, we also compare with the heuristic acquisition inspired by \safeopt proposed by \citet{safe_quadrotors_felix}. This method works exactly as \stageopt, with the difference that the set of expanders is computed using directly the GP posterior, rather than the Lipschitz constant. More precisely, a parameter $\bm x$ is considered an expander if observing a value of $\mu_n(\bm x) + \beta_n\sigma_n(\bm x)$ at $\bm x$ would enlarge the safe set. For the \stageopt run, we use the kernel metric to compute the set of potential expanders, for different values of the Lipschitz constant $L$. From the results shown in \cref{fig:comparison}, we see not only that \ourmethod performs as well or better than all tested instances of \stageopt, but also how the choice of $L$ affects the performance of the latter. This plot makes it also evident how crucial the choice of the Lipschitz constant is for \stageopt and \safeopt-like algorithms in general. 
In \cref{tab:safety_violations}, in \cref{appendix_experiments}, we report the average percentage of safety violations per run achieved by \ourmethod and \stageopt. As expected, we see that the percentage of safety violations is comparable among all algorithms.

\begin{figure*}
\hfill
     \centering
     \begin{subfigure}{0.495\textwidth}
         \centering
         \includegraphics[width=\textwidth,height=0.15\textheight]{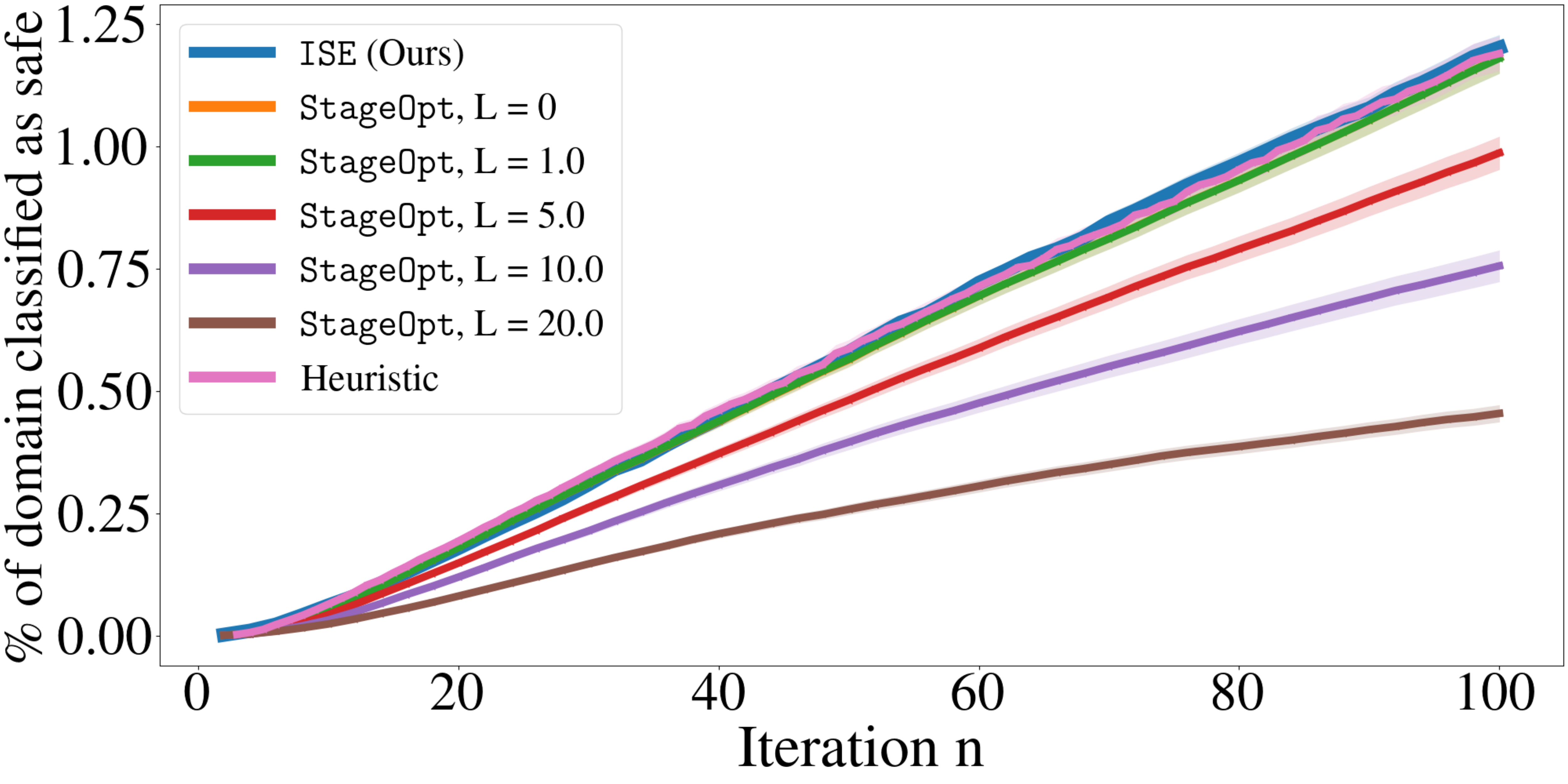}
         \caption{Comparison with \stageopt for GP samples.}
         \label{fig:comparison}
     \end{subfigure}
     \begin{subfigure}{0.495\textwidth}
         \centering
         \includegraphics[width=\textwidth,height=0.15\textheight]{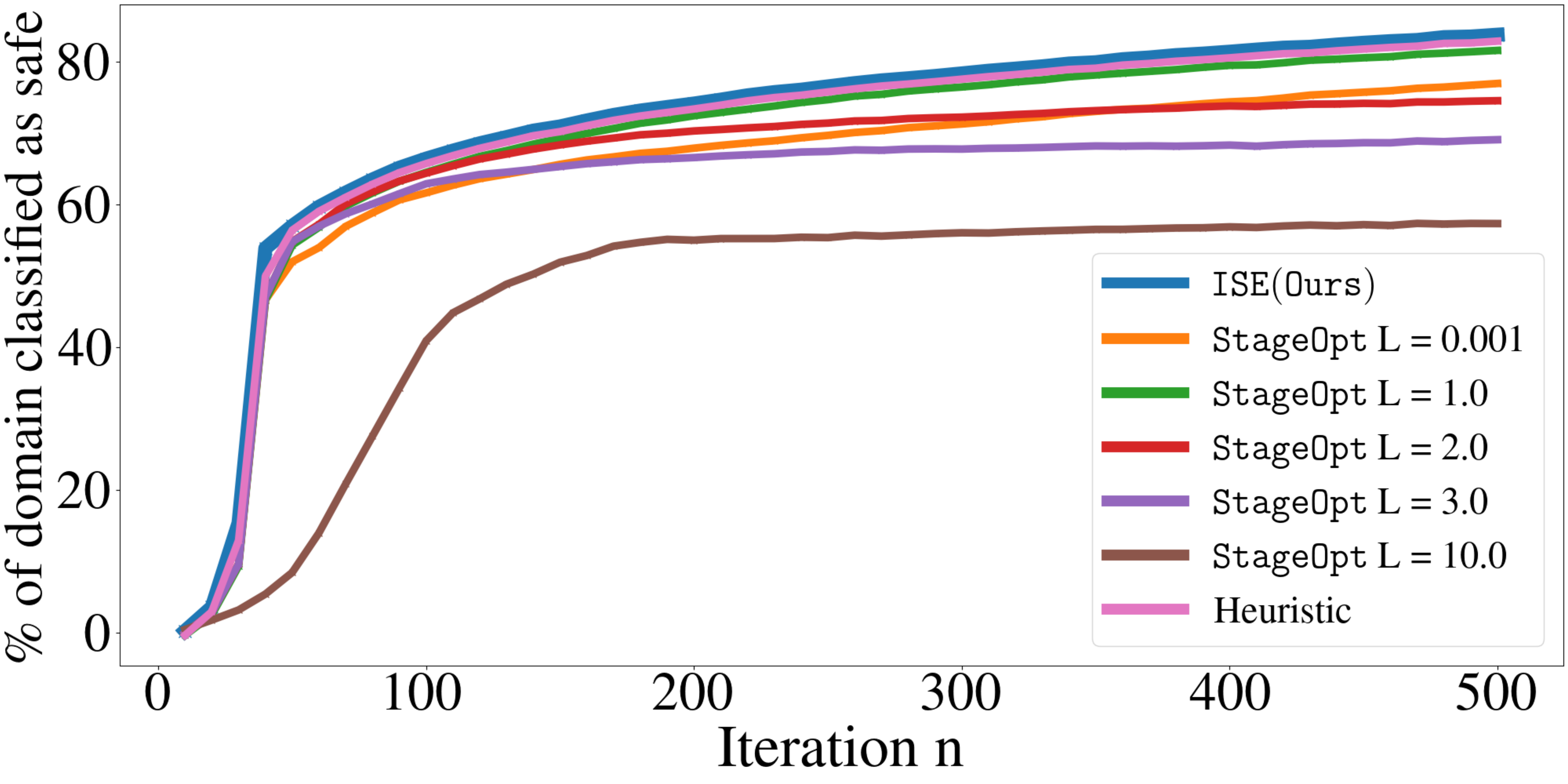}
         \caption{Comparison in 1D exponential example.}
         \label{fig:one_d_exploration}
     \end{subfigure}
        \caption{(\subref{fig:comparison}) Average expansion of safe set over 100 two-dimensional GP samples. The average percentage of the domain classified as safe is plotted as a function of $n$ with its standard error. The lines for $L = 0$ and $L = 1$ overlap. We can see that \ourmethod obtains an higher sample efficiency than the best instance of \stageopt and a comparable one with the heuristic acquisition function proposed by \citet{safe_quadrotors_felix}. The plot also shows that \stageopt performance is heavily affected by the choice of $L$. In (\subref{fig:one_d_exploration}) the average percentage of the domain classified as safe in the one dimensional example for $f(x) = e^{-x} + 0.05$ is plotted as function of th eiteration $n$ with its standard error, and it shows the detrimental effect of over- and underestimating $L$.}
        \label{fig:safe_set_expansion_comparison}
\end{figure*}

To show that for \stageopt exploration not only overestimating the Lipschitz constant, but also underestimating it can negatively impact performance, we consider the simple one-dimensional constraint function $f(x) = e^{-x} + 0.05$ and run the safe exploration for multiple values of the Lipschitz constant. This function gets increasingly away from the safety threshold for $x \to -\infty$, while it asymptotically approaches the threshold for $x \to \infty$, so that a good exploration algorithm would, ideally, quickly classify as safe the domain region for $x < 0$ and then keep exploring the boundary of the safe set for $x > 0$. The results plotted in \cref{fig:one_d_exploration} show how both a too high and a too low Lipschitz constant can lead to sub-optimal exploration. In the case of a too small constant, this is because \stageopt considers expanders almost all parameters in the domain, leading to additional evaluations in the region for $x < 0$ that are unlikely to cause expansion of the safe set. On the other hand, a too high value of the Lipschitz constant can lead to the set of expanders to be empty as soon as the posterior mean gets close to the safety threshold for $x > 0$.

\paragraph{OpenAI Gym Control}
After investigating the performance of \ourmethod under the hypothesis of the theory, we apply it to two classic control tasks from the OpenAI Gym framework \citep{brockman2016openai}, with the goal of finding the set of parameters of a controller that satisfy some safety constraint. In particular we consider linear controllers for the inverted pendulum and cart pole tasks.

For the inverted pendulum task, the linear controller is given by $u_t = \alpha_1 \theta_t + \alpha_2 \dot{\theta}_t$, where $u_t$ is the control signal at time $t$, while $\theta_t$ and $\dot{\theta}_t$ are, respectively, the angular position and the angular velocity of the pendulum. Starting from a position close to the upright equilibrium, the controller's task is the stabilization of the pendulum, subject to a safety constraint on the maximum velocity reached within one episode. For some given initial controller configuration $\bm \alpha_0 \coloneqq (\alpha_1^0, \alpha_2^0)$, we want to explore the controller's parameter space avoiding configurations that lead the pendulum to swing with a too high velocity. We apply \ourmethod to explore the $\bm \alpha$-space with $\bm x_0 = \bm \alpha_0$ and the safety constraint being the maximum angular velocity reached by the pendulum in an episode of fixed length. In this case, the safety threshold is not at zero, but rather at some finite value $\dot{\theta}_M$, and the safe parameters are those for which the maximum velocity is below $\dot{\theta}_M$. The formalism developed in the previous sections can be easily applied to this scenario if we consider $f(\bm \alpha) = - (\max_t \dot{\theta}_t(\bm \alpha) - \dot{\theta}_M)$. In \cref{fig:true_pendulum_safe_set} we show the true safe set for this problem, while in \cref{fig:pendulum_discover_safe_set_1,fig:pendulum_discover_safe_set_2,fig:pendulum_discover_safe_set_3} one can see how \ourmethod safely explores the true safe set. These plots show how the \ourmethod acquisition function \cref{eq:x_n_+_1} selects parameters that are close to the current safe set boundary and, hence, most informative about the safety of parameters outside of the safe set. This behavior eventually leads to the full true safe set to be classified as safe by the GP posterior, as \cref{fig:pendulum_discover_safe_set_3} shows.

The cart pole task is similar to the inverted pendulum one, but the parameter space has three dimensions. The controller we consider is given by $u_t = \alpha_1 \theta_t + \alpha_2 \dot{\theta_t} + \alpha_3 \dot{s_t}$, where $\theta_t$ and $\dot{\theta}_t$ are, respectively, the angular position and angular velocity of the pole at time $t$, while $\dot{s}_t$ is the cart's velocity. We set the initial state to zero angular and linear velocity and with the pole close to the vertical position, with the controller's goal being to keep the pole stable in the upright position. A combination of the three parameters $\alpha_1$, $\alpha_2$ and $\alpha_3$ is considered safe if the angle of the pole does not exceed a given threshold within the episode. Again, we can easily cast this safety constraint in terms of the formalism developed in the paper: $f(\bm \alpha) = - (\max_t \theta_t(\bm \alpha) - \theta_M)$, where $\theta_M$ is the maximum allowed angle. \cref{fig:cartpole_safe_set} shows the expansion of the cart pole $\bm \alpha$ space promoted by \ourmethod, compared with \stageopt for different values of the Lipschitz constant. Both methods achieve a comparable sample efficiency and both lead to the classification as safe of the full true safe set.

\begin{figure}[ht]
\hfill
     \centering
     \begin{subfigure}[b]{0.49\textwidth}
         \centering
         \includegraphics[width=\textwidth,height=0.165\textheight]{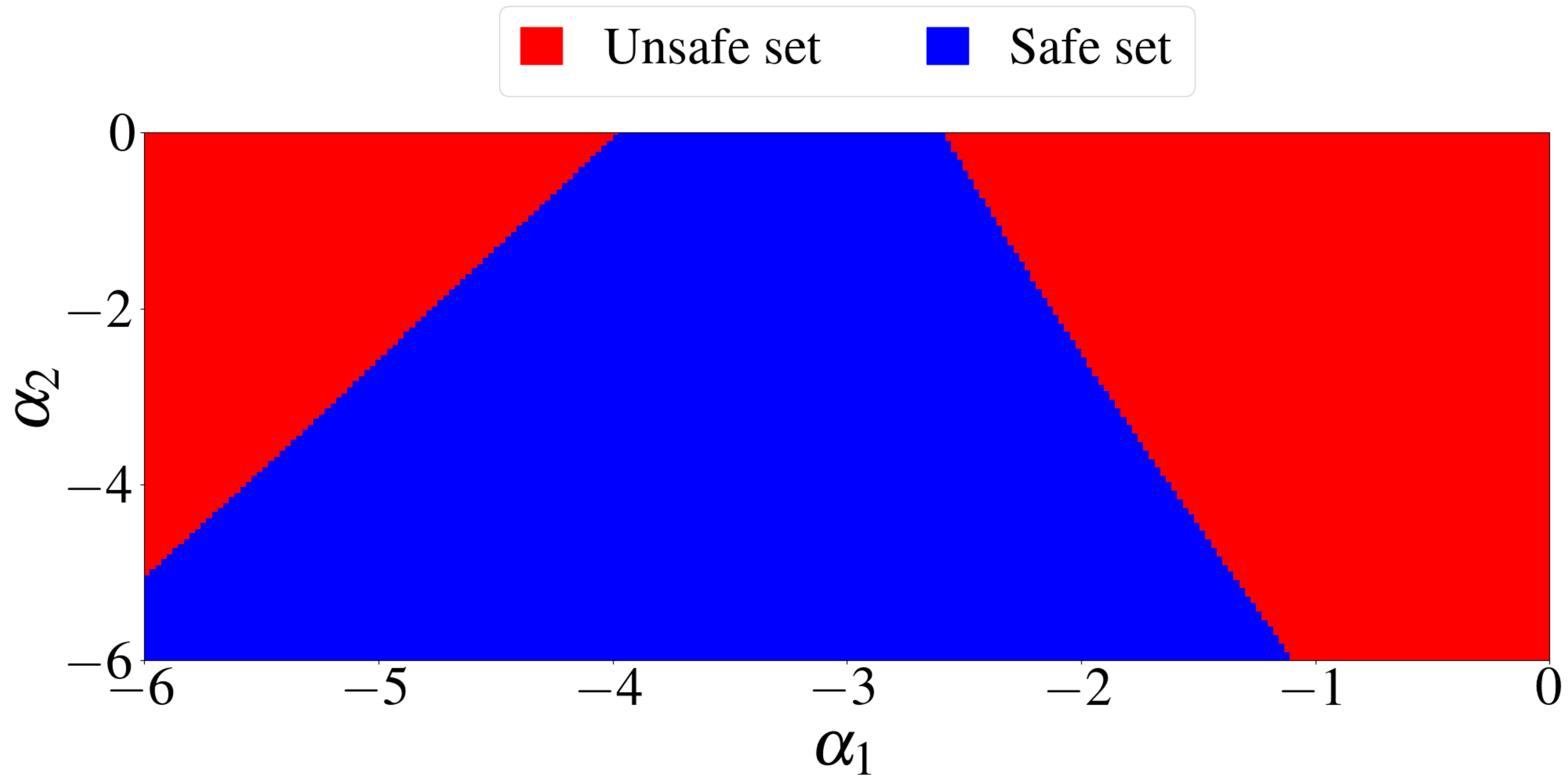}
         \caption{True safe set}
         \label{fig:true_pendulum_safe_set}
     \end{subfigure}
     \hfill
     \begin{subfigure}[b]{0.49\textwidth}
         \centering
         \includegraphics[width=\textwidth,height=0.165\textheight]{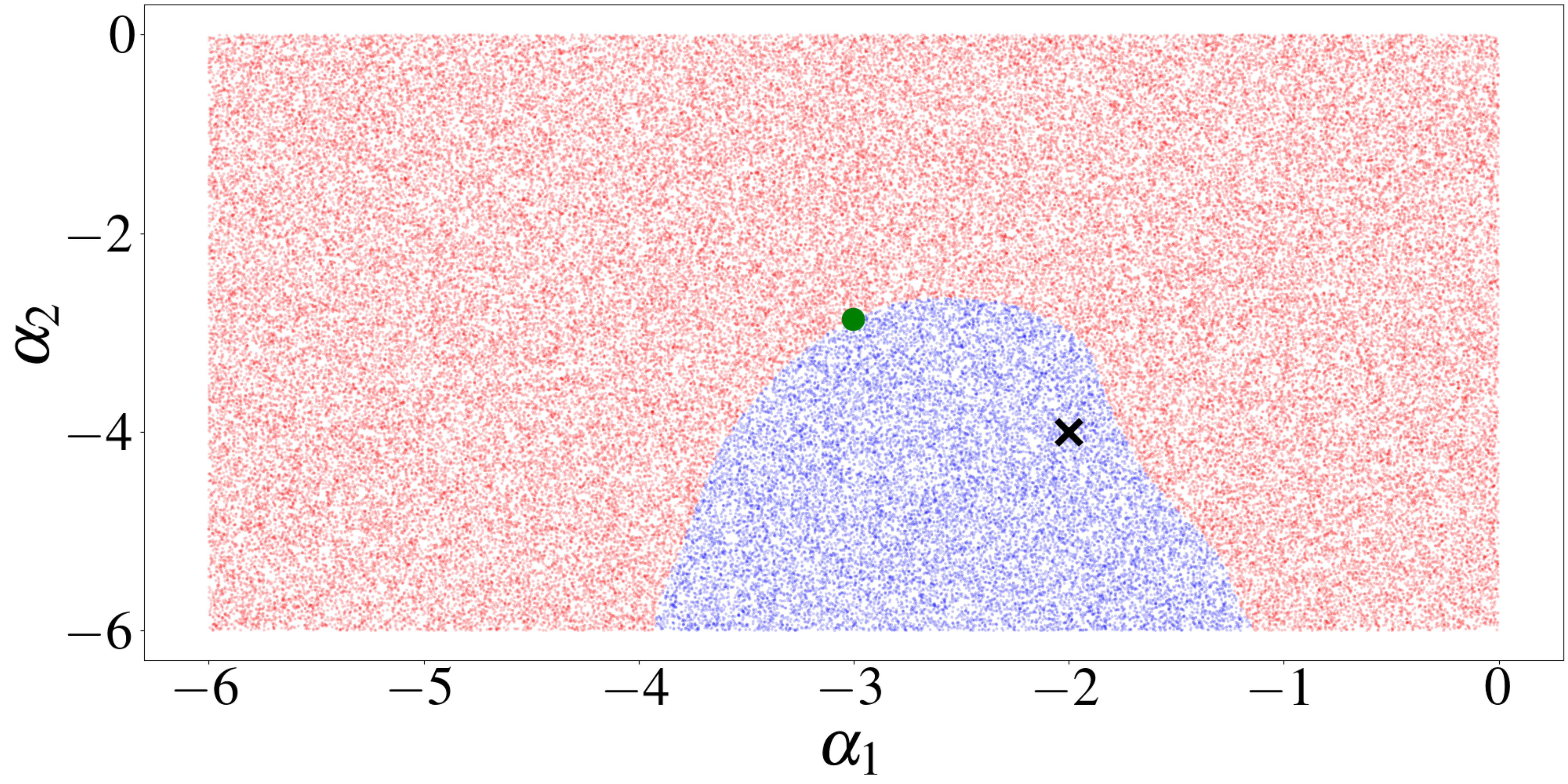}
         \caption{Iteration 10}
         \label{fig:pendulum_discover_safe_set_1}
     \end{subfigure}
     \hfill
     \begin{subfigure}[b]{0.49\textwidth}
         \centering
         \includegraphics[width=\textwidth,height=0.165\textheight]{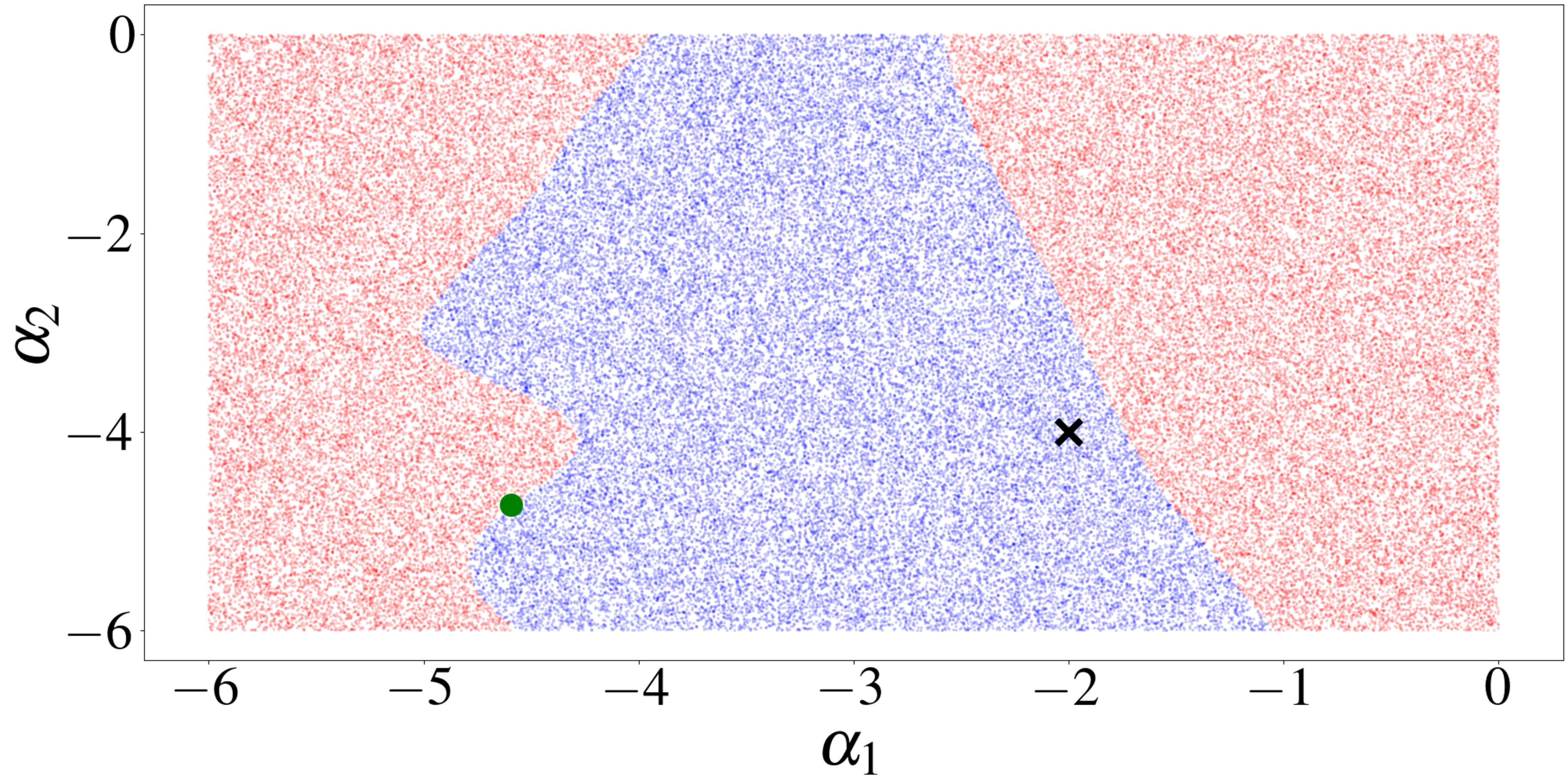}
         \caption{Iteration 30}
         \label{fig:pendulum_discover_safe_set_2}
     \end{subfigure}
     \hfill
     \begin{subfigure}[b]{0.49\textwidth}
         \centering
         \includegraphics[width=\textwidth,height=0.165\textheight]{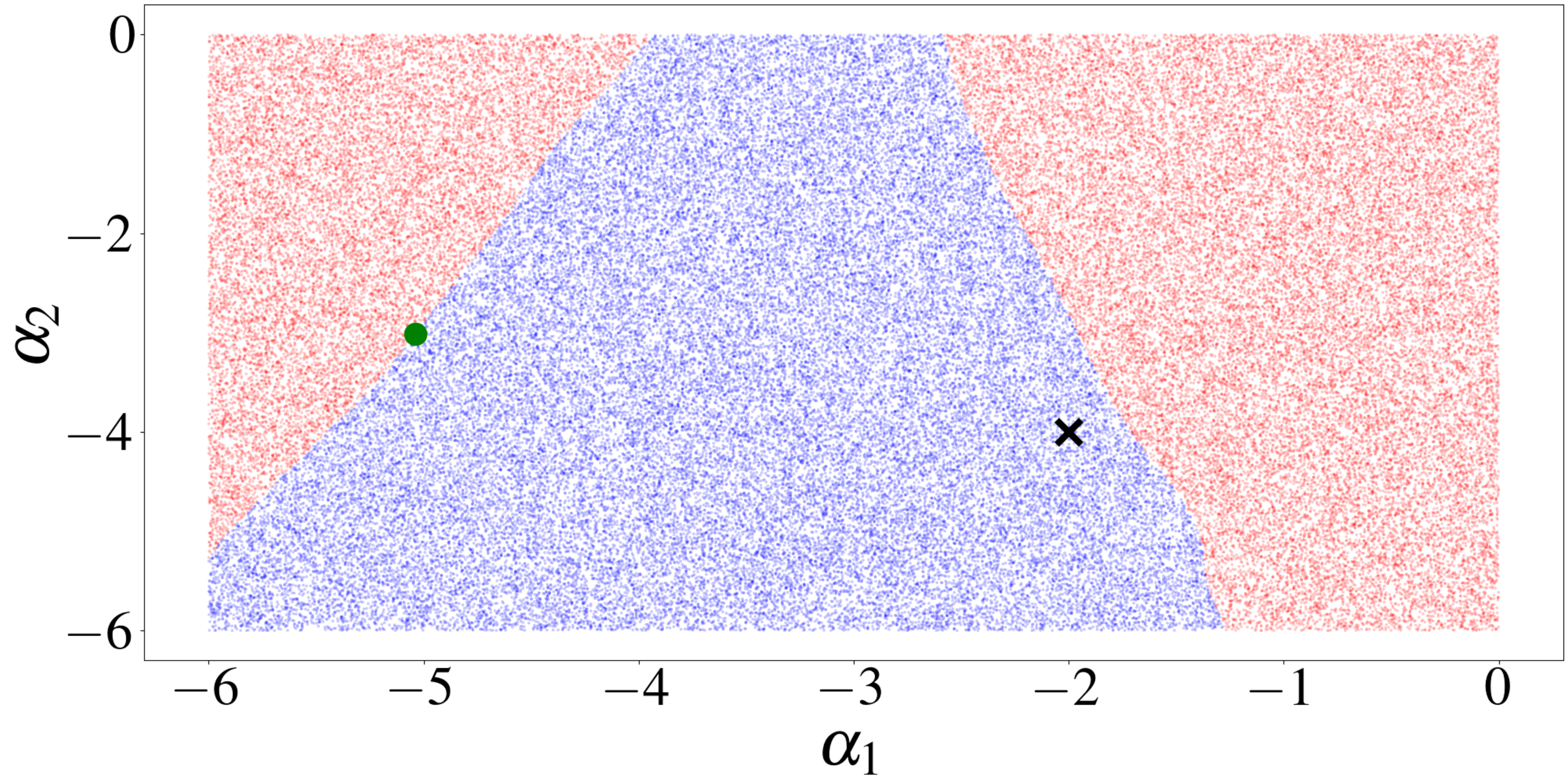}
         \caption{Iteration 50}
         \label{fig:pendulum_discover_safe_set_3}
     \end{subfigure}
        \caption{Safe exploration of the linear controller's parameter space in the inverted pendulum experiment. In (\subref{fig:true_pendulum_safe_set}) we see the true safe set, while in (\subref{fig:pendulum_discover_safe_set_1}-\subref{fig:pendulum_discover_safe_set_3}) we see the safe set (blue region) as identified by \ourmethod at various iterations. The point marked by the green dot is $\bm \alpha_{n+1}$ as selected by \cref{eq:x_n_+_1}, while the black cross is the initial safe seed $\bm \alpha_0$.}
        \label{fig:pendulum_experiment}
\end{figure}

\begin{figure*}
\hfill
     \centering
     \begin{subfigure}{0.495\textwidth}
         \centering
         \includegraphics[width=\textwidth,height=0.16\textheight]{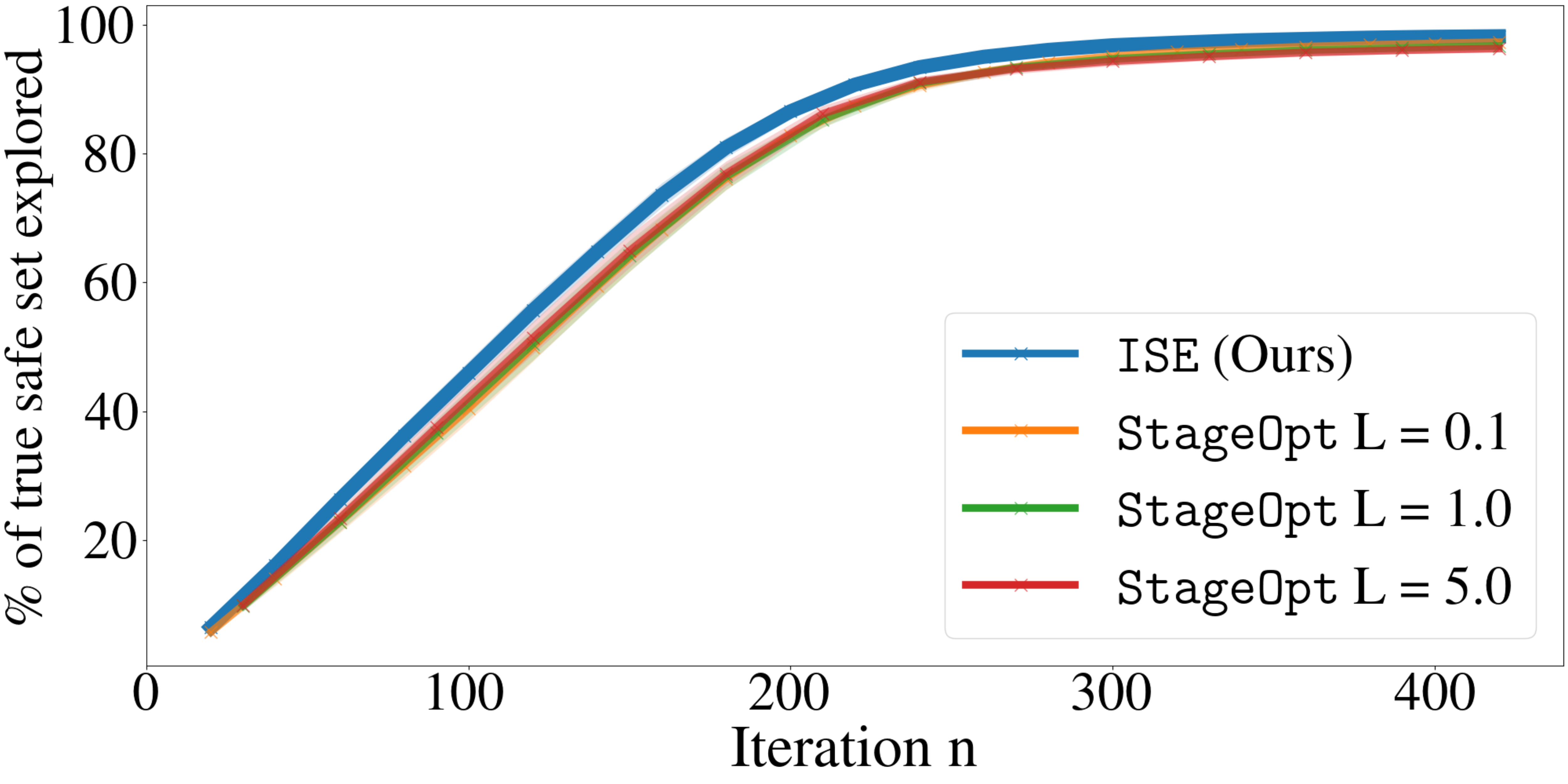}
         \caption{Exploration of the safe set for the cart pole task.}
         \label{fig:cartpole_safe_set}
     \end{subfigure}
     \begin{subfigure}{0.495\textwidth}
         \centering
         \includegraphics[width=\textwidth,height=0.16\textheight]{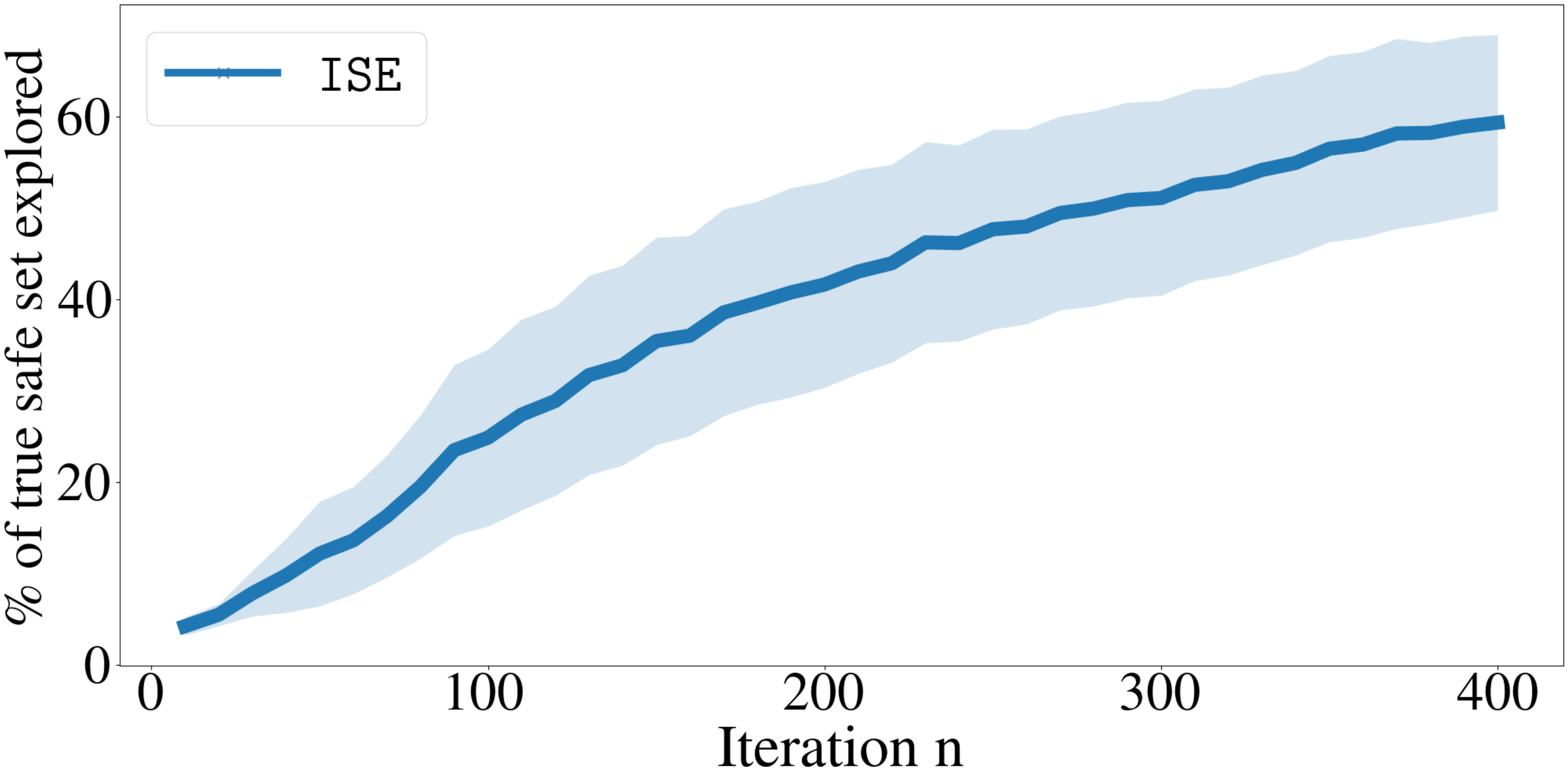}
         \caption{Expansion of the safe set in dimension five.}
         \label{fig:five_d_expansion}
     \end{subfigure}
        \caption{In (\subref{fig:cartpole_safe_set}) we plot the percentage of the true safe set of the cart pole task classified as safe by \ourmethod and \stageopt, while in (\subref{fig:five_d_expansion}) we see the expansion of the five-dimensional safe set promoted by \ourmethod, for the safety constraint $f$ used in the high dimensional experiments.}
        \label{fig:cartpole_experiment_and_5_d}
        \hfill
\end{figure*}

\paragraph{High dimensional domains}
Many interesting applications have a high dimensional parameter space. While \safeopt -like methods are difficult to apply already with dimension $>3$ due to the discretization of the domain, \ourmethod can perform well also in four or five dimensions. To see this, we apply \ourmethod to the constraint function $f(\bm x) = e^{-\bm x^2} + 2e^{-(\bm x - \bm x_1)^2} + 5e^{-(\bm x - \bm x_2)^2} - 0.2$. \cref{fig:five_d_expansion} shows the \ourmethod performance in dimension 5. We see that \ourmethod is able to promote the expansion of the safe set, leading to an increasingly bigger portion of the true safe set to be classified as safe. 

\paragraph{Heteroskedastic noise domains}\label{par:high_heteroskedastic}
For even higher dimensions, we can follow a similar approach to \linebo, limiting the optimization of the acquisition function to a randomly selected one-dimensional subspace of the domain. Moreover, as discussed in \cref{sec:discussion}, it is also interesting to test \ourmethod in the case of heteroskedastic observation noise, since the noise is a critical quantity for the \ourmethod acquisition function, while it does not affect the selection criterion of \stageopt-like methods. Therefore, in this experiment we combine a high dimensional problem with heteroskedastic noise. In particular, we apply a \linebo version of \ourmethod to the constraint function $f(\bm x) = \frac{1}{2}e^{-\bm x^2} + e^{-(\bm x \pm \bm x_1)^2} + 3e^{-(\bm x \pm \bm x_2)^2} + 0.2$ in dimension nine and ten, with the safe seed being the origin. This function has two symmetric global optima at $\pm \bm x_2$ and we set two different noise levels in the two symmetric domain halves containing the optima. To assess the exploration performance, we use the simple regret, defined as the difference between the current safe optimum and the true safe optimum. As the results in \cref{fig:high_d_line} show, \ourmethod achieve a greater sample efficiency than the other \stageopt -like methods. Namely, for a given number of iterations, by explicitly exploiting knowledge about the observation noise, \ourmethod is able to classify as safe regions of the domain further away from the origin, in which the constraint function assumes its largest values, resulting in a smaller regret. On the other hand, \safeopt-like methods only focus on the posterior variance, so that the higher observation noise causes them to remain stuck in a smaller neighborhood of the origin, resulting in bigger regret.

\begin{figure*}
\hfill
     \centering
     \begin{subfigure}{0.495\textwidth}
         \centering
         \includegraphics[width=\textwidth,height=0.16\textheight]{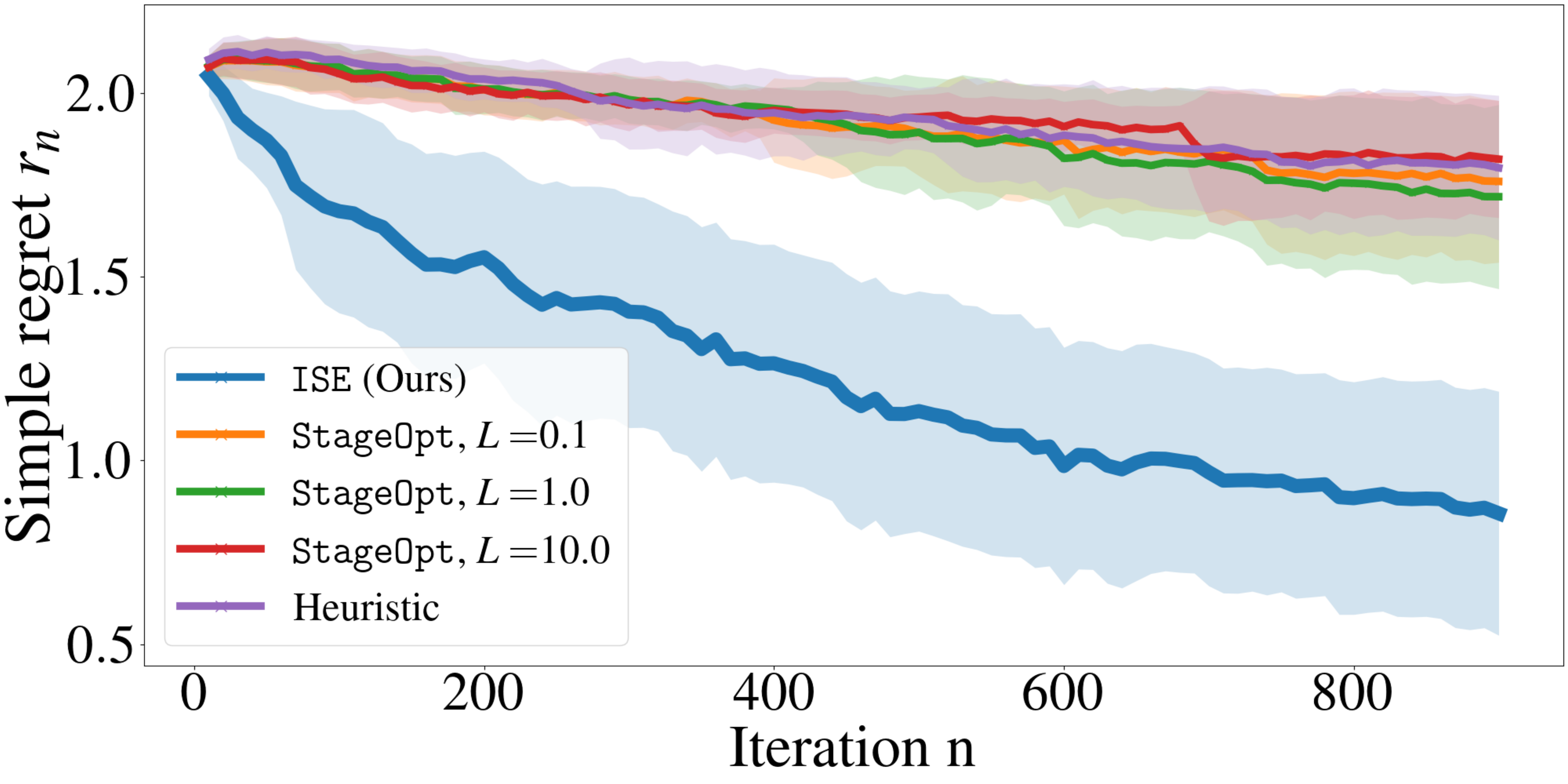}
         \caption{Dimension 9.}
         \label{fig:high_d_line_9}
     \end{subfigure}
     \begin{subfigure}{0.495\textwidth}
         \centering
         \includegraphics[width=\textwidth,height=0.16\textheight]{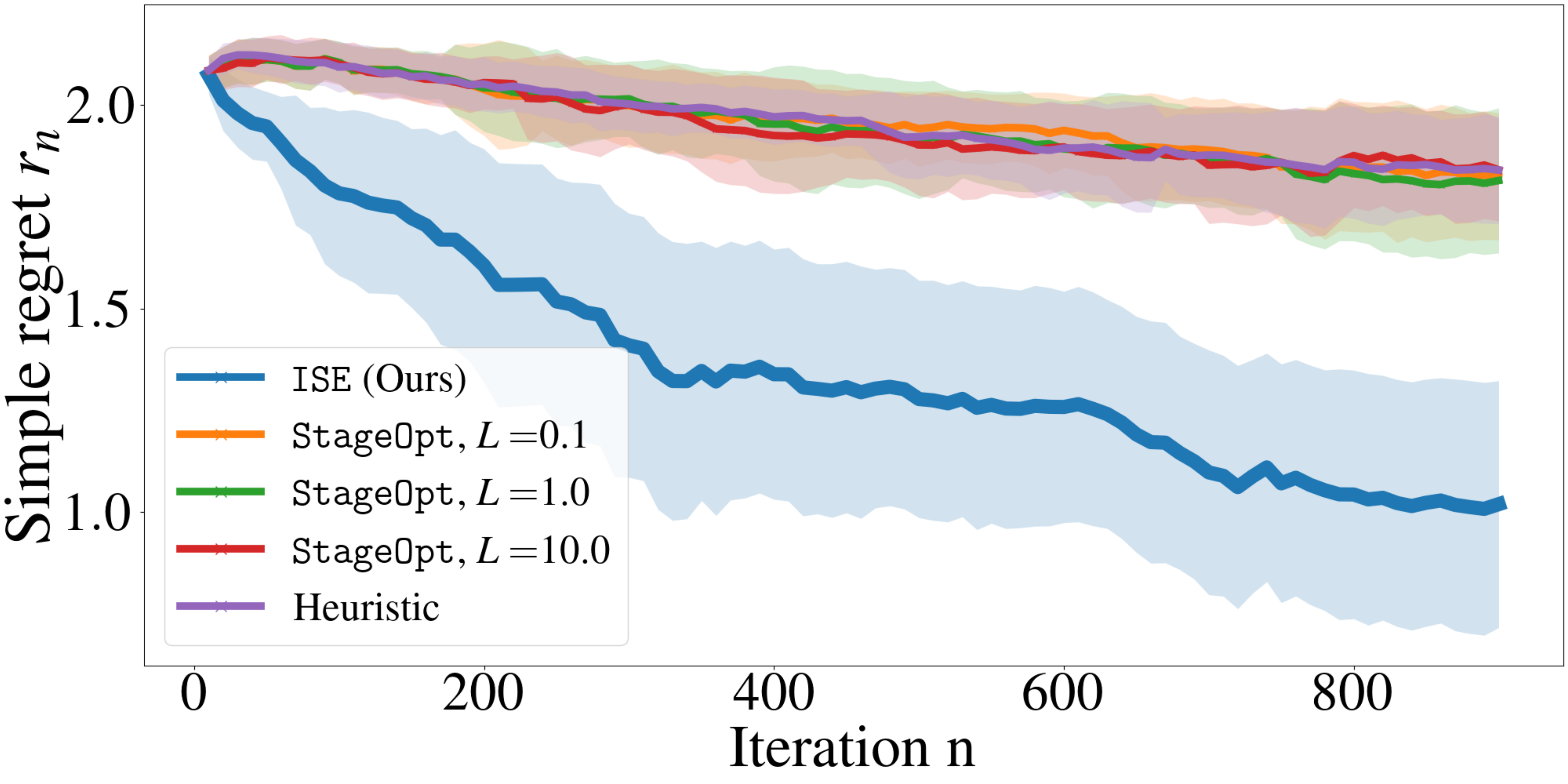}
         \caption{Dimension 10.}
         \label{fig:high_d_line_10}
     \end{subfigure}
        \caption{Example of high dimensional exploration, for $d=$9 and $d=$10. After every ten iterations, we perform safe optimization with UCB acquisition within the current safe set and plot the simple regret $r_n$ with respect to the safe optimum. (\subref{fig:high_d_line_9}) and (\subref{fig:high_d_line_9}) show, respectively, the average regret over 70 runs in dimension nine and ten, as a function of the number of iterations. We can see that this adapted version of \ourmethod promotes expansion of the safe set, leading to classifying as safe regions where the latent function attains its largest value. The plots also show that \ourmethod achieves a better sample efficiency than both \stageopt-like exploration and the \stageopt inspired heuristic acquisition.}
        \label{fig:high_d_line}
\end{figure*}

\section{Conclusion and Societal Impact}\label{sec:conclusion}

We have introduced \ourmethodlong (\ourmethod), a novel approach to safely explore a space in a sequential decision task where the safety constraint is \textit{a priori} unknown. \ourmethod efficiently and safely explores by evaluating only parameters that are safe with high probability and by choosing those parameters that yield the greatest information gain about the safety of other parameters. We theoretically analyzed \ourmethod and showed that it leads to arbitrary reduction of the uncertainty in the largest reachable safe set containing the starting parameter. Our experiments support these theoretical results and demonstrate an increased sample efficiency and scalability of \ourmethod compared to \safeopt-based approaches.

In many safety sensitive applications the shape of the safety constraints is unknown, so that an important prerequisite for any kind of process is to identify what parameters are safe to evaluate. By providing a principled way to do this, the contributions of this paper allow to deal with safety in a broad range of applications, which can favor the usage of ML approaches also in safety sensitive settings. On the other hand, misuse of the proposed method cannot be prevented in general.

\bibliography{references_final}

\begin{thebibliography}{29}
\providecommand{\natexlab}[1]{#1}
\providecommand{\url}[1]{\texttt{#1}}
\expandafter\ifx\csname urlstyle\endcsname\relax
  \providecommand{\doi}[1]{doi: #1}\else
  \providecommand{\doi}{doi: \begingroup \urlstyle{rm}\Url}\fi

\bibitem[Berkenkamp et~al.(2016)Berkenkamp, Schoellig, and
  Krause]{safe_quadrotors_felix}
Felix Berkenkamp, Angela~P. Schoellig, and Andreas Krause.
\newblock Safe controller optimization for quadrotors with gaussian processes.
\newblock In \emph{IEEE International Conference on Robotics and Automation
  (ICRA)}, 2016.

\bibitem[Berkenkamp et~al.(2021)Berkenkamp, Krause, and
  Schoellig]{berkenkamp_bayesian_2020}
Felix Berkenkamp, Andreas Krause, and Angela~P Schoellig.
\newblock Bayesian optimization with safety constraints: safe and automatic
  parameter tuning in robotics.
\newblock \emph{Machine Learning Journal, Special Issue on Robust Machine
  Learning}, 2021.

\bibitem[Brockman et~al.(2016)Brockman, Cheung, Pettersson, Schneider,
  Schulman, Tang, and Zaremba]{brockman2016openai}
Greg Brockman, Vicki Cheung, Ludwig Pettersson, Jonas Schneider, John Schulman,
  Jie Tang, and Wojciech Zaremba.
\newblock Openai gym.
\newblock \emph{arXiv preprint arXiv:1606.01540}, 2016.

\bibitem[Bubeck and Cesa-Bianchi(2012)]{DBLP:journals/corr/abs-1204-5721}
Sébastien Bubeck and Nicolò Cesa-Bianchi.
\newblock Regret analysis of stochastic and nonstochastic multi-armed bandit
  problems.
\newblock \emph{Foundations and Trends® in Machine Learning}, 5\penalty0
  (1):\penalty0 1--112, 2012.

\bibitem[Chowdhury and Gopalan(2017)]{kernelized_bandits}
Sayak~Ray Chowdhury and Aditya Gopalan.
\newblock On kernelized multi-armed bandits.
\newblock In \emph{International Conference on Machine Learning (ICML)}, 2017.

\bibitem[Contal et~al.(2014)Contal, Perchet, and Vayatis]{gp_opt_with_mi}
Emile Contal, Vianney Perchet, and Nicolas Vayatis.
\newblock Gaussian process optimization with mutual information.
\newblock In \emph{International Conference on International Conference on
  Machine Learning (ICML)}, 2014.

\bibitem[Duivenvoorden et~al.(2017)Duivenvoorden, Berkenkamp, Carion, Krause,
  and Schoellig]{duivenvoorden_constrained_2017}
Rikky~R.P.R. Duivenvoorden, Felix Berkenkamp, Nicolas Carion, Andreas Krause,
  and Angela~P. Schoellig.
\newblock Constrained {B}ayesian optimization with particle swarms for safe
  adaptive controller tuning.
\newblock In \emph{Proc. of the IFAC (International Federation of Automatic
  Control) World Congress}, pages 12306--12313, 2017.

\bibitem[Dulac{-}Arnold et~al.(2019)Dulac{-}Arnold, Mankowitz, and
  Hester]{dulac-arnold_challenges_nodate}
Gabriel Dulac{-}Arnold, Daniel~J. Mankowitz, and Todd Hester.
\newblock Challenges of real-world reinforcement learning.
\newblock In \emph{International Conference on International Conference on
  Machine Learning (ICML)}, 2019.

\bibitem[Fiedler et~al.(2021)Fiedler, Scherer, and Trimpe]{fiedler_beta_bounds}
Christian Fiedler, Carsten~W. Scherer, and Sebastian Trimpe.
\newblock Practical and rigorous uncertainty bounds for gaussian process
  regression.
\newblock In \emph{AAAI Conference on Artificial Intelligence}, 2021.

\bibitem[Fr\"ohlich et~al.(2020)Fr\"ohlich, Klenske, Vinogradska, Daniel, and
  Zeilinger]{frohlich_noisy-input_2020}
Lukas Fr\"ohlich, Edgar Klenske, Julia Vinogradska, Christian Daniel, and
  Melanie Zeilinger.
\newblock Noisy-input entropy search for efficient robust bayesian
  optimization.
\newblock In \emph{International Conference on Artificial Intelligence and
  Statistics (AISTATS)}, 2020.

\bibitem[Garnett(2022)]{garnett_bo_book}
Roman Garnett.
\newblock \emph{{Bayesian Optimization}}.
\newblock Cambridge University Press, 2022.
\newblock in preparation.

\bibitem[Gelbart et~al.(2014)Gelbart, Snoek, and Adams]{constrained_bo}
Michael~A. Gelbart, Jasper Snoek, and Ryan~P. Adams.
\newblock Bayesian optimization with unknown constraints.
\newblock In \emph{Conference on Uncertainty in Artificial Intelligence (UAI)},
  2014.

\bibitem[Gotovos et~al.(2013)Gotovos, Casati, Hitz, and Krause]{lse_krause}
Alkis Gotovos, Nathalie Casati, Gregory Hitz, and Andreas Krause.
\newblock Active learning for level set estimation.
\newblock In \emph{International Joint Conference on Artificial Intelligence
  (IJCAI)}, 2013.

\bibitem[Hans et~al.(2008)Hans, Schneegass, Schäfer, and Udluft]{safe_mdp_zoo}
Alexander Hans, Daniel Schneegass, Anton Schäfer, and Steffen Udluft.
\newblock Safe exploration for reinforcement learning.
\newblock In \emph{European Symposium on Artificial Neural Networks (ESANN)},
  2008.

\bibitem[Hennig and Schuler(2012)]{HennigS2012}
Philipp Hennig and Christian~J. Schuler.
\newblock Entropy search for information-efficient global optimization.
\newblock \emph{Journal of Machine Learning Research}, 13:\penalty0
  1809–1837, 2012.

\bibitem[Henr\'{a}ndez-Lobato et~al.(2014)Henr\'{a}ndez-Lobato, Hoffman, and
  Ghahramani]{hernandez-lobato_predictive_2014}
Jos\'{e}~Miguel Henr\'{a}ndez-Lobato, Matthew~W. Hoffman, and Zoubin
  Ghahramani.
\newblock Predictive entropy search for efficient global optimization of
  black-box functions.
\newblock In \emph{International Conference on Neural Information Processing
  Systems (NIPS)}, 2014.

\bibitem[Kirschner et~al.(2019)Kirschner, Mutn\'y, Hiller, Ischebeck, and
  Krause]{kirschner_adaptive_2019}
Johannes Kirschner, Mojmir Mutn\'y, Nicole Hiller, Rasmus Ischebeck, and
  Andreas Krause.
\newblock Adaptive and safe bayesian optimization in high dimensions via
  one-dimensional subspaces.
\newblock In \emph{International Conference for Machine Learning (ICML)}, 2019.

\bibitem[Moldovan and Abbeel(2012)]{safe_mdp_ergodicity}
Teodor~Mihai Moldovan and Pieter Abbeel.
\newblock Safe exploration in markov decision processes.
\newblock In \emph{International Coference on International Conference on
  Machine Learning (ICML)}, 2012.

\bibitem[Rasmussen and Williams(2006)]{rassmussen_gaussian_2006}
Carl~Edward Rasmussen and Christopher K.~I. Williams.
\newblock \emph{Gaussian processes for machine learning}.
\newblock MIT Press, Cambridge, MA, USA, 2006.

\bibitem[Schillinger et~al.(2017)Schillinger, Hartmann, Skalecki, Meister,
  Nguyen-Tuong, and Nelles]{combustion_engine_bo}
Mark Schillinger, Benjamin Hartmann, Patric Skalecki, Mona Meister, Duy
  Nguyen-Tuong, and Oliver Nelles.
\newblock Safe active learning and safe bayesian optimization for tuning a
  pi-controller.
\newblock \emph{IFAC-PapersOnLine}, 50\penalty0 (1):\penalty0 5967--5972, 2017.

\bibitem[Sch\"{o}lkopf and Smola(2002)]{Scholkopf2002}
Bernhard Sch\"{o}lkopf and Alexander~J. Smola.
\newblock \emph{Learning with kernels: support vector machines, regularization,
  optimization, and beyond}.
\newblock MIT Press, Cambridge, MA, USA, 2002.

\bibitem[Schreiter et~al.(2015)Schreiter, Nguyen-Tuong, Eberts, Bischoff,
  Markert, and Toussaint]{Schreiter_safe_exp_2015}
Jens Schreiter, Duy Nguyen-Tuong, Mona Eberts, Bastian Bischoff, Heiner
  Markert, and Marc Toussaint.
\newblock Safe exploration for active learning with gaussian processes.
\newblock In \emph{European Conference on Machine Learning and Knowledge
  Discovery in Databases (ECMLPKDD)}, 2015.

\bibitem[Shahriari et~al.(2016)Shahriari, Swersky, Wang, Adams, and
  de~Freitas]{shahriari_taking_2016}
Bobak Shahriari, Kevin Swersky, Ziyu Wang, Ryan~P. Adams, and Nando de~Freitas.
\newblock Taking the {Human} {Out} of the {Loop}: {A} {Review} of {Bayesian}
  {Optimization}.
\newblock \emph{Proceedings of the IEEE}, 104:\penalty0 148--175, 2016.

\bibitem[Srinivas et~al.(2010)Srinivas, Krause, Kakade, and
  Seeger]{srinivas_gaussian_2010}
Niranjan Srinivas, Andreas Krause, Sham Kakade, and Matthias Seeger.
\newblock Gaussian process optimization in the bandit setting: {No} regret and
  experimental design.
\newblock In \emph{International Conference on Machine Learning (ICML)}, 2010.

\bibitem[Sui et~al.(2015)Sui, Gotovos, Burdick, and Krause]{sui_safe_2015}
Yanan Sui, Alkis Gotovos, Joel~W. Burdick, and Andreas Krause.
\newblock Safe exploration for optimization with {Gaussian} processes.
\newblock In \emph{International Conference on Machine Learning (ICML)}, 2015.

\bibitem[Sui et~al.(2018)Sui, Zhuang, Burdick, and Yue]{sui2018stagewise}
Yanan Sui, Vincent Zhuang, Joel Burdick, and Yisong Yue.
\newblock Stagewise safe {B}ayesian optimization with {G}aussian processes.
\newblock In \emph{International Conference on Machine Learning (ICML)}, 2018.

\bibitem[Turchetta et~al.(2016)Turchetta, Berkenkamp, and
  Krause]{turchetta_mdp_2016}
Matteo Turchetta, Felix Berkenkamp, and Andreas Krause.
\newblock Safe exploration in finite markov decision processes with gaussian
  processes.
\newblock In \emph{Advances in Neural Information Processing Systems
  (NeurIPS)}, 2016.

\bibitem[Turchetta et~al.(2019)Turchetta, Berkenkamp, and
  Krause]{turchetta_krause_2019}
Matteo Turchetta, Felix Berkenkamp, and Andreas Krause.
\newblock Safe exploration for interactive machine learning.
\newblock In \emph{Advances in Neural Information Processing Systems
  (NeurIPS)}, 2019.

\bibitem[Wang and Jegelka(2017)]{wang_max-value_2017}
Zi~Wang and Stefanie Jegelka.
\newblock Max-value entropy search for efficient bayesian optimization.
\newblock In \emph{International Conference on Machine Learning (ICML)}, 2017.

\end{thebibliography}
\bibliographystyle{plainnat}

\newpage

\appendix

\section{Proofs}\label{appendix_proofs}

This appendix contains the proofs of the results found in \cref{sec:theory}. We start by introducing a useful rewriting of the mutual information $\hat{I}_n(\{\bm x, y\}; \Psi(\bm z))$ as given by \cref{eq:approximated_psi_entropy,eq:averaged_post_measurement_entropy}. We then use this expression to prove some results about $\hat{I}_n(\{\bm x, y\}; \Psi(\bm z))$ needed for the proof of \cref{lemma:convergence_bound_after_expansion}. Finally, we formally show the safety guarantee offered by \ourmethod and the claim that the posterior mean is bounded by $2\beta_n$ with high probability.

\begin{lemma}\label{lemma:rewriting_of_mutual_info}
The mutual information $\hat{I}_n(\{\bm x, y\}; \Psi(\bm z))$ as given by \cref{eq:approximated_psi_entropy,eq:averaged_post_measurement_entropy} can be rewritten as
\begin{equation}\label{eq:rewritten_mutual_info}
\ln(2)\left[\exp\left\{-c_1\frac{\mu_n^2(\bm z)}{\sigma_n^2(\bm z)}\right\} - \sqrt{\frac{1 - \rho_\nu^2(\bm x)\rho_n^2(\bm x, \bm z)}{1 + c_2\rho_\nu^2(\bm x)\rho_n^2(\bm x, \bm z)}}\exp\left\{-c_1\frac{\mu_n^2(\bm z)}{\sigma_n^2(\bm z)}\frac{1}{1 + c_2\rho_\nu^2(\bm x)\rho_n^2(\bm x, \bm z)}\right\}\right]
\end{equation}
where $c_1 = 1 / \ln(2)\pi$ and $c_2 = 2c_1 -1$, and where $\rho_\nu^2(\bm x)$ is given by
\begin{equation}\label{eq:rho_nu_def}
\rho_\nu^2(\bm x) \coloneqq \frac{\sigma_n^2(\bm x)}{\sigma_\nu^2 + \sigma_n^2(\bm x)}
\end{equation}
where the dependency on $n$ has been dropped in the notation for simplicity.
\end{lemma}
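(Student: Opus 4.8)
The proof is an algebraic rewriting of $\hat{I}_n(\{\bm x, y\}; \Psi(\bm z)) = \hat{H}_n[\Psi(\bm z)] - \mathbb{E}_y[\hat{H}_{n+1}[\Psi(\bm z) \mid \{\bm x, y\}]]$ obtained by substituting \cref{eq:approximated_psi_entropy,eq:averaged_post_measurement_entropy}. For the first term there is nothing to do: since $c_1 = 1/(\pi\ln 2)$, \cref{eq:approximated_psi_entropy} reads $\hat{H}_n[\Psi(\bm z)] = \ln(2)\exp\{-c_1\,\mu_n^2(\bm z)/\sigma_n^2(\bm z)\}$, which is exactly $\ln(2)$ times the first summand inside the bracket of \cref{eq:rewritten_mutual_info}. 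Hence all the content lies in re-expressing the averaged post-measurement entropy \cref{eq:averaged_post_measurement_entropy}, and the plan is simply to pull the common factor $\ln(2)$ out front and normalize every appearance of $\sigma_n^2(\bm x)$ by $\sigma_\nu^2 + \sigma_n^2(\bm x)$.

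Concretely, I would divide numerator and denominator of the radicand $\frac{\sigma_\nu^2 + \sigma_n^2(\bm x)(1 - \rho_n^2(\bm x,\bm z))}{\sigma_\nu^2 + \sigma_n^2(\bm x)(1 + c_2\rho_n^2(\bm x,\bm z))}$, as well as of the coefficient $\frac{\sigma_\nu^2 + \sigma_n^2(\bm x)}{\sigma_\nu^2 + \sigma_n^2(\bm x)(1 + c_2\rho_n^2(\bm x,\bm z))}$ multiplying $\mu_n^2(\bm z)/\sigma_n^2(\bm z)$ in the exponent, by $\sigma_\nu^2 + \sigma_n^2(\bm x)$. Using the two identities $\sigma_\nu^2 + \sigma_n^2(\bm x)(1 - \rho_n^2) = (\sigma_\nu^2 + \sigma_n^2(\bm x))\,(1 - \rho_\nu^2(\bm x)\rho_n^2)$ and $\sigma_\nu^2 + \sigma_n^2(\bm x)(1 + c_2\rho_n^2) = (\sigma_\nu^2 + \sigma_n^2(\bm x))\,(1 + c_2\rho_\nu^2(\bm x)\rho_n^2)$, which follow directly from $\rho_\nu^2(\bm x) = \sigma_n^2(\bm x)/(\sigma_\nu^2 + \sigma_n^2(\bm x))$ in \cref{eq:rho_nu_def}, the factor $\sigma_\nu^2 + \sigma_n^2(\bm x)$ cancels throughout: the radicand becomes $\frac{1 - \rho_\nu^2(\bm x)\rho_n^2(\bm x,\bm z)}{1 + c_2\rho_\nu^2(\bm x)\rho_n^2(\bm x,\bm z)}$ and the exponent coefficient becomes $\frac{1}{1 + c_2\rho_\nu^2(\bm x)\rho_n^2(\bm x,\bm z)}$. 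Substituting these into \cref{eq:averaged_post_measurement_entropy}, subtracting the result from $\hat{H}_n[\Psi(\bm z)]$, and collecting $\ln(2)$ yields exactly \cref{eq:rewritten_mutual_info}.

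There is no real obstacle here; the only thing worth checking is that the manipulation introduces neither a division by zero nor a negative radicand, so that the rewritten form is valid wherever \cref{eq:averaged_post_measurement_entropy} is. Since $\sigma_\nu^2 > 0$ we have $\sigma_\nu^2 + \sigma_n^2(\bm x) > 0$, so the normalization is legitimate; and since $\rho_\nu^2(\bm x) \in [0,1)$ and $\rho_n^2(\bm x,\bm z) \in [0,1]$ we get $1 - \rho_\nu^2(\bm x)\rho_n^2(\bm x,\bm z) \geq 1 - \rho_\nu^2(\bm x) > 0$, while $c_2 = 2c_1 - 1 > -1$ (because $c_1 = 1/(\pi\ln 2) > 0$) gives $1 + c_2\rho_\nu^2(\bm x)\rho_n^2(\bm x,\bm z) \geq 1 + \min(c_2,0) > 0$. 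Hence all the square roots and ratios appearing in \cref{eq:rewritten_mutual_info} are well defined, and the rewriting is an exact identity.
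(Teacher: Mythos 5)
Your proposal is correct and matches the paper's proof, which likewise verifies the identity by substituting the definition of $\rho_\nu^2(\bm x)$ and cancelling the common factor $\sigma_\nu^2 + \sigma_n^2(\bm x)$ (the paper states this tersely, you carry out the algebra explicitly and add a harmless well-definedness check). No gap; the approaches are the same.
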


\begin{proof}
It suffices to substitute the expression \cref{eq:rho_nu_def} for $\rho_\nu^2(\bm x)$ in the second term of \cref{eq:rewritten_mutual_info} to recover \cref{eq:averaged_post_measurement_entropy}. The claim follows then directly from \cref{eq:approximated_psi_entropy} and the definition of mutual information.
\end{proof}

\begin{lemma}\label{lemma:monotonicity_ratio}
The mutual information $\hat{I}_n(\{\bm x, y\}; \Psi(\bm z))$ as given by \cref{eq:rewritten_mutual_info} is monotonically decreasing in $\mu_n^2(\bm z) / \sigma_n^2(\bm z)$ $\forall \bm x, \bm z \in \mathcal{X}$.
\end{lemma}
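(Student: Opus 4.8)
The plan is to exploit that \cref{eq:rewritten_mutual_info} depends on $\bm x$ and $\bm z$ only through the two quantities $t \coloneqq \mu_n^2(\bm z)/\sigma_n^2(\bm z) \ge 0$ and $r \coloneqq \rho_\nu^2(\bm x)\,\rho_n^2(\bm x,\bm z)$, so that the claim is really a statement about a one-variable function of $t$ with $r$ as a fixed parameter. Note first that $r \in [0,1)$, since $\rho_\nu^2(\bm x) \in [0,1)$ and $\rho_n^2(\bm x,\bm z) \in [0,1]$, and that $c_2 = 2c_1 - 1 \in (-1,0)$ because $c_1 = 1/(\pi\ln 2) \approx 0.459$; in particular $1 + c_2 r \ge 1 + c_2 = 2c_1 > 0$. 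Writing $A \coloneqq \sqrt{(1-r)/(1+c_2 r)}$ and $\alpha \coloneqq 1/(1+c_2 r)$, the bracket in \cref{eq:rewritten_mutual_info} equals $g(t) \coloneqq e^{-c_1 t} - A\,e^{-c_1\alpha t}$, and since $\ln 2 > 0$ it suffices to show $g$ is non-increasing on $[0,\infty)$.

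The second step is a short calculus argument. Differentiating gives $g'(t) = c_1 e^{-c_1 t}\bigl(A\alpha\,e^{-c_1 t(\alpha - 1)} - 1\bigr)$. The sign decision hinges on $\alpha$: since $c_2 < 0$ and $r \ge 0$ we get $\alpha - 1 = -c_2 r/(1+c_2 r) \ge 0$, hence $\alpha \ge 1$, so $e^{-c_1 t(\alpha-1)} \le 1$ for all $t \ge 0$. Therefore $g'(t) \le 0$ on $[0,\infty)$ as soon as $A\alpha \le 1$, and this is also necessary because $g'(0) = c_1(A\alpha - 1)$. So the whole lemma reduces to the algebraic inequality $A^2\alpha^2 \le 1$, i.e. $1 - r \le (1 + c_2 r)^3$ for $r \in [0,1)$.

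Finally, I would close this inequality with Bernoulli: since $1 + c_2 r > 0$ and $c_2 r \ge c_2 > -1$, we have $(1+c_2 r)^3 \ge 1 + 3 c_2 r = 1 - 3(1 - 2c_1) r$, which is $\ge 1 - r$ exactly when $3(1 - 2c_1) \le 1$, i.e. when $c_1 \ge 1/3$, i.e. when $\pi\ln 2 \le 3$ — true, as $\pi\ln 2 \approx 2.18$. This also shows $A\alpha < 1$ strictly when $r > 0$, so $g$ is strictly decreasing in $t$ unless $\rho_n(\bm x,\bm z) = 0$, in which case $\hat I_n(\{\bm x,y\};\Psi(\bm z)) \equiv 0$; in either case it is monotonically non-increasing in $\mu_n^2(\bm z)/\sigma_n^2(\bm z)$. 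The only place where care is needed is the sign bookkeeping: because $c_2$ is negative, $\alpha > 1$ rather than $<1$, which is precisely what makes the condition ``$A\alpha \le 1$'' sufficient (the exponential correction works in our favour), and everything else is elementary once one notices the single numeric fact $\pi\ln 2 < 3$.
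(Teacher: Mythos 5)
Your proof is correct and takes essentially the same route as the paper: differentiate the bracket of \cref{eq:rewritten_mutual_info} with respect to the squared ratio and do a sign analysis using $c_2\in(-1,0)$ and $\tilde\rho^2\in[0,1]$, where your reduction to $(1+c_2 r)^3\ge 1-r$ (closed via Bernoulli and $\pi\ln 2<3$) is exactly the inequality implicit in the paper's one-line assertion that $\ln(1+c_2\tilde\rho^2)+\tfrac{1}{2}\ln\bigl((1+c_2\tilde\rho^2)/(1-\tilde\rho^2)\bigr)>0$. As a bonus, you justify that asserted step explicitly and handle the degenerate case $r=0$ (where the mutual information is identically zero and only non-strict monotonicity holds) more carefully than the paper does.
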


\begin{proof}
First of all, let us simplify notation and define $R^2 \coloneqq \mu^2_n(\bm z) / \sigma^2_n(\bm z)$ and $\tilde{\rho}^2 \coloneqq \rho_\nu^2(\bm x)\rho^2_n(\bm x, \bm z)$. We then need to show that:
\begin{equation}\label{eq:partial_ratio}
\frac{\partial}{\partial R^2}\left[\exp\left\{-c_1R^2\right\} - \sqrt{\frac{1 - \tilde{\rho}^2}{1 + c_2\tilde{\rho^2}}}\exp\left\{-c_1R^2\frac{1}{1 + c_2\tilde{\rho}^2}\right\}\right] < 0 ~\forall R, ~\forall \tilde{\rho} \in [0, 1]
\end{equation}
We then can compute the derivative and ask under which conditions it is non negative. Requiring \cref{eq:partial_ratio} to be non negative is equivalent to ask that:
\begin{equation}
R^2 \leq \frac{1 + c_2\tilde{\rho}^2}{c_1c_2\tilde{\rho}^2}\left[\ln\left(1 + c_2\tilde{\rho}^2\right) + \frac{1}{2}\ln\left(\frac{1 + c_2\tilde{\rho}^2}{1 - \tilde{\rho}^2}\right)\right]
\end{equation}
Now, we observe that, since $c_2 \in (-1, 0)$, while $c_1>0$ and $\tilde{\rho}^2 \in [0, 1]$, the factor $(1 + c_2\tilde{\rho}^2) / c_1c_2\tilde{\rho}^2$ is always negative. For what concerns the sum of logarithms in the square brackets, it is strictly positive $\forall \tilde{\rho}^2 \in [0, 1]$, which means that, for \cref{eq:partial_ratio} to be non negative, we would need $R^2 < 0$, which is impossible, given that $R \in \mathbb{R}$.
\end{proof}

\begin{lemma}\label{lemma:monotonicity_rho}
The mutual information $\hat{I}_n(\{\bm x, y\}; \Psi(\bm z))$ as given by \cref{eq:approximated_psi_entropy} and \cref{eq:averaged_post_measurement_entropy} is monotonically increasing in $\rho_\nu^2(\bm x)\rho_n^2(\bm x, \bm z)$ $\forall \bm x, \bm z \in \mathcal{X}$.
\end{lemma}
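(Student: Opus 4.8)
The plan is to reuse the rewriting of the mutual information from \cref{lemma:rewriting_of_mutual_info}. Following the notation in the proof of \cref{lemma:monotonicity_ratio}, set $R^2 \coloneqq \mu_n^2(\bm z)/\sigma_n^2(\bm z) \geq 0$ and $u \coloneqq \rho_\nu^2(\bm x)\rho_n^2(\bm x, \bm z) \in [0,1]$. Then \cref{eq:rewritten_mutual_info} reads $\hat{I}_n(\{\bm x, y\}; \Psi(\bm z)) = \ln(2)\left[e^{-c_1 R^2} - g(u)\right]$, where
\begin{equation}
g(u) \coloneqq \sqrt{\frac{1-u}{1+c_2 u}}\,\exp\left\{-\frac{c_1 R^2}{1+c_2 u}\right\}.
\end{equation}
Since the first term $e^{-c_1 R^2}$ does not depend on $u$, it suffices to prove that $g$ is non-increasing on $[0,1]$; monotonic increase of $\hat{I}_n$ in $u$ then follows immediately.

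Next I would take the logarithmic derivative of $g$. Writing $\ln g(u) = \tfrac12\ln(1-u) - \tfrac12\ln(1+c_2 u) - c_1 R^2/(1+c_2 u)$, differentiation gives
\begin{equation}
\frac{g'(u)}{g(u)} = -\frac{1}{2(1-u)} - \frac{c_2}{2(1+c_2 u)} + \frac{c_1 c_2 R^2}{(1+c_2 u)^2}.
\end{equation}
Here the key algebraic input is the identity $1 + c_2 = 2c_1$, which follows from the definition $c_2 = 2c_1 - 1$. Combining the first two terms over the common denominator $2(1-u)(1+c_2 u)$ produces the numerator $-(1+c_2 u) - c_2(1-u) = -(1+c_2) = -2c_1$, so that
\begin{equation}
\frac{g'(u)}{g(u)} = \frac{c_1}{1+c_2 u}\left[-\frac{1}{1-u} + \frac{c_2 R^2}{1+c_2 u}\right].
\end{equation}

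Finally I would read off the sign. Recall $c_1 > 0$ and $c_2 \in (-1,0)$, hence for $u \in [0,1)$ we have $1 + c_2 u \geq 1 + c_2 = 2c_1 > 0$, so the prefactor $c_1/(1+c_2 u)$ is strictly positive. Inside the bracket, $-1/(1-u) < 0$ because $u < 1$, and $c_2 R^2/(1+c_2 u) \leq 0$ because $c_2 < 0$, $R^2 \geq 0$, and $1 + c_2 u > 0$. Therefore $g'(u) < 0$ for all $u \in [0,1)$, and since $g$ extends continuously to $g(1) = 0$ (and stays non-negative), $g$ is non-increasing on all of $[0,1]$. This gives the claimed monotonicity of $\hat{I}_n(\{\bm x, y\}; \Psi(\bm z))$ in $\rho_\nu^2(\bm x)\rho_n^2(\bm x, \bm z)$. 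The only delicate points I anticipate are the careful bookkeeping of the signs of $c_1$ and $c_2$ together with the use of $1 + c_2 = 2c_1$ to collapse the first two terms, and a brief remark handling the boundary $u = 1$, where the logarithmic derivative diverges but $g$ itself vanishes smoothly; neither causes real difficulty.
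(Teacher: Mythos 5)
Your proof is correct, and it reaches the same conclusion as the paper by the same basic strategy (fixing $R^2$ and establishing the sign of the derivative with respect to $u=\tilde{\rho}^2$), but your execution is genuinely cleaner. The paper differentiates the full expression \cref{eq:rewritten_mutual_info} with respect to $\tilde{\rho}^2$, obtaining the unwieldy closed form \cref{eq:partial_rho}, and then rearranges the sign question into the threshold condition \cref{eq:condition_on_rho}, whose right-hand side must additionally be verified to exceed $1$ for all $R^2 \geq 0$ (a step the paper asserts rather than spells out; with $c_1 = 1/(\pi\ln 2)$ the ratio simplifies to $\bigl(R^2 + \tfrac{\pi\ln 2}{\pi\ln 2 - 2}\bigr)/(R^2+1) > 1$). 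You instead observe that only the factor $g(u)=\sqrt{(1-u)/(1+c_2u)}\,e^{-c_1R^2/(1+c_2u)}$ depends on $u$, take its logarithmic derivative, and use the identity $1+c_2 = 2c_1$ to collapse the first two terms, after which each remaining term has a manifest sign given $c_1>0$, $c_2\in(-1,0)$, and $1+c_2u \geq 2c_1 > 0$. This removes the need for any auxiliary inequality verification and handles the boundary $u=1$ explicitly via $g(1)=0$, which the paper's derivative-based argument glosses over. Your computations check out (the collapsed numerator $-(1+c_2u)-c_2(1-u)=-(1+c_2)$ is correct), so the proposal is a valid and arguably more transparent proof of \cref{lemma:monotonicity_rho}.
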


\begin{proof}
As in the proof of \cref{lemma:monotonicity_ratio}, let us define $R^2 \coloneqq \mu^2_n(\bm z) / \sigma^2_n(\bm z)$ and $\tilde{\rho}^2 \coloneqq \rho_\nu^2(\bm x)\rho^2_n(\bm x, \bm z)$.
Analogously to the proof of \cref{lemma:monotonicity_ratio}, we compute the partial derivative of $\hat{I}_n\left(\{\bm x, y\}; \Psi(\bm z)\right)$ with respect to $\tilde{\rho}^2$ and show that it is strictly positive $\forall \tilde{\rho} \in [0, 1]$ and $\forall R$. The partial derivative is given by:
\begin{equation}\label{eq:partial_rho}
\frac{\partial}{\partial \tilde{\rho}^2}\hat{I}_n\left(\{\bm x, y\}; \Psi(\bm z)\right) = -\frac{1}{2}\frac{\exp\left\{\frac{c_1R^2}{|c_2|\tilde{\rho}^2 - 1}\right\}\left[|c_2|^2\tilde{\rho}^2 + |c_2|\left(-2c_1R^2(\tilde{\rho}^2 - 1) - \tilde{\rho}^2 - 1 \right) + 1\right]}{\sqrt{\frac{\tilde{\rho}^2 - 1}{|c_2|\tilde{\rho}^2 - 1}} \left(|c_2|\tilde{\rho}^2 - 1\right)^3}
\end{equation}
we now have to ask when this derivative is non positive. After remembering that $|c_2| < 1$ and that $\tilde{\rho}^2 \in [0, 1]$, we see that the denominator is always negative; we also have that the exponential term in the numerator is always positive. These two facts, together with the minus sign, imply that, for \cref{eq:partial_rho} to be $\leq 0$, we need the term inside the square brackets to be non positive. This requirement leads to the condition:
\begin{equation}\label{eq:condition_on_rho}
\tilde{\rho}^2 \geq \frac{c_1\pi^2\ln^2(2)R^2 - 2c_1\pi\ln(2)R^2 + \pi\ln(2)}{\left(\pi\ln(2) - 2\right) \left(c_1\pi\ln(2)R^2 + 1\right)}
\end{equation}
where we have used the explicit value of $c_2$: $2c_1 - 1$. Finally, the rhs of \cref{eq:condition_on_rho} is always above 1 for $R^2 \geq 0$, which concludes the proof, since $\tilde{\rho}^2 \in [0, 1]$.
\end{proof}

\begin{lemma}\label{lemma:rho_nu_increasing_in_sigma}
For every value of $\sigma_\nu^2 > 0$, $\rho_\nu^2(\bm x)$ as defined in \cref{eq:rho_nu_def} is monotonically increasing in $\sigma_n(\bm x)$. 
\end{lemma}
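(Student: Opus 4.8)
The plan is to treat $\rho_\nu^2(\bm x)$ purely as a scalar function of the single variable $t \coloneqq \sigma_n^2(\bm x) \geq 0$, since by \cref{eq:rho_nu_def} it depends on $\bm x$ only through $\sigma_n^2(\bm x)$. Concretely, I would define $g(t) \coloneqq t / (\sigma_\nu^2 + t)$ and observe that $\rho_\nu^2(\bm x) = g(\sigma_n^2(\bm x))$. Since $t \mapsto \sigma_n^2(\bm x)$ is itself a monotonically increasing function of $\sigma_n(\bm x) \geq 0$, it suffices to show that $g$ is monotonically increasing on $[0,\infty)$; the claim then follows by composition of monotone maps.

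To show $g$ is increasing, the cleanest route is the algebraic rewriting
\begin{equation}
g(t) = 1 - \frac{\sigma_\nu^2}{\sigma_\nu^2 + t},
\end{equation}
from which monotonicity is immediate: for $\sigma_\nu^2 > 0$ fixed, the map $t \mapsto \sigma_\nu^2 + t$ is strictly increasing and positive, hence $t \mapsto \sigma_\nu^2/(\sigma_\nu^2 + t)$ is strictly decreasing, and therefore $g$ is strictly increasing. Alternatively one can simply differentiate, $g'(t) = \sigma_\nu^2 / (\sigma_\nu^2 + t)^2 > 0$, which gives the same conclusion and also makes the chain-rule computation $\frac{\partial}{\partial \sigma_n(\bm x)} \rho_\nu^2(\bm x) = 2\sigma_n(\bm x)\,\sigma_\nu^2 / (\sigma_\nu^2 + \sigma_n^2(\bm x))^2 \geq 0$ completely explicit.

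There is no real obstacle here: the statement is an elementary one-variable monotonicity fact, and the only thing worth being careful about is recording that the dependence on $\bm x$ enters solely through $\sigma_n^2(\bm x)$ so that the reduction to a scalar function is legitimate, and that $\sigma_\nu^2 > 0$ is exactly what is needed to get strict (hence, a fortiori, weak) monotonicity. I would keep the proof to two or three lines using the rewriting above.
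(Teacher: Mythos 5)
Your proposal is correct and matches the paper's argument: the paper likewise differentiates $\rho_\nu^2$ with respect to $\sigma_n^2(\bm x)$ and obtains the same positive derivative $\sigma_\nu^2/(\sigma_n^2(\bm x)+\sigma_\nu^2)^2$, with your algebraic rewriting $1 - \sigma_\nu^2/(\sigma_\nu^2+t)$ being only a cosmetic variant. Your explicit remark that the dependence on $\sigma_n(\bm x)$ enters through the increasing map $\sigma_n \mapsto \sigma_n^2$ is a small but welcome extra step of care that the paper leaves implicit.
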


\begin{proof}
As for \cref{lemma:monotonicity_ratio,lemma:monotonicity_rho}, we compute the derivative of $\rho_\nu^2(\bm x)$ with respect to $\sigma_n^2(\bm x)$ and show that it is strictly positive $\forall \sigma_n^2(\bm x)$:
\begin{equation}
\frac{\partial}{\partial\sigma_n^2(\bm x)}\rho_\nu^2(\bm x) = \frac{\sigma_\nu^2}{\left(\sigma_n^2(\bm x) + \sigma_\nu^2\right)^2}
\end{equation}
which is obviously strictly positive $\forall \sigma_\nu^2 > 0$ and $\forall \sigma_n^2(\bm x)$.
\end{proof}

\begin{lemma}\label{lemma:monotonicity}
$\forall \bm x, \bm z \in \mathcal{X}$, $\forall n$, $\hat{I}_n\left(\{\bm x, y\}; \Psi(\bm z)\right)$ as given by \cref{eq:approximated_psi_entropy} and \cref{eq:averaged_post_measurement_entropy} is monotonically increasing in $\hat{H}_n\left[\Psi(\bm z)\right]$, $\rho_n^2(\bm x, \bm z)$ and $\sigma_n^2(\bm x)$.  
\end{lemma}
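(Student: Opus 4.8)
The plan is to obtain this statement as a direct composition of \cref{lemma:monotonicity_ratio,lemma:monotonicity_rho,lemma:rho_nu_increasing_in_sigma}, the point being to track exactly how each of the three quantities enters $\hat I_n$. First I would rewrite $\hat I_n(\{\bm x, y\}; \Psi(\bm z))$ in the form \cref{eq:rewritten_mutual_info} using \cref{lemma:rewriting_of_mutual_info}, and note that in this form the whole expression depends on $\bm x$ and $\bm z$ only through the two scalars $R^2 \coloneqq \mu_n^2(\bm z)/\sigma_n^2(\bm z)$ and $\tilde\rho^2 \coloneqq \rho_\nu^2(\bm x)\,\rho_n^2(\bm x,\bm z)$, and that $\rho_\nu^2(\bm x)$ in turn depends on $\bm x$ only through $\sigma_n^2(\bm x)$ via \cref{eq:rho_nu_def}. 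This reduces the lemma to analysing the composite maps $R^2 \mapsto \hat I_n$, $\rho_n^2(\bm x,\bm z) \mapsto \tilde\rho^2 \mapsto \hat I_n$, and $\sigma_n^2(\bm x) \mapsto \rho_\nu^2(\bm x) \mapsto \tilde\rho^2 \mapsto \hat I_n$, each with the remaining coordinates held fixed.

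For $\hat H_n[\Psi(\bm z)]$ I would use that, by \cref{eq:approximated_psi_entropy}, $\hat H_n[\Psi(\bm z)] = \ln(2)\exp\{-c_1 R^2\}$ is a strictly decreasing, hence invertible, function of $R^2$ on $[0,\infty)$; therefore ``monotonically increasing in $\hat H_n[\Psi(\bm z)]$'' is literally the same statement as ``monotonically decreasing in $R^2$'', which is \cref{lemma:monotonicity_ratio}. For $\rho_n^2(\bm x,\bm z)$: with $\sigma_n^2(\bm x)$ (and hence $\rho_\nu^2(\bm x) \in [0,1]$) fixed, $\tilde\rho^2$ is nondecreasing in $\rho_n^2(\bm x,\bm z)$, so composing with \cref{lemma:monotonicity_rho} (monotone increase of $\hat I_n$ in $\tilde\rho^2$ on $[0,1]$) gives the claim. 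For $\sigma_n^2(\bm x)$: \cref{lemma:rho_nu_increasing_in_sigma} gives that $\rho_\nu^2(\bm x)$ increases in $\sigma_n^2(\bm x)$, then $\tilde\rho^2 = \rho_\nu^2(\bm x)\,\rho_n^2(\bm x,\bm z)$ increases in $\rho_\nu^2(\bm x)$ with $\rho_n^2(\bm x,\bm z)$ fixed, and a final appeal to \cref{lemma:monotonicity_rho} closes the chain.

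I do not expect a real obstacle here --- it is essentially bookkeeping --- but two small points deserve care. The first is that the three named quantities really do act as independent coordinates in \cref{eq:rewritten_mutual_info}, each variable while the other two are frozen; this is precisely why the rewriting of \cref{lemma:rewriting_of_mutual_info} is invoked, since in the raw form \cref{eq:averaged_post_measurement_entropy} both $\sigma_n^2(\bm x)$ and $\sigma_\nu^2$ appear in several places simultaneously and the separation is less transparent. The second is the degenerate boundary behaviour $\sigma_n^2(\bm x) = 0$ (equivalently $\rho_\nu^2(\bm x) = 0$) or $\rho_n^2(\bm x,\bm z) = 0$, where $\tilde\rho^2 = 0$ and $\hat I_n$ is flat, so the monotonicities become non-strict there; this is consistent with the statement, which asserts only monotone (not strictly monotone) increase. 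I would conclude by recording the result as: $\hat I_n(\{\bm x, y\}; \Psi(\bm z))$ is monotonically increasing in each of $\hat H_n[\Psi(\bm z)]$, $\rho_n^2(\bm x,\bm z)$, and $\sigma_n^2(\bm x)$ when the other two are held fixed.
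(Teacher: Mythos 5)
Your proposal is correct and follows essentially the same route as the paper's proof: combining \cref{lemma:monotonicity_ratio,lemma:monotonicity_rho,lemma:rho_nu_increasing_in_sigma} with the observation that $\hat{H}_n[\Psi(\bm z)]$ is a decreasing function of $\mu_n^2(\bm z)/\sigma_n^2(\bm z)$ and that $\tilde\rho^2 = \rho_\nu^2(\bm x)\rho_n^2(\bm x,\bm z)$ is monotone in each factor. Your write-up is simply a more explicit bookkeeping of the same composition of monotone maps.
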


\begin{proof}
The result follows by combining \cref{lemma:monotonicity_ratio,lemma:monotonicity_rho,lemma:rho_nu_increasing_in_sigma} with the fact that $\hat{H}_n\left[\Psi(\bm z)\right]$ is monotonically decreasing in $\left(\mu_n(\bm z)/\sigma_n(\bm z)\right)^2$ and that $\rho_n^2\rho_\nu^2$ is clearly monotonic in $\rho_\nu^2$. 
\end{proof}

\begin{lemma}\label{lemma:entropy_variation_always_positive}
For any finite $\mu^2_n(\bm z) / \sigma^2_n(\bm z)$, the average entropy variation \cref{eq:rewritten_mutual_info} is non negative for all values of $\rho_n^2(\bm x, \bm z)$ and $\rho_\nu^2(\bm x)$, and it is zero iff $\rho_\nu^2(\bm x)\rho_n^2(\bm x, \bm z) = 0$.
\end{lemma}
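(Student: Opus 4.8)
The plan is to reduce this claim to the monotonicity result already established in \cref{lemma:monotonicity_rho}. First I would note that the expression \eqref{eq:rewritten_mutual_info} is, for any fixed finite value of $\mu_n^2(\bm z)/\sigma_n^2(\bm z)$, a smooth function of the single combined variable $\tilde\rho^2 \coloneqq \rho_\nu^2(\bm x)\rho_n^2(\bm x,\bm z)$. Since $\rho_\nu^2(\bm x) = \sigma_n^2(\bm x)/(\sigma_\nu^2 + \sigma_n^2(\bm x)) \in [0,1)$ and $\rho_n^2(\bm x,\bm z) \in [0,1]$, we have $\tilde\rho^2 \in [0,1)$, and because $c_2 = 2c_1 - 1 \in (-1,0)$ the denominator $1 + c_2\tilde\rho^2$ stays bounded away from zero and the radicand $(1-\tilde\rho^2)/(1+c_2\tilde\rho^2)$ is non-negative on this range, so every term in \eqref{eq:rewritten_mutual_info} is well-defined.

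The next step is to evaluate the expression at the boundary $\tilde\rho^2 = 0$. There the square root becomes $\sqrt{1} = 1$, the two exponentials collapse to the common value $\exp\{-c_1\mu_n^2(\bm z)/\sigma_n^2(\bm z)\}$, and the bracket vanishes identically, so \eqref{eq:rewritten_mutual_info} equals $0$ at $\tilde\rho^2 = 0$ regardless of the value of $\mu_n^2(\bm z)/\sigma_n^2(\bm z)$.

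Then I would invoke \cref{lemma:monotonicity_rho}, whose proof in fact establishes that the partial derivative of $\hat{I}_n(\{\bm x,y\};\Psi(\bm z))$ with respect to $\tilde\rho^2$ is \emph{strictly} positive for all $\tilde\rho \in [0,1]$ and all values of $\mu_n^2(\bm z)/\sigma_n^2(\bm z)$. Combining this with the boundary value computed above: for $\tilde\rho^2 > 0$ the expression is strictly larger than its value at $\tilde\rho^2 = 0$, hence strictly positive, while at $\tilde\rho^2 = 0$ it equals $0$. This proves non-negativity for all admissible $\rho_n^2(\bm x,\bm z)$ and $\rho_\nu^2(\bm x)$, and since $\tilde\rho^2 = \rho_\nu^2(\bm x)\rho_n^2(\bm x,\bm z)$ vanishes exactly when $\rho_\nu^2(\bm x)\rho_n^2(\bm x,\bm z) = 0$, it yields the claimed characterization of the zero set.

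There is no substantial obstacle: the argument is essentially a corollary of \cref{lemma:monotonicity_rho}. The only points requiring care are (i) confirming that $\tilde\rho^2$ ranges within $[0,1)$ so that all expressions stay well-defined, and (ii) using the strict version of the monotonicity in \cref{lemma:monotonicity_rho} — which its proof does deliver, via strict positivity of the derivative — in order to obtain the ``iff'' direction of the zero characterization rather than merely non-negativity.
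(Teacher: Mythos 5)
Your argument is exactly the paper's: both deduce the result from \cref{lemma:monotonicity_rho} together with the observations that \cref{eq:rewritten_mutual_info} vanishes at $\rho_\nu^2(\bm x)\rho_n^2(\bm x,\bm z)=0$ and that this product is never negative. Your added remarks on well-definedness and on using the strict positivity of the derivative for the ``iff'' direction are correct refinements of the same route, not a different proof.
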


\begin{proof}
The result follows immediately from \cref{lemma:monotonicity_rho}, after noticing that for $\rho_\nu^2(\bm x)\rho_n^2(\bm x, \bm z) = 0$ the mutual information \cref{eq:rewritten_mutual_info} is zero and that $\rho_\nu^2(\bm x)\rho_n^2(\bm x, \bm z)$ is never negative.
\end{proof}

\begin{lemma}\label{lemma:delta_h_is_bounded_above}
$\forall n$, $\forall \bm x \in S_n$, $\forall \bm z \in \mathcal{X}$, it holds that:
\begin{equation}
\hat{I}_n\left(\{\bm x, y\}; \Psi(\bm z)\right) \leq \ln(2) \frac{\sigma_n^2(\bm x)}{\sigma_\nu^2}
\end{equation}
\end{lemma}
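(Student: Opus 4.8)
The plan is to collapse the dependence on $\bm z$ by exploiting the monotonicities already established, and then to finish with two elementary scalar inequalities. Writing $\hat I_n(\{\bm x,y\};\Psi(\bm z))$ in the form \cref{eq:rewritten_mutual_info} of \cref{lemma:rewriting_of_mutual_info}, with $R^2\coloneqq\mu_n^2(\bm z)/\sigma_n^2(\bm z)$ and $\tilde\rho^2\coloneqq\rho_\nu^2(\bm x)\rho_n^2(\bm x,\bm z)$, I would first invoke \cref{lemma:monotonicity_ratio}: since the mutual information is decreasing in $R^2\ge 0$, it is maximized at $R^2=0$, which gives
\begin{equation*}
\hat I_n(\{\bm x,y\};\Psi(\bm z))\;\le\;\ln(2)\left[1-\sqrt{\tfrac{1-\tilde\rho^2}{1+c_2\tilde\rho^2}}\,\right].
\end{equation*}

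Next I would remove the factor $\rho_n^2(\bm x,\bm z)\le 1$: by \cref{lemma:monotonicity_rho} the right-hand side above is increasing in $\tilde\rho^2$, and $\tilde\rho^2\le\rho_\nu^2(\bm x)$, so the bound becomes $\ln(2)\bigl[1-\sqrt{(1-\rho_\nu^2(\bm x))/(1+c_2\rho_\nu^2(\bm x))}\bigr]$. Since $c_2=2c_1-1\in(-1,0)$ and $\rho_\nu^2(\bm x)\in[0,1)$, we have $1+c_2\rho_\nu^2(\bm x)\le 1$ (and it stays positive), so the fraction under the root is at least $1-\rho_\nu^2(\bm x)$; hence $\hat I_n\le\ln(2)\bigl[1-\sqrt{1-\rho_\nu^2(\bm x)}\bigr]$. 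Finally, substituting $1-\rho_\nu^2(\bm x)=\sigma_\nu^2/(\sigma_\nu^2+\sigma_n^2(\bm x))$ from \cref{eq:rho_nu_def} and setting $t\coloneqq\sigma_n^2(\bm x)/\sigma_\nu^2\ge 0$, I would close with $1-1/\sqrt{1+t}\le 1-1/(1+t)=t/(1+t)\le t$, where the first step uses $\sqrt{1+t}\le 1+t$. This chain delivers $\hat I_n(\{\bm x,y\};\Psi(\bm z))\le\ln(2)\,\sigma_n^2(\bm x)/\sigma_\nu^2$, as claimed.

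I do not expect a genuine obstacle: the argument is essentially a bookkeeping exercise over the monotonicity lemmas already in hand, supplemented by the trivial bound $\sqrt{1+t}\le 1+t$. The only place warranting care is the sign of $c_2$: because $c_2<0$, replacing $1+c_2\tilde\rho^2$ by $1$ weakens the denominator in the correct (upper-bounding) direction rather than the wrong one, and one should note $1+c_2\tilde\rho^2\ge 1+c_2>0$ so that every square root stays well defined throughout. It is also worth remarking that the hypothesis $\bm x\in S_n$ is not actually needed -- the inequality holds for all $\bm x\in\mathcal X$ -- and that the factor $\ln(2)$ in the statement is not tight, since the same argument even yields the stronger $t/(1+t)$ bound.
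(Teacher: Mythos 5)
Your proof is correct and follows essentially the same route as the paper's: both start by invoking \cref{lemma:monotonicity_ratio} to reduce to the $\mu_n^2(\bm z)/\sigma_n^2(\bm z)=0$ case, then discard the $c_2$ term using $c_2\in(-1,0)$, drop $\rho_n^2(\bm x,\bm z)\le 1$, and close with elementary square-root estimates yielding $\ln(2)\,\sigma_n^2(\bm x)/\sigma_\nu^2$. The only differences are the ordering of these steps and your slightly sharper final inequality $1-1/\sqrt{1+t}\le t/(1+t)$, which coincides with the paper's intermediate bound $\ln(2)\rho_\nu^2(\bm x)$ before its last loosening step.
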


\begin{proof}
This can be shown directly with the following inequality chain:
\begin{equation}
\begin{split}
\hat{I}_n\left(\{\bm x, y\}; \Psi(\bm z)\right) &\leq \ln(2)\left[1 - \sqrt{\frac{1 - \rho_\nu^2(\bm x)\rho_n^2(\bm x, \bm z)}{1 + c_2\rho_\nu^2(\bm x)\rho_n^2(\bm x, \bm z)}}\right] \\
&\stackrel{c_2 \in (-1, 0)}{\leq} \ln(2)\left(1 - \sqrt{1 - \rho_\nu^2(\bm x)\rho_{n}^2(\bm x, \bm z)}\right) \\
&\stackrel{\rho_n\rho_n^2 \in [0, 1]}{\leq} \ln(2)\left(\rho_\nu^2(\bm x)\rho_{n}^2(\bm x, \bm z)\right)\\\
&\stackrel{\rho_n^2 \in [0, 1]}{\leq} \ln(2)\rho_\nu^2(\bm x) \\
&\leq \ln(2)\frac{\sigma_n^2(\bm x)}{\sigma_\nu^2}
\end{split}
\end{equation}
where the first inequality follows from \cref{lemma:monotonicity_ratio}.
\end{proof}

\begin{lemma}\label{lemma:delta_h_is_bounded_below}
$\forall n$, let $\tilde{\bm x} \in \argmax_{S_n}\sigma_n^2(\bm x)$ and let $M^2 \coloneqq \max_{S_n}\mu_n^2(\bm x)$, and $\tilde{\sigma}^2 \coloneqq \sigma_n^2(\tilde{\bm x})$, then it holds that:
\begin{equation}\label{eq:delta_h_lower_bound}
\hat{I}_n\left(\{\bm x_{n+1}, y_{n+1}\}; \Psi(\bm z_{n+1})\right) \geq b(\tilde{\sigma}^2)
\end{equation}
where $b$ is given by
\begin{equation}\label{eq:b_definition}
b(\eta) \coloneqq \ln(2)\exp\left\{-c_1\frac{M^2}{\eta}\right\}\left[1 - \sqrt{\frac{\sigma_\nu^2}{2c_1\eta + \sigma_\nu^2}}\right].
\end{equation}
\end{lemma}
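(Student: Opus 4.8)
The plan is to exploit that the \ourmethod rule \cref{eq:x_n_+_1} maximizes $\hat I_n(\{\bm x,y\};\Psi(\bm z))$ jointly over $\bm x\in S_n$ and $\bm z\in\mathcal{X}$, so the value it attains dominates its value at any fixed feasible pair. Since $\tilde{\bm x}\in S_n\subseteq\mathcal{X}$, choosing $\bm x=\bm z=\tilde{\bm x}$ gives
\[
\hat I_n\left(\{\bm x_{n+1},y_{n+1}\};\Psi(\bm z_{n+1})\right)\;\geq\;\hat I_n\left(\{\tilde{\bm x},y\};\Psi(\tilde{\bm x})\right),
\]
and it then suffices to lower-bound the right-hand side by $b(\tilde\sigma^2)$.

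For this I would use the rewriting in \cref{lemma:rewriting_of_mutual_info}. With $\bm x=\bm z=\tilde{\bm x}$ the correlation $\rho_n(\tilde{\bm x},\tilde{\bm x})=1$, so the product appearing in \cref{eq:rewritten_mutual_info} collapses to $\rho_\nu^2(\tilde{\bm x})\rho_n^2(\tilde{\bm x},\tilde{\bm x})=\rho_\nu^2(\tilde{\bm x})=\tilde\sigma^2/(\sigma_\nu^2+\tilde\sigma^2)$ by \cref{eq:rho_nu_def}. A short computation then yields $1+c_2\rho_\nu^2(\tilde{\bm x})=(\sigma_\nu^2+2c_1\tilde\sigma^2)/(\sigma_\nu^2+\tilde\sigma^2)$ and $(1-\rho_\nu^2(\tilde{\bm x}))/(1+c_2\rho_\nu^2(\tilde{\bm x}))=\sigma_\nu^2/(2c_1\tilde\sigma^2+\sigma_\nu^2)$, so that the square-root prefactor of the subtracted term in \cref{eq:rewritten_mutual_info} already matches the one in \cref{eq:b_definition}.

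Next I would invoke \cref{lemma:monotonicity_ratio}: $\hat I_n$ is decreasing in $\mu_n^2(\bm z)/\sigma_n^2(\bm z)$. Since $\mu_n^2(\tilde{\bm x})\leq M^2$ while $\sigma_n^2(\tilde{\bm x})=\tilde\sigma^2$ is held fixed, replacing $\mu_n^2(\tilde{\bm x})$ by $M^2$ can only decrease the expression. After this replacement both exponentials in \cref{eq:rewritten_mutual_info} carry the common factor $\exp\{-c_1 M^2/\tilde\sigma^2\}$, and relative to it the subtracted term picks up the extra factor $\exp\{-c_1(M^2/\tilde\sigma^2)\left((1+c_2\rho_\nu^2(\tilde{\bm x}))^{-1}-1\right)\}$. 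The inner quantity equals $-c_2\rho_\nu^2(\tilde{\bm x})/(1+c_2\rho_\nu^2(\tilde{\bm x}))$, which is nonnegative because $c_2<0$ (as $c_1=1/(\pi\ln 2)<1/2$) and $1+c_2\rho_\nu^2(\tilde{\bm x})>0$; hence this extra factor is at most $1$. Bounding it by $1$ and factoring $\ln(2)\exp\{-c_1 M^2/\tilde\sigma^2\}$ out of \cref{eq:rewritten_mutual_info} leaves precisely the bracket of $b(\tilde\sigma^2)$, which establishes \cref{eq:delta_h_lower_bound}.

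The proof is thus a substitution followed by two monotonicity arguments. The only step requiring genuine care is the last one, where one must keep track of which exponential factor is being weakened and verify that the sign of $c_2$ makes the correction harmless rather than flipping the inequality; everything else is a direct application of \cref{lemma:rewriting_of_mutual_info,lemma:monotonicity_ratio} together with the feasible choice $\bm x=\bm z=\tilde{\bm x}$.
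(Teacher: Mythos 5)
Your proposal is correct and follows essentially the same route as the paper's proof: it uses the maximality of the acquisition rule \cref{eq:x_n_+_1} with the feasible choice $\bm x = \bm z = \tilde{\bm x}$, and then lower-bounds $\hat I_n\left(\{\tilde{\bm x},y\};\Psi(\tilde{\bm x})\right)$ via $\mu_n^2(\tilde{\bm x}) \leq M^2$ and the sign of $c_2$, exactly as in \cref{appendix_proofs}. Your version merely spells out the two monotonicity steps (via \cref{lemma:monotonicity_ratio} and the bound on the extra exponential factor) and the algebra showing $\bigl(1-\rho_\nu^2(\tilde{\bm x})\bigr)/\bigl(1+c_2\rho_\nu^2(\tilde{\bm x})\bigr) = \sigma_\nu^2/(2c_1\tilde\sigma^2+\sigma_\nu^2)$ more explicitly than the paper does.
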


\begin{proof}
This Lemma is only non-trivial in case the posterior mean is bounded on $S_n$, otherwise, if we admit $|\mu_n(\bm x)| \to \infty$, then we just recover the result that the average information gain is positive.

Now, moving to the proof, as first thing we recall that our algorithm always selects the $\argmax_{\bm x \in S_n}$ of $\hat{I}_n\left(\{\bm x, y\}; \Psi(\bm z)\right)$ as next  parameter to evaluate, meaning that, by construction, $\forall \bm x \in S_n$ and $\forall \bm z \in \mathcal{X}$, it holds that:
\begin{equation}
\hat{I}_n\left(\{\bm x_{n+1}, y_{n+1}\}; \Psi(\bm z_{n+1})\right) \geq \hat{I}_n\left(\{\bm x, y\}; \Psi(\bm z)\right)
\end{equation}
This implies, in particular, that $\hat{I}_n\left(\{\bm x_{n+1}, y_{n+1}\}; \Psi(\bm z_{n+1})\right) \geq \hat{I}_n\left(y(\tilde{\bm x}); \Psi(\tilde{\bm x})\right)$, since, by definition, $\tilde{\bm x} \in S_n$ and is, therefore, always feasible. Writing this condition explicitly, we obtain:
\begin{equation}
\begin{split}
\hat{I}_n\left(\{\bm x_{n+1}, y_{n+1}\}; \Psi(\bm z_{n+1})\right) &\geq \hat{I}_n\left(y(\tilde{\bm x}); \Psi(\tilde{\bm x})\right) \\
&= \ln(2)\left[\exp\left\{-c_1\frac{\mu_n^2(\tilde{\bm x})}{\tilde{\sigma}^2}\right\} - \sqrt{\frac{1 - \rho_\nu^2(\tilde{\bm x})}{1 + c_2\rho_\nu^2(\tilde{\bm x})}}\exp\left\{-c_1\frac{\mu_n^2(\tilde{\bm x})}{\tilde{\sigma}^2}\frac{1}{1 + c_2\rho_\nu^2(\tilde{\bm x})}\right\}\right] \\
&\geq \ln(2)\exp\left\{-c_1\frac{M^2}{\tilde{\sigma}^2}\right\}\left[1 - \sqrt{\frac{1 - \rho_\nu^2(\tilde{\bm x})}{1 + c_2\rho_\nu^2(\tilde{\bm x})}}\right] \\
&= b(\tilde{\sigma}^2)
\end{split}
\end{equation}
where we have used the fact that $c_2 \in (-1, 0)$ and that $\rho_\nu^2(\tilde{\bm x}) \in [0, 1]$.
\end{proof}

\begin{lemma}\label{lemma:b_is_increasing}
The function $b$ defined in \cref{eq:b_definition} is monotonically increasing for positive arguments.
\end{lemma}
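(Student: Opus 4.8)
The plan is to write $b(\eta) = \ln(2)\, g(\eta)\, h(\eta)$ with
\[
g(\eta) \coloneqq \exp\!\left\{-c_1\frac{M^2}{\eta}\right\}, \qquad
h(\eta) \coloneqq 1 - \sqrt{\frac{\sigma_\nu^2}{2c_1\eta + \sigma_\nu^2}},
\]
and to show that both $g$ and $h$ are strictly positive and strictly increasing on $(0,\infty)$; since $\ln(2) > 0$, a product of strictly positive strictly increasing functions is then strictly increasing, which gives the claim.

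For $g$: recall $c_1 = 1/(\pi\ln 2) > 0$ and $M^2 > 0$, so $\eta \mapsto -c_1 M^2/\eta$ has derivative $c_1 M^2/\eta^2 > 0$ and is strictly increasing on $(0,\infty)$; composing with the increasing map $\exp$ shows $g$ is strictly increasing, and $g(\eta) > 0$ trivially. For $h$: for $\eta > 0$ we have $2c_1\eta + \sigma_\nu^2 > \sigma_\nu^2 > 0$, hence $0 < \sigma_\nu^2/(2c_1\eta+\sigma_\nu^2) < 1$, so its square root lies in $(0,1)$ and $h(\eta)\in(0,1)$; in particular $h > 0$. Furthermore $\eta \mapsto \sigma_\nu^2/(2c_1\eta+\sigma_\nu^2)$ has derivative $-2c_1\sigma_\nu^2/(2c_1\eta+\sigma_\nu^2)^2 < 0$, so it is strictly decreasing, its square root is strictly decreasing, and therefore $h$ is strictly increasing. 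Finally, for $0 < \eta_1 < \eta_2$ one chains $g(\eta_1)h(\eta_1) < g(\eta_2)h(\eta_1) < g(\eta_2)h(\eta_2)$ using positivity at each step, and multiplies by $\ln 2 > 0$.

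There is no real obstacle here; the only point worth flagging is that a product of increasing functions need not be increasing in general, so the positivity of both $g$ and $h$ (which is why the lemma restricts to positive arguments) is exactly what makes the factored argument go through. An alternative would be to differentiate $b$ directly and verify the sign of $b'$, but the derivative is messier and the monotone-factorization argument above is cleaner, so that is the route I would take.
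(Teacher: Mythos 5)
Your proof is correct and follows essentially the same route as the paper's: the paper likewise observes that the exponential factor and the bracketed term are each increasing in $\eta>0$ (and then backs this up with the explicit derivative computation you mention as an alternative). Your added care in noting the positivity of both factors---which is what actually licenses the product argument---is a point the paper glosses over, but it is the same decomposition and the same conclusion.
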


\begin{proof}
By looking at the definition of $b$
\begin{equation}
b(\eta) \coloneqq \ln(2)\exp\left\{-c_1\frac{M^2}{\eta}\right\}\left[1 - \sqrt{\frac{\sigma_\nu^2}{2c_1\eta + \sigma_\nu^2}}\right]
\end{equation}
we immediately see that both the exponential factor and the term in square brackets are monotonically increasing with the argument $\eta$, if this is positive, so that $b$ is also monotonically increasing with $\eta > 0$. This can also be shown formally by computing the derivative:
\begin{equation}
\frac{1}{\ln(2)}\frac{\,db(\eta)}{\,d\eta} = \frac{1}{\eta^2} M^2 c_1 e^{-c_1 M^2 / \eta} \left(1 - \sqrt{\frac{\sigma_\nu^2}{2c_1\eta + \sigma_\nu^2}}\right) + \frac{\sigma_\nu^2 c_1 e^{-c_1 M^2 / \eta}}{\sqrt{\frac{\sigma_\nu^2}{2c_1\eta + \sigma_\nu^2}} \left(2c_1\eta + \sigma_\nu^2\right)^2}
\end{equation}
and by noticing that, for positive $\eta$, it is always positive, since $c_1 > 0$.
\end{proof}

\begin{corollary}
From \cref{lemma:b_is_increasing} it follows immediately that, for positive arguments, $b^{-1}$ exists and is also monotonically increasing. In \cref{fig:b_example} we show some examples of the function $b^{-1}$ for $M = 0.5$ and various values of $\sigma_\nu^2$.
\end{corollary}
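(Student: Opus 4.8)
The plan is to obtain everything directly from \cref{lemma:b_is_increasing}, which establishes that $b$ is strictly monotonically increasing on $(0, \infty)$; indeed, the derivative computed in that lemma is strictly positive for every positive $\eta$. First I would record that strict monotonicity forces injectivity: if $\eta_1 \neq \eta_2$ are two distinct positive arguments, then without loss of generality $\eta_1 < \eta_2$, so \cref{lemma:b_is_increasing} gives $b(\eta_1) < b(\eta_2)$ and in particular $b(\eta_1) \neq b(\eta_2)$. An injective function admits a well-defined inverse on its image, so $b^{-1}$ exists as a map from the range $b((0,\infty))$ back onto $(0, \infty)$, which is the first of the two claims.

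The second step is to verify that $b^{-1}$ inherits monotonicity, which I would argue by the standard order-preserving property of the inverse of an increasing map. Take any $y_1 < y_2$ in the range of $b$ and write $y_i = b(\eta_i)$ with $\eta_i = b^{-1}(y_i)$ for $i = 1, 2$. Suppose toward a contradiction that $\eta_1 \geq \eta_2$; then \cref{lemma:b_is_increasing} yields $b(\eta_1) \geq b(\eta_2)$, i.e.\@ $y_1 \geq y_2$, contradicting $y_1 < y_2$. Hence $\eta_1 < \eta_2$, that is $b^{-1}(y_1) < b^{-1}(y_2)$, which is precisely monotonicity of $b^{-1}$.

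I do not expect a genuine obstacle here, since the statement is a textbook consequence of strict monotonicity. The only point that merits a moment's care is that the monotonicity asserted by \cref{lemma:b_is_increasing} be \emph{strict} rather than merely weak. This is guaranteed because the derivative derived in that lemma is strictly positive for all $\eta > 0$ (both summands are strictly positive, as $c_1 > 0$), so $b$ is in fact a strictly increasing and continuous function. Continuity additionally ensures that its image is an interval, so that $b^{-1}$ is defined on an interval, legitimizing the use of $b^{-1}$ in the definition of $N_\varepsilon$ in \cref{eq:N_epsilon}.
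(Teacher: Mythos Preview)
Your proposal is correct and matches the paper's approach: the paper simply states the corollary as an immediate consequence of \cref{lemma:b_is_increasing} without giving an explicit argument, and you have supplied the standard details (strict monotonicity $\Rightarrow$ injectivity $\Rightarrow$ existence of $b^{-1}$, and order-preservation of the inverse). There is nothing to add.
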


\begin{figure}
\hfill
\begin{center}
     \includegraphics[height=0.17\textheight]{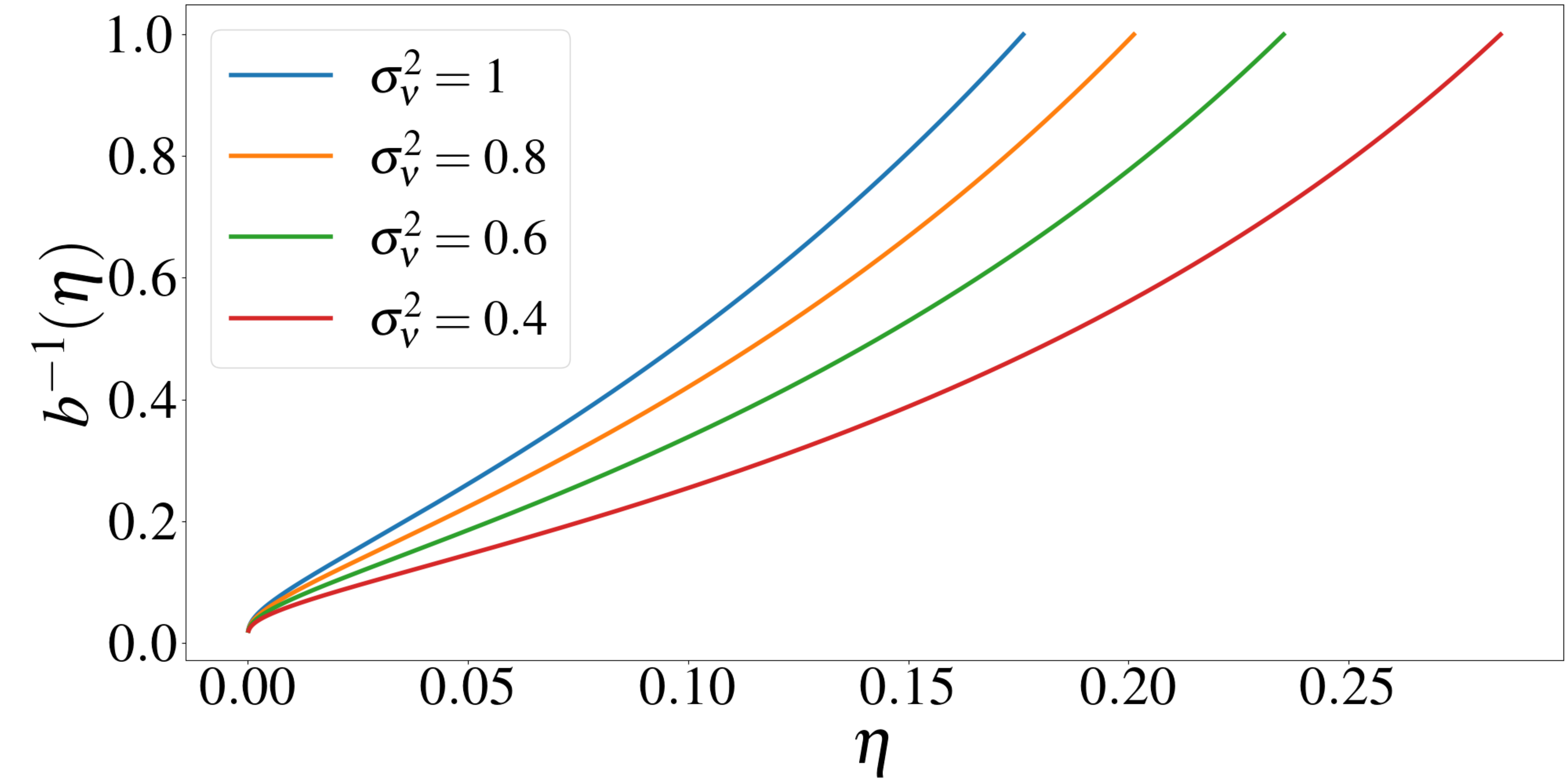}
         \caption{Example plots of the function $b^{-1}$ introduced in \cref{lemma:delta_h_is_bounded_below} for $M=0.5$ and different values of $\sigma_\nu^2$.}
         \label{fig:b_example}
\end{center}
\hfill
\end{figure}

\convergenceboundafterexpansion*
\begin{proof}
In the following $n$ will always be intended $\geq \hat{n}$, where $\hat{n}$ is the one given by the hypothesis. Let us also define again $\tilde{\sigma}_n^2 \coloneqq \max_{S_n}\sigma_n^2(\bm x)$. Finally, let us fix $\varepsilon > 0$.

Combining \cref{lemma:delta_h_is_bounded_above,lemma:delta_h_is_bounded_below}, we obtain:
\begin{equation}
\begin{split}
b(\tilde{\sigma}_n^2) \leq \hat{I}_n\left(\{\bm x_{n+1}, y_{n+1}\}; \Psi(\bm z_{n+1})\right) \leq \ln(2)\frac{\sigma_n^2(\bm x_{n+1})}{\sigma_\nu^2} \\
\Longrightarrow b(\tilde{\sigma}_n^2) \leq \ln(2)\frac{\sigma_n^2(\bm x_{n+1})}{\sigma_\nu^2}
\end{split}
\end{equation}
we can now exploit the monotonicity of $b$ (\cref{lemma:b_is_increasing}) and the fact that $\tilde{\sigma}_n^2$ is not increasing if the safe set does not expand to conclude that:
\begin{equation}
b(\tilde{\sigma}_n^2) \leq b(\tilde{\sigma}_m^2) \leq \ln(2)\frac{\sigma_m^2(x_{m+1})}{\sigma_\nu^2} ~~~~~~ \forall n \geq m \geq \hat{n}
\end{equation}
we can then use this to write:
\begin{equation}\label{eq:b_bounded_by_gamma}
\begin{split}
(n - \hat{n})b(\tilde{\sigma}_n^2) = \sum_{i = \hat{n}}^{n}b(\tilde{\sigma}_n^2) &\leq \\
\ln(2)\sum_{i = \hat{n}}^{n}\sigma_\nu^{-2}\sigma_i^2(\bm x_{i+1}) &\leq \\
\frac{\ln(2)}{\sigma_\nu^2\ln(1 + \sigma_\nu^{-2})}\sum_{i = \hat{n}}^{n}\ln\left(1 + \sigma_\nu^{-2}\sigma_i^2(\bm x_{i+1})\right) &\leq \\
C\gamma_{n - \hat{n}}
\end{split}
\end{equation}
where $\gamma_{n - \hat{n}}$ is the maximum information capacity and $C = \frac{\ln(2)}{\sigma_\nu^2\ln(1 + \sigma_\nu^{-2})}$. The second last passage follows from the fact that $x \leq \ln(1 + x) \sigma_\nu^{-2} / \ln(1 + \sigma_\nu^2)$ for $x \in [0, \sigma_\nu^{-2}]$ together with the fact that $\sigma_\nu^{-2}\sigma_i^2(\bm x_{i+1}) \leq \sigma_\nu^{-2}k(\bm x_{i+1}, \bm x_{i+1}) \leq \sigma_\nu^{-2}$. Finally, the last passage uses the representation of the mutual information $I(\{y_n\}; \{f(\bm x_n)\})$ presented by \citet{srinivas_gaussian_2010}.

Using \cref{eq:b_bounded_by_gamma}, we can show that the minimum $N_\varepsilon$ satisfying the claim of the theorem is the one given by \cref{eq:N_epsilon}:
\begin{equation}\label{eq:N_epsilon_proof}
N_\varepsilon = \min\left\{N \in \mathbb{N} : b^{-1}\left(\frac{C\gamma_N}{N}\right) \leq \varepsilon\right\}
\end{equation}
and we are now able to conclude that, as long as the information capacity grows sub-linearly in $N$, the set on the r.h.s. of~\cref{eq:N_epsilon_proof} is not empty $\forall \varepsilon > 0$. This is guaranteed by the fact that $b^{-1}$ is monotonically increasing, since so is its inverse $b$.
To check that this $N_\varepsilon$ indeed satisfies the claim, one just has to apply $b^{-1}$ on both initial and final state of \cref{eq:b_bounded_by_gamma} and then substitute $\hat{n} + N_\varepsilon$ in the place of $n$; the rest follows from the fact that the maximum variance is non increasing on $S_n$ as long as the safe set does not expand.
\end{proof}

\begin{corollary}\label{corollary:bound_on_information_gain}
Under the hypothesis of \cref{lemma:convergence_bound_after_expansion}, $\forall \varepsilon~>~0$ $\exists N_\varepsilon \in [0, \infty)$ s.t.\@ $\hat{I}_n\left(\{\bm x, y\}; \Psi(\bm z)\right) \leq \varepsilon$ $\forall n \geq \hat{n} + N_\varepsilon$. 
\end{corollary}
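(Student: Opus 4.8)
The plan is to chain the uniform upper bound on the information gain from \cref{lemma:delta_h_is_bounded_above} with the variance-decay conclusion of \cref{lemma:convergence_bound_after_expansion}. Recall that \cref{lemma:delta_h_is_bounded_above} gives, for every $n$, every $\bm x \in S_n$ and every $\bm z \in \mathcal{X}$, the bound $\hat{I}_n(\{\bm x, y\}; \Psi(\bm z)) \leq \ln(2)\,\sigma_n^2(\bm x)/\sigma_\nu^2$. Hence it suffices to control $\max_{\bm x \in S_n}\sigma_n^2(\bm x)$, and this is exactly what \cref{lemma:convergence_bound_after_expansion} provides under the stated hypotheses (eventual non-expansion of the safe set and a uniform bound $M$ on the posterior mean over $S_n$).

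Concretely, I would fix $\varepsilon > 0$ and set $\varepsilon' \coloneqq \varepsilon\,\sigma_\nu^2/\ln(2) > 0$. Applying \cref{lemma:convergence_bound_after_expansion} with accuracy $\varepsilon'$ yields an integer $N_{\varepsilon'}$ --- the one given explicitly by \cref{eq:N_epsilon} with $\varepsilon$ replaced by $\varepsilon'$ --- such that $\sigma_n^2(\bm x) \leq \varepsilon'$ for every $\bm x \in S_n$ whenever $n \geq \hat{n} + N_{\varepsilon'}$. Then, for any such $n$, any $\bm x \in S_n$ and any $\bm z \in \mathcal{X}$, combining the two facts gives $\hat{I}_n(\{\bm x, y\}; \Psi(\bm z)) \leq \ln(2)\,\sigma_n^2(\bm x)/\sigma_\nu^2 \leq \ln(2)\,\varepsilon'/\sigma_\nu^2 = \varepsilon$. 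Thus $N_\varepsilon \coloneqq N_{\varepsilon\sigma_\nu^2/\ln(2)}$ works; it is finite because, as already argued below \cref{eq:N_epsilon}, the set appearing in \cref{eq:N_epsilon} is nonempty as soon as $\gamma_N$ grows sublinearly in $N$.

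The only point requiring a little care is the matching of quantifiers across iterations: the index set $S_n$ over which the bound must hold is non-increasing in $n$ for $n \geq \hat{n}$, while the variance bound supplied by \cref{lemma:convergence_bound_after_expansion} is already uniform over $S_n$ at each $n \geq \hat{n} + N_{\varepsilon'}$, so no extra argument is needed. I expect no genuine obstacle here --- this is a direct corollary of \cref{lemma:delta_h_is_bounded_above,lemma:convergence_bound_after_expansion} --- beyond being explicit that $N_\varepsilon$ inherits finiteness from the sublinear growth of $\gamma_N$, exactly as in the proof of \cref{lemma:convergence_bound_after_expansion}.
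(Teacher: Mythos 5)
Your proposal is correct and follows exactly the paper's route: the paper's own (one-line) proof likewise combines \cref{lemma:convergence_bound_after_expansion} with the bound $\hat{I}_n(\{\bm x, y\}; \Psi(\bm z)) \leq \ln(2)\,\sigma_n^2(\bm x)/\sigma_\nu^2$ from \cref{lemma:delta_h_is_bounded_above}. Your version merely makes the rescaling $\varepsilon' = \varepsilon\,\sigma_\nu^2/\ln(2)$ and the quantifier bookkeeping explicit, which is a faithful elaboration rather than a different argument.
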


\begin{proof}
This follows directly from \cref{lemma:convergence_bound_after_expansion} and from the fact that $\hat{I}_n\left(\{\bm x, y\}; \Psi(\bm z)\right)$ is upper bounded by a monotonic function of the posterior variance (\cref{lemma:delta_h_is_bounded_above}).
\end{proof}

Moving on to the safety guarantees, in \cref{sec:theory} we claimed that any parameter selected according to \cref{eq:x_n_+_1} is safe with high probability. The following result makes this statement precise.

\begin{lemma}
Let $f: \mathcal{X} \rightarrow \mathbb{R}$ have bounded norm in the Reproducing Kernel Hilbert Space $\mathcal{H}_k$ associated to some kernel $k: \mathcal{X} \times \mathcal{X} \rightarrow \mathbb{R}$ with $k(\bm x, \bm x') \leq 1$, and let $S_n$ be the corresponding safe set as defined in \cref{eq:safe_set_definition}, with $\bm x_0$ such that $f(\bm x_0) > 0$. Then, if $\bm x_{n+1}$ is selected according to \cref{eq:x_n_+_1}, it follows that ${P\{f(\bm x_n) \geq 0 \text{ for all } n\} \geq 1 - \delta}$.
\end{lemma}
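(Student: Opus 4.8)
The plan is to reduce the claim to the uniform high-probability confidence bound on $f$ that already underpins the construction of $S_n$, and then run a short induction on the iteration count.

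\textbf{Step 1 (invoke the confidence bound).} Since $f$ has bounded norm in $\mathcal{H}_k$ and $k(\bm x, \bm x') \le 1$, the results of \citet{kernelized_bandits} provide a sequence $\{\beta_n\}$ for which the event
\begin{equation*}
E \coloneqq \left\{\, \mu_n(\bm x) - \beta_n \sigma_n(\bm x) \,\le\, f(\bm x) \,\le\, \mu_n(\bm x) + \beta_n \sigma_n(\bm x) \ \ \text{for all } \bm x \in \mathcal{X},\ n \ge 1 \,\right\}
\end{equation*}
has probability at least $1 - \delta$. This bound is of the self-normalized martingale type, hence valid for \emph{any} sequence of evaluation points that is predictable with respect to the filtration generated by the noisy observations; the iterates $\bm x_{n+1}$ of \cref{eq:x_n_+_1} qualify, as $\bm x_{n+1}$ is a function of $\mathcal{D}_n$ (together with, in the \linebo variant, auxiliary randomness independent of the noise). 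It therefore suffices to show that on $E$ every selected parameter is safe, because then $P\{f(\bm x_n) \ge 0 \text{ for all } n\} \ge P(E) \ge 1 - \delta$.

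\textbf{Step 2 (induction on $E$).} Fix a realization in $E$ and show by induction that each $\bm x_n$ is safe. The seed satisfies $f(\bm x_0) > 0 \ge 0$; moreover $\bm x_1 \in S_0 = \{\bm x_0\}$ (under the zero-mean prior the lower-confidence-bound set is empty), so $\bm x_1 = \bm x_0$ is safe too. For the step, assume $\bm x_0, \dots, \bm x_n$ with $n \ge 1$ are safe, so that $\mu_n,\sigma_n$ and hence $S_n$ are well defined. By \cref{eq:x_n_+_1}, $\bm x_{n+1} \in S_n = \{\bm x \in \mathcal{X} : \mu_n(\bm x) - \beta_n \sigma_n(\bm x) \ge 0\} \cup \{\bm x_0\}$. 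If $\bm x_{n+1} = \bm x_0$ it is safe by the base case; otherwise $\mu_n(\bm x_{n+1}) - \beta_n \sigma_n(\bm x_{n+1}) \ge 0$, and since we are on $E$,
\begin{equation*}
f(\bm x_{n+1}) \ \ge\ \mu_n(\bm x_{n+1}) - \beta_n \sigma_n(\bm x_{n+1}) \ \ge\ 0 .
\end{equation*}
This closes the induction, so on $E$ all queried parameters are safe, giving the claim.

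\textbf{Main obstacle.} There is no genuine analytic difficulty here: all the content sits in the cited confidence bound, and the rest is bookkeeping. The one point to state carefully is that $E$ must be conditioned on once and for all --- it is defined jointly over all $n$ --- so that the induction in Step 2 is purely deterministic and there is no circular dependence between ``the bound holds at step $n$'' and ``the point selected at step $n$ lies in $S_n$''. A secondary, minor point is that one should either assume the $\argmax$ in \cref{eq:x_n_+_1} is attained or take $\bm x_{n+1}$ to be any (near-)maximizer in $S_n$; this only affects which safe parameter is evaluated, not its safety.
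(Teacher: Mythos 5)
Your proof is correct and follows essentially the same route as the paper: condition once on the joint confidence event from \citet{kernelized_bandits}, then observe that every point selected by \cref{eq:x_n_+_1} lies in $S_n$, whose non-seed elements have nonnegative lower confidence bound, while $\bm x_0$ is safe by assumption. The induction you add is just a more explicit bookkeeping of the same argument (and your parenthetical claim that $S_0=\{\bm x_0\}$ is not needed for it).
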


\begin{proof}
By construction of the sequence $\{\beta_n\}$ we know that ${P\{f(\bm x) \geq \mu_n(\bm x) - \beta_n\sigma_n(\bm x) \text{ for all } n \text{, for all } \bm x\} \geq 1 - \delta}$. The claim then follows by recalling that the acquisition \cref{eq:x_n_+_1} only selects parameters within $S_n$ and that, by construction of of the safe set, $\mu_n(\bm x) - \beta_n\sigma_n(\bm x) \geq 0$ for all $\bm x \in S_n \setminus \{\bm x_0\}$, in addition to the fact that $f(\bm x_0) > 0$ by assumption.
\end{proof} 

Finally, in \cref{sec:theory}, we claimed that, under the assumptions of the theory, the posterior mean $\mu_n(\bm x)$ is bounded by $2\beta_n$ with high probability. The following lemma makes this statement precise. This result formalizes the intuition that for a regular enough GP, in order to get an exploding posterior mean, one needs to be consistently unlucky with the measurement noise.

\begin{lemma}
Let $f$ be a real valued function on $\mathcal{X}$ and let $\mu_n$ and $\sigma_n$ be the posterior mean and standard deviation of a GP($\mu_0, k$) such that it exists a non-decreasing sequence of positive numbers $\{\beta_n\}$ for which $P\left\{f(\bm x) \in [\mu_n(\bm x) \pm \beta_n\sigma_n(\bm x)]~\forall \bm x,~\forall n\right\} \geq 1 - \delta$. Moreover, assume that $\mu_0(\bm x) = 0$ for all $\bm x$ and that $k(\bm x, \bm x^\prime) \leq 1$ for all $\bm x, \bm x^\prime \in \mathcal{X}$. Then it follows that $|\mu_n(\bm x)| \leq 2\beta_n$ with probability of at least $1 - \delta$ jointly for all $\bm x$ and for all $n$.
\end{lemma}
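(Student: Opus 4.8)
The plan is to bound $|\mu_n(\bm x)|$ by relating it to the confidence intervals via the initial (prior) GP. At iteration $n = 0$ the prior mean is zero, so trivially $|\mu_0(\bm x)| = 0 \le 2\beta_0$. For $n \ge 1$, the key observation is that the confidence-interval property can be applied at iteration $0$ to control the true function: with probability at least $1 - \delta$, jointly for all $\bm x$ and $n$, we have $f(\bm x) \in [\mu_n(\bm x) \pm \beta_n \sigma_n(\bm x)]$, and in particular $f(\bm x) \in [\mu_0(\bm x) \pm \beta_0 \sigma_0(\bm x)] = [\pm \beta_0 \sqrt{k(\bm x, \bm x)}] \subseteq [-\beta_0, \beta_0]$ since $k(\bm x, \bm x) \le 1$. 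So on this high-probability event, $|f(\bm x)| \le \beta_0 \le \beta_n$ for all $\bm x$ and all $n$ (using that $\{\beta_n\}$ is non-decreasing).

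Next I would combine this with the confidence interval at a general iteration $n$: on the same event, $\mu_n(\bm x) - \beta_n \sigma_n(\bm x) \le f(\bm x) \le \mu_n(\bm x) + \beta_n \sigma_n(\bm x)$, which rearranges to $|\mu_n(\bm x) - f(\bm x)| \le \beta_n \sigma_n(\bm x)$. Then by the triangle inequality $|\mu_n(\bm x)| \le |f(\bm x)| + \beta_n \sigma_n(\bm x) \le \beta_n + \beta_n \sigma_n(\bm x)$. It remains to bound the posterior standard deviation: since conditioning on data can only decrease posterior variance, $\sigma_n^2(\bm x) \le \sigma_0^2(\bm x) = k(\bm x, \bm x) \le 1$, hence $\sigma_n(\bm x) \le 1$. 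Putting these together gives $|\mu_n(\bm x)| \le \beta_n + \beta_n = 2\beta_n$, as claimed, and all inequalities hold simultaneously on the single event of probability at least $1 - \delta$.

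The main thing to be careful about is the bookkeeping of the high-probability event: the bound $P\{f(\bm x) \in [\mu_n(\bm x) \pm \beta_n\sigma_n(\bm x)]~\forall \bm x,~\forall n\} \ge 1 - \delta$ is a \emph{joint} statement, so the instantiation at $n = 0$ and the instantiation at a general $n$ refer to the \emph{same} event; no union bound and no extra loss in $\delta$ is needed. The only other ingredients are the monotonicity $\sigma_n(\bm x) \le \sigma_0(\bm x)$ (a standard property of GP posteriors — information never increases variance) and the assumptions $\mu_0 \equiv 0$ and $k(\bm x,\bm x') \le 1$, both of which are given. I do not anticipate a genuine obstacle here; the statement is essentially a two-line consequence of the confidence intervals plus boundedness of the kernel diagonal, and the "hard part" is merely making explicit that everything lives on one good event.
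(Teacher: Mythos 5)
Your proof is correct and follows essentially the same route as the paper: instantiate the joint confidence bound at $n=0$ (with $\mu_0 \equiv 0$ and $k \le 1$) to get $|f(\bm x)| \le \beta_0$, combine with $|\mu_n(\bm x) - f(\bm x)| \le \beta_n\sigma_n(\bm x)$, and use the monotonicity of $\{\beta_n\}$ together with $\sigma_n(\bm x) \le \sigma_0(\bm x) \le 1$ to conclude $|\mu_n(\bm x)| \le 2\beta_n$ on the single high-probability event. The only difference is cosmetic ordering of where the kernel bound is applied, so no further comment is needed.
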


\begin{proof}
From the hypothesis, it follows that the following two conditions hold for all $\bm x$ and for all $n$ with probability of at least $1 - \delta$:
\begin{equation}
|f(\bm x)| \in \left[0, \beta_0\sigma_0(\bm x)\right]
\end{equation}
\begin{equation}
\mu_n(\bm x) \in \left[-|f(\bm x)| - \beta_n\sigma_n(\bm x), |f(\bm x)| + \beta_n\sigma(\bm x) \right]
\end{equation}
From these two conditions, it follows that $|\mu_n(\bm x)| \leq \beta_0\sigma_0(\bm x) + \beta_n\sigma_n(\bm x)$ with probability of at least $1 - \delta$. Now, we recall that the sequence $\{\beta_n\}$ is non decreasing by assumption and that the sequence $\{\sigma_n(\bm x)\}$ is non increasing by the properties of a GP, which allows us to conclude that $|\mu_n(\bm x)| \leq 2\beta_n\sigma_0(\bm x)$, which concludes the proof once we recall the assumption that $k(\bm x, \bm x^\prime) \leq 1$ for all $\bm x, \bm x^\prime \in \mathcal{X}$. The result can easily be extended to the case of non zero prior mean, by just adding the prior mean as offset in the found upper bound for the posterior mean.
\end{proof}

\section{Entropy of \texorpdfstring{$\Psi(\bm{x})$}{\b} approximation}\label{appendix_entropy_approx}

In order to analytically compute the mutual information $\hat{I}_n\left(\{\bm x, y\}; \Psi(\bm z)\right) = H_n\left[\Psi(\bm z)\right] - \mathbb{E}_{y}\left[H_{n+1}\left[\Psi(\bm z) \middle| \{\bm x, y\}\right]\right]$, we have approximated the entropy of the variable $\Psi(\bm x)$ at iteration $n$ with $\hat{H}_n\left[\Psi(\bm x)\right]$, given by \cref{eq:approximated_psi_entropy}, which we have then used to derive the results presented in the paper. The approximation allowed us to derive a closed expression for the average of the entropy at parameter $\bm z$ after an evaluation at $\bm x$, $\mathbb{E}_{y}\left[\hat{H}_{n+1}\left[\Psi(\bm z) \middle| \{\bm x, y\}\right]\right]$. 
This approximation was obtained by noticing that the exact entropy \cref{eq:exact_psi_entropy} has a zero mean Gaussian shape, when plotted as function of $\mu_n(\bm x) / \sigma_n(\bm x)$, and then by expanding both the exact expression \cref{eq:exact_psi_entropy} and a  zero mean unnormalized Gaussian in $\mu_n(\bm x) / \sigma_n(\bm x)$ in their Taylor series around zero. At the second order we obtain, respectively,
\begin{equation}\label{eq:entropy_first_order}
H_n\left[\Psi(\bm x)\right] = \ln(2) - \frac{1}{\pi}\left(\frac{\mu_n(\bm x)}{\sigma_n(\bm x)}\right)^2 + o\left(\left(\frac{\mu_n(\bm{x})}{\sigma_n(\bm{x})}\right)^2\right)
\end{equation}
and 
\begin{equation}\label{eq:gaussian_first_order}
c_0 \exp\left\{-\frac{1}{2\sigma^2}\left(\frac{\mu_n(\bm{x})}{\sigma_n(\bm{x})}\right)^2\right\} = c_0 - c_0\frac{1}{2\sigma^2}\left(\frac{\mu_n(\bm{x})}{\sigma_n(\bm{x})}\right)^2 + o\left(\left(\frac{\mu_n(\bm{x})}{\sigma_n(\bm{x})}\right)^2\right).
\end{equation}
By equating the terms in \cref{eq:entropy_first_order} with the ones in \cref{eq:gaussian_first_order}, we find ${c_0 = \ln(2)}$ and ${\sigma^2 = \ln(2)\pi/2}$, which leads to the approximation for $H_n\left[\Psi(\bm x)\right]$ \cref{eq:approximated_psi_entropy} used in the paper: ${H_n\left[\Psi(\bm x)\right] \approx \ln(2) \exp\left\{-\frac{1}{\pi\ln(2)}\left(\frac{\mu_n(\bm x)}{\sigma_n(\bm x)}\right)^2\right\}}$.
In \cref{fig:psi_entropy_comparison} we plot \cref{eq:exact_psi_entropy} and \cref{eq:approximated_psi_entropy} against each other as a function of the mean-standard deviation ratio, while \cref{fig:psi_entropies_diff} shows the difference between the two. From these two plots, one can see almost perfect agreement between the two functions, with a non negligible difference limited to two small neighborhoods of the $\mu/\sigma$ space. 

\begin{figure}[h]
\hfill
     \centering
     \begin{subfigure}[b]{0.49\textwidth}
         \centering
         \includegraphics[height=0.15\textheight]{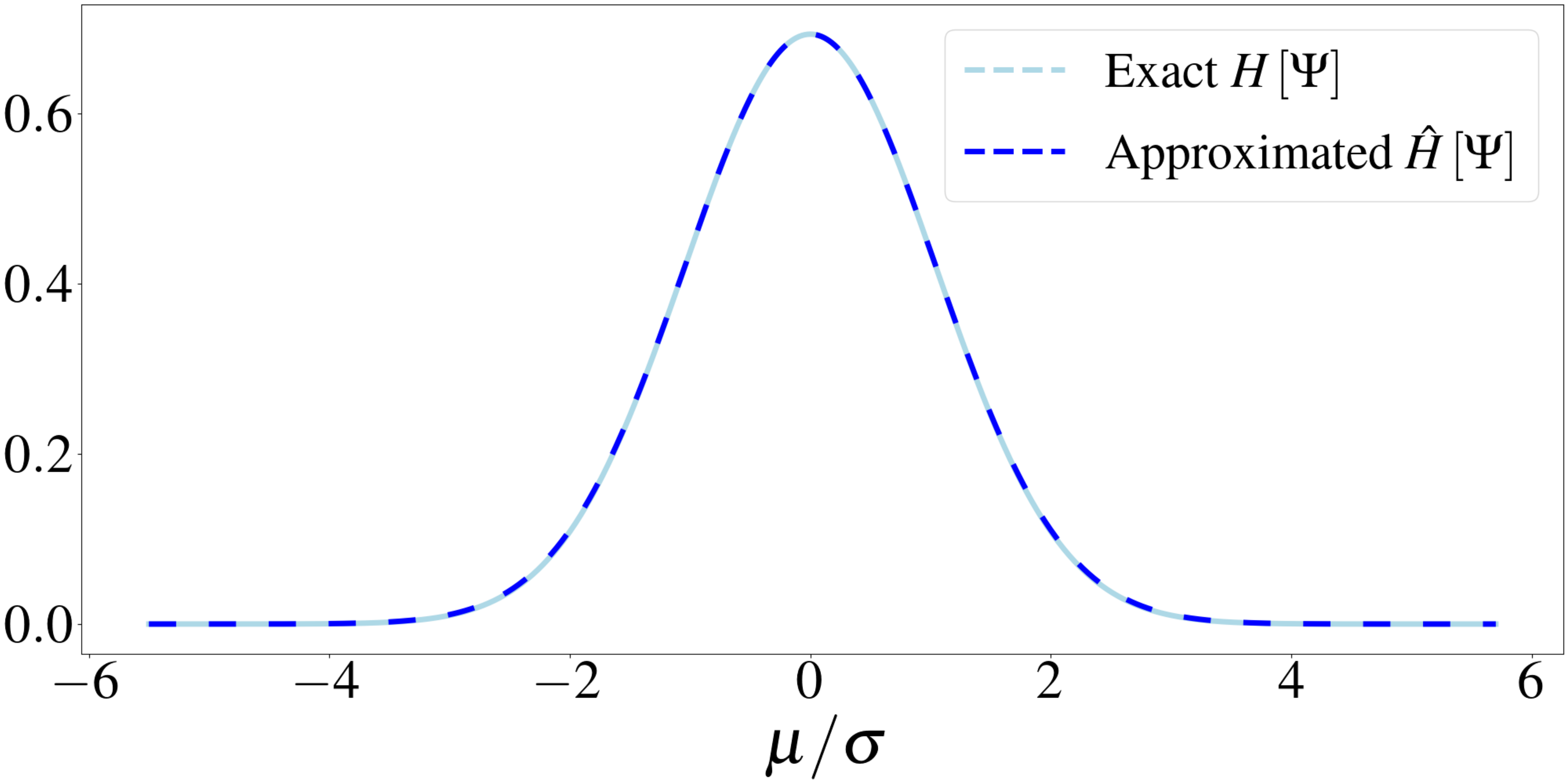}
         \caption{Exact and approximated $H[\Psi(\bm x)]$.}
         \label{fig:psi_entropy_comparison}
     \end{subfigure}
     \hfill
     \begin{subfigure}[b]{0.49\textwidth}
         \centering
         \includegraphics[height=0.15\textheight]{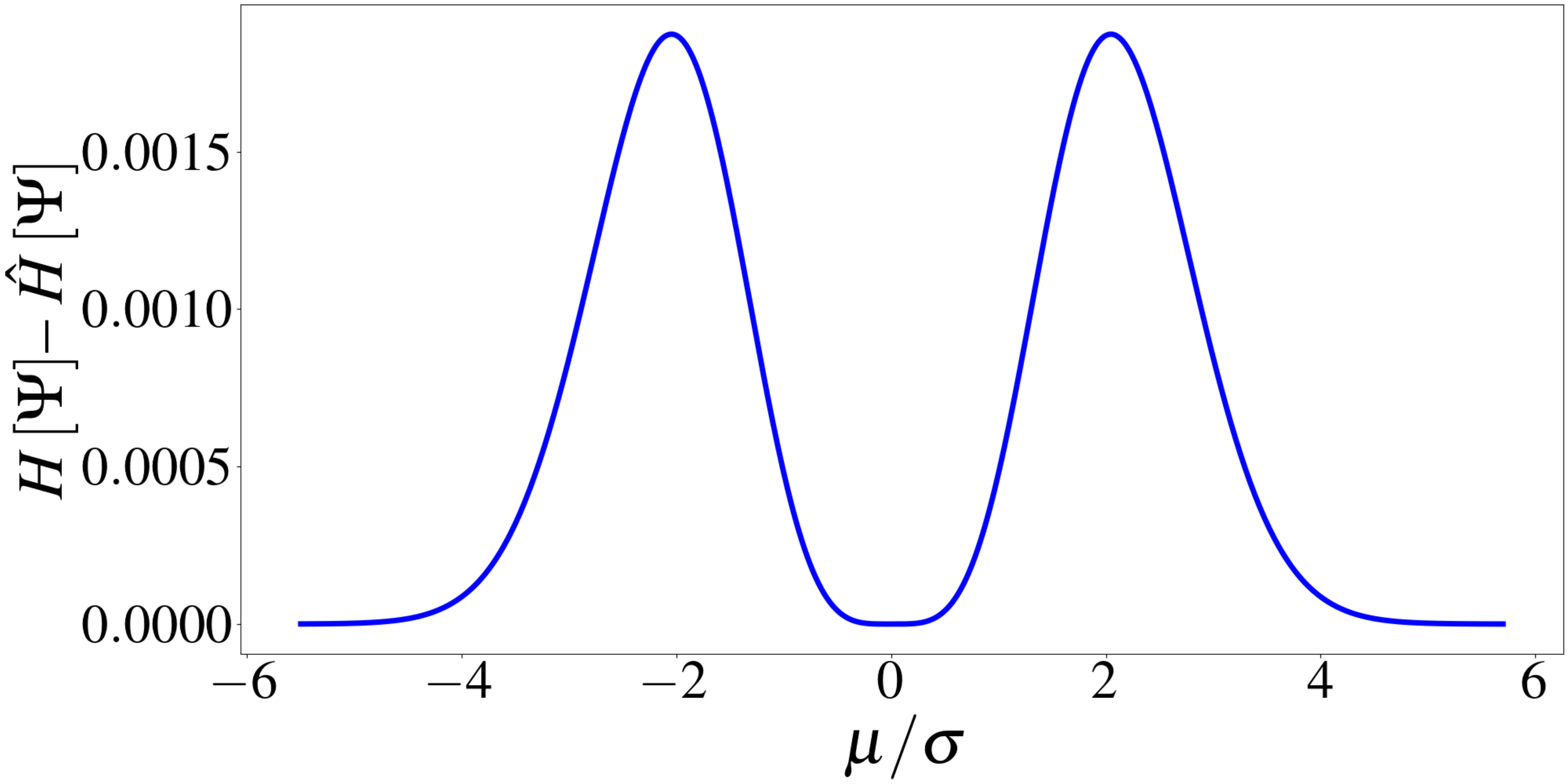}
         \caption{Approximation error.}
         \label{fig:psi_entropies_diff}
     \end{subfigure}
        \caption{Comparison between exact entropy of $\Psi(\bm x)$ \cref{eq:exact_psi_entropy} and approximated form \cref{eq:approximated_psi_entropy}: (a) the two entropies plotted against each other; (b) the approximation error expressed as difference of the two.}
        \label{fig:psi_entropies}
        \hfill
\end{figure}

\section{Experiments Details}\label{appendix_experiments}

In this Appendix, we collect details about the experiment presented in \cref{experiments}. Code for the used acquisition functions can be found at \url{https://github.com/boschresearch/information-theoretic-safe-exploration}.

\ourmethod selects the next parameter to evaluate according to \cref{eq:x_n_+_1}, which is a non convex optimization problem constrained in one of the variables. We find the solution to this problem via constrained gradient ascent with multiple restarts. 

\paragraph{GP samples}
For the results shown in \cref{fig:comparison} we run both \ourmethod and the expansion stage of \stageopt on 100 samples from a GP defined on the square $[-2.5, 2.5] \times [-2.5, 2.5]$, with RBF kernel with the following hyperparameters: $\mu_0 \equiv 0$; kernel lengthscale = $0.1$; kernel outputscale = 150; $\sigma_\nu^2 = 0.05$, while the safe seed $\bm x_0$ was chosen as the origin: $\bm x_0 = (0, 0)$. For the \stageopt runs, we used the code by \citet{berkenkamp_bayesian_2020}, who open-sourced it on GitHub under the MIT license (\url{https://github.com/befelix/SafeOpt}). As \stageopt requires a discretized domain, we used the same uniform discretization of 700 points per dimension for all GP samples. Finally, the percentage of the domain classified as safe is estimated via Monte Carlo sampling.
Concerning the safety violations summarized in \cref{tab:safety_violations}, the fact that they are comparable is expected, since in our experiments they all use the posterior GP confidence intervals to define the safe set. 

\begin{table}[hb]
  \begin{center}
    \caption{Average percentage of safety violations per run over the 100 runs used to obtain \cref{fig:comparison}.}
    \label{tab:safety_violations}
    \begin{tabular}{c c c c c c}
    \specialrule{.1em}{.05em}{.05em} 
    \rule{0mm}{4mm}
       & \ourmethod & SO L=0 & SO L=1 & SO L=5 & SO L=10\\ 
      \hline
      \% of safety violations & 0.04 $\pm$ 0.20 & 0.01 $\pm$ 0.12 & 0.03 $\pm$ 0.19 & 0.05 $\pm$ 0.22 & 0.05 $\pm$ 0.25\\
      \specialrule{.1em}{.05em}{.05em} 
    \end{tabular}
  \end{center}
\end{table}

To evaluate whether or not \ourmethod converges to the same safe set as \stageopt-like exploration does, we performed the same experiment as for \cref{fig:comparison}, but with the difference that for this experiment we used a bigger kernel lengthscale of 1.6. For each GP sample, the true reachable safe set is obtained by sampling according to the rule $\bm x_{n+1} \in \argmax_{S_n}\sigma_n^2(\bm x)$, starting from $\bm x_0$, until the uncertainty over the safe set $S_n$ was reduced under the noise variance. We show the results in \cref{fig:convergence}, which shows that, indeed, both \ourmethod and \stageopt-like exploration lead to the discovery of the same largest safe set. Similarly as for \cref{fig:comparison}, the percentages that we show are then obtained via Monte Carlo sampling. 
For the plot in \cref{fig:one_d_exploration}, the constraint function was $f(x) = e^{-x} + 0.05$ and we used a RBF kernel with hyperparameters: $\mu_0 \equiv 0$; kernel lengthscale = $1.2$; kernel outputscale = 100; $\sigma_\nu^2 = 0.05$, while the safe seed was $\bm x_0 = 0$, and the domain for the \stageopt exploration was composed of 500 points. The chosen function is a slightly offset exponential. On one side of the domain this constraint function becomes increasingly close to the safety threshold, making it hard to explore with high Lipschitz constant. On the other hand, if the Lipschitz constant is too small, the algorithm will prefer to reduce uncertainty away from the border. On the contrary, \ourmethod will always tend to select parameters close to the boundary. This intuition is what justifies the results shown in \cref{fig:one_d_exploration}. In \cref{fig:comparison_with_std_deviation} we report the same plots as in \cref{fig:safe_set_expansion_comparison}, but with error bars representing the standard deviation instead of the standard error.

\begin{figure}
	\centering\includegraphics[height=0.20\textheight]{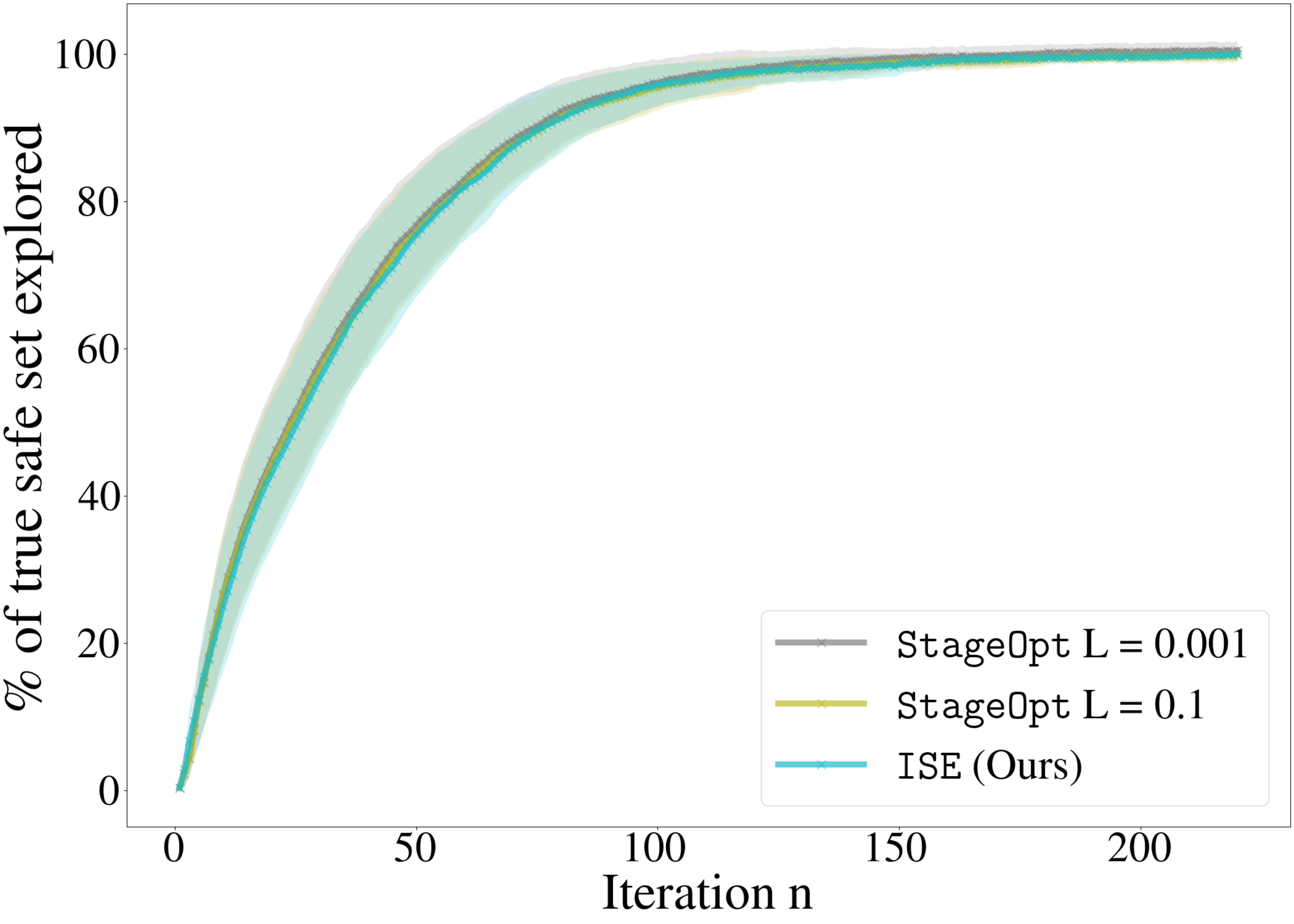}
	\caption{The percentage of the maximally reachable safe set classified as safe is plotted as function of $n$: we can see that \ourmethod eventually leads to discover the same reachable safe set discovered by \stageopt-like exploration.}
   \label{fig:convergence}
\end{figure}

\begin{figure*}
\hfill
     \centering
     \begin{subfigure}{0.495\textwidth}
         \centering
         \includegraphics[width=\textwidth,height=0.15\textheight]{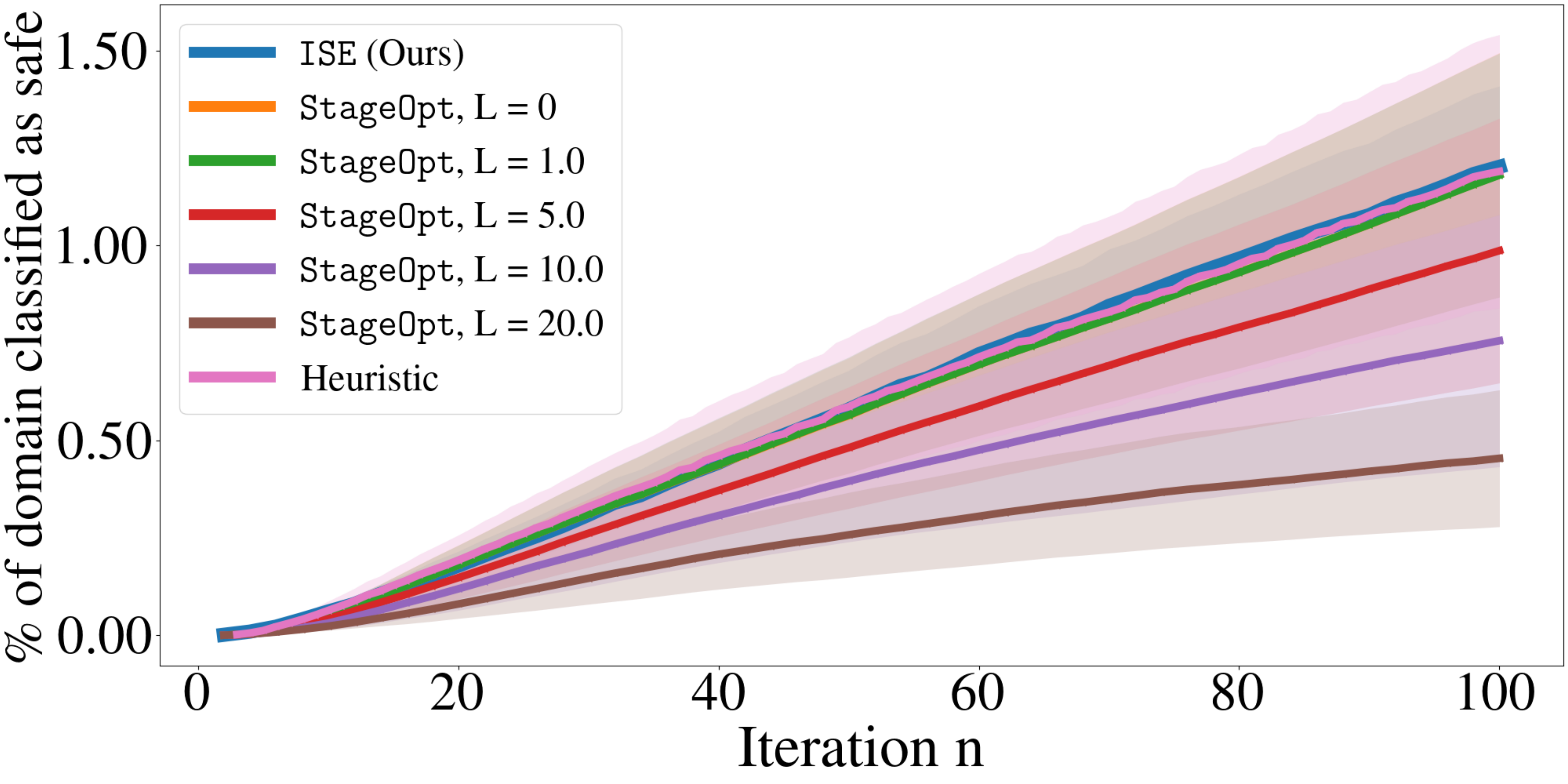}
         \caption{Comparison with \stageopt for GP samples.}
         \label{fig:comparison_std_Dev}
     \end{subfigure}
     \begin{subfigure}{0.495\textwidth}
         \centering
         \includegraphics[width=\textwidth,height=0.15\textheight]{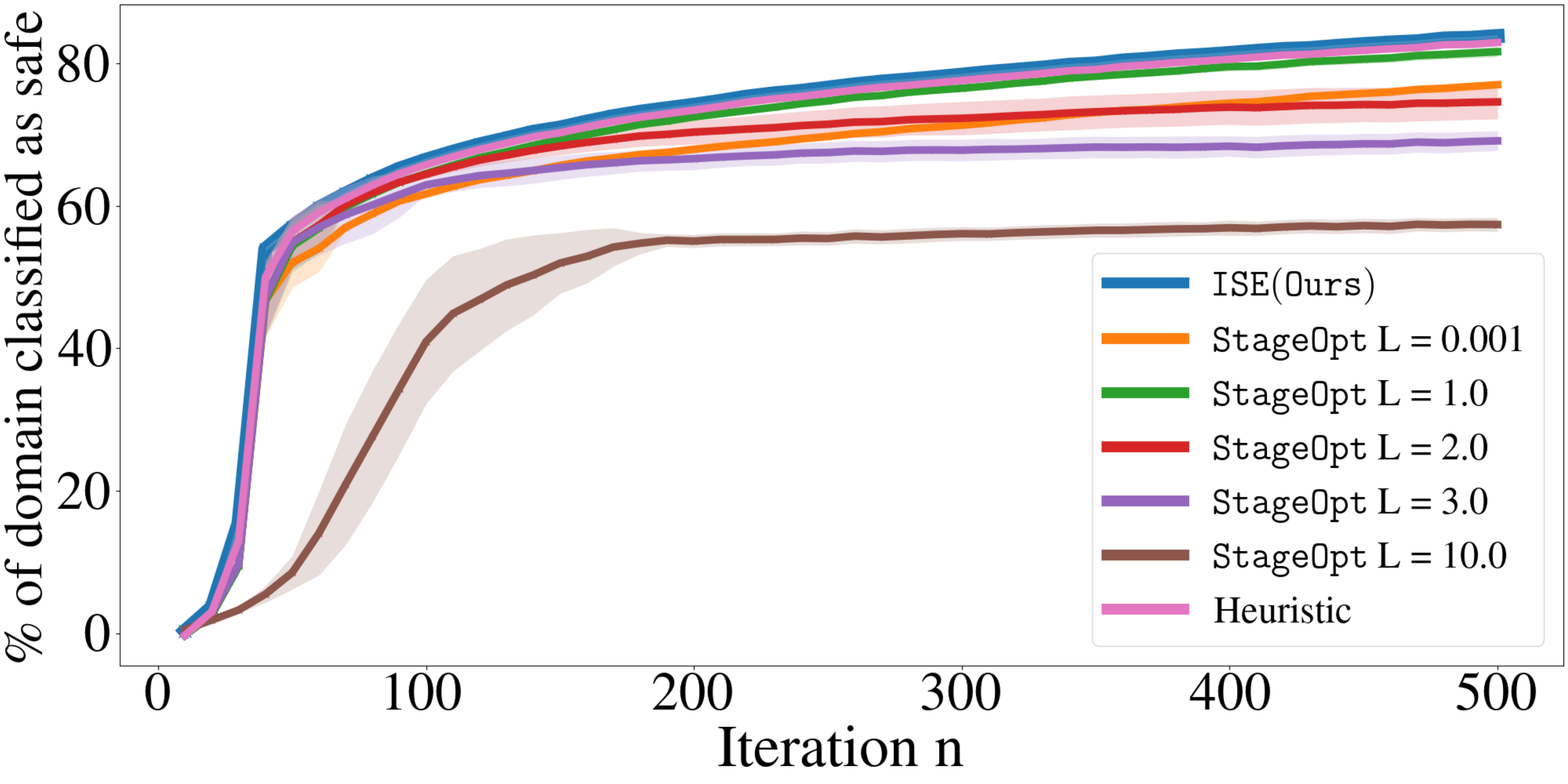}
         \caption{Comparison in 1D exponential example.}
         \label{fig:one_d_exploration_std_dev}
     \end{subfigure}
        \caption{In these plots we show the same results as in \cref{fig:safe_set_expansion_comparison}, but here we plot the standard deviation instead of the standard error.}
        \label{fig:comparison_with_std_deviation}
\end{figure*}

\paragraph{OpenAI Gym Control}
For the inverted pendulum and cart pole experiments, we used the environments provided by the OpenAI gym \citep{brockman2016openai} under the MIT license. The cart pole environment by default accepts only discrete actions $u_t \in \{0, 1\}$, causing a push of fixed strength either to the left or right. Instead of mapping the output of our linear controller $u_t = \alpha_1 \theta_t + \alpha_2 \dot{\theta_t} + \alpha_3 \dot{s_t}$ to $\{0, 1\}$, we modified the environment to accept continuous actions, corresponding to pushes of varying intensity in the direction specified by the action's sign. 
For the inverted pendulum experiment, the threshold angular velocity $\dot{\theta}_M$ was set to $0.5$ rad$/s$, with an episode length of 400 steps, and \cref{fig:pendulum_experiment} shows one run of \ourmethod using a GP with RBF kernel with the following hyperparameters: $\mu_0 \equiv 0$; kernel lengthscale = $1.3$; kernel outputscale = 6.6; $\sigma_\nu^2 = 0.04$. For the cart pole one, the episode length was set to 200 steps and the threshold angle $\theta_M$ was of $0.28$ radians. The GP we used in this case had a RBF kernels with hyperparameters: $\mu_0 \equiv 0$; kernel lengthscale = $0.8$; kernel outputscale = 5; $\sigma_\nu^2 = 0.05$. The safe seed $\bm \alpha_0$ was set to $\bm \alpha_0 = (-0.0073, -1.39, 2.01)$, while the domain was set to $[-2, 0] \times [-2, 1.5] \times [-2, 7]$. The average percentage of true safe set classified as safe plotted in \cref{fig:cartpole_safe_set} is over 100 runs and is estimated via Monte Carlo sampling. For what concerns the comparison about the number of unsafe evaluations in the cart pole task, the average percentage of safety violations was of $5.02 \pm 0.95$ for the \stageopt runs, while for \ourmethod it was of $5.5 \pm 0.98$.

\paragraph{High dimensional domains}
For the five dimensional experiment we used the same custom \linebo wrapper for both the \ourmethod and \stageopt acquisitions, which at each iteration randomly selects multiple one-dimensional subspaces and then finds the optimum of the respective acquisition function restricted to these subspaces. In these experiments we used a GP with RBF kernel with hyperparameters: $\mu_0 \equiv 0$; kernel lengthscale = $1.6$; kernel outputscale = 1; while the safe seed $\bm x_0$ was set to $\bm x_0 = (-0.2)^d$ and the observation noise to $\sigma_\nu^2 = 0.5$.

\paragraph{Heteroskedastic noise domains}
In these experiments we used the same \linebo wrapper as in the five dimensional experiment. The GP had a RBF kernel with hyperparameters: $\mu_0 \equiv 0$; kernel lengthscale = $1.6$; kernel outputscale = 1; while the safe seed $\bm x_0$ was set to the origin. For what concerns the observation noise, as explained in \cref{par:high_heteroskedastic}, we used heteroskedastic noise, with two different values of the noise variance in the two symmetric halves of the domain. In particular, given a parameter $\bm x = (x_1, x_2, \dots, x_n)$, we set the noise variance to $\sigma_\nu^2 = 0.05$ if $x_0 \geq 0$, otherwise we set it to $\sigma_\nu^2 = 0.5$. As explained in \cref{experiments}, the constraint function is $f(\bm x) = \frac{1}{2}e^{-\bm x^2} + e^{-(\bm x \pm \bm x_1)^2} + 3e^{-(\bm x \pm \bm x_2)^2} + 0.2$, with $\bm x_1$ and $\bm x_2$ given by: $\bm x_1 = (2.7, 0, \dots, 0)$ and $\bm x_2 = (6, 0, \dots, 0)$.

\paragraph{Computational resources}
The experiments were run on a HPC cluster, with each experiment using four Intel Xeon Gold CPUs. All experiments (including early evaluations) amounted to a total of 77020 hours. The Bosch Group is carbon neutral. Administration, manufacturing and research activities do no longer leave a carbon footprint. This also includes GPU clusters on which the experiments have been performed.

\end{document}